\def\eqref#1{equation~\ref{#1}}
\def\1{\bm{1}}
\DeclareMathAlphabet{\mathsfit}{\encodingdefault}{\sfdefault}{m}{sl}
\SetMathAlphabet{\mathsfit}{bold}{\encodingdefault}{\sfdefault}{bx}{n}
\newcommand{\E}{\mathbb{E}}
\newcommand{\R}{\mathbb{R}}
\newcommand{\Var}{\mathrm{Var}}
\newtheorem{theorem}{Theorem}
\newtheorem{lemma}{Lemma}
\newtheorem{proposition}{Proposition}
\newtheorem{definition}{Definition}
\newcommand{\I}{\mathbb{I}}
\newcommand{\N}{\mathbb{N}}
\newcommand{\bx}{\boldsymbol{x}}
\newcommand{\cN}{\mathcal{N}}
\newcommand{\cR}{\mathcal{R}}
\newcommand{\cP}{{\mathcal{P}}}
\algrenewcommand\algorithmicrequire{\textbf{Input:}}
\algrenewcommand\algorithmicensure{\textbf{Output:}}
\newcommand{\compilehidecomments}{false}
    \newcommand{\rong}[1]{}
    \newcommand{\haoyu}[1]{}
    \newcommand{\runzhe}[1]{}
    \newcommand{\rong}[1]{{\color{blue!60!black}  [\text{Rong:} #1]}}
    \newcommand{\haoyu}[1]{{\color{brown!60!black} [\text{Haoyu:} #1]}}
    \newcommand{\runzhe}[1]{{\color{red!60!black} [\text{Runzhe:} #1]}}
\newcommand{\compilefullversion}{true}
    \newcommand{\OnlyInFull}[1]{}
    \newcommand{\OnlyInShort}[1]{#1}
    \newcommand{\OnlyInFull}[1]{#1}%
    \newcommand{\OnlyInShort}[1]{}%
\title{Mildly Overparametrized Neural Nets can Memorize Training Data Efficiently}
\date{}
\author{
Rong Ge\\
Duke University\\
\texttt{rongge@cs.duke.edu}
\and 
Runzhe Wang\thanks{This work is completed when Runzhe Wang and Haoyu Zhao were visiting students at Duke University.}\\
IIIS, Tsinghua University\\
\texttt{wrz16@mails.tsinghua.edu.cn}
\and 
Haoyu Zhao\footnotemark[\value{footnote}]\\
IIIS, Tsinghua University\\
\texttt{zhaohy16@mails.tsinghua.edu.cn}\\
}
\begin{document}
\maketitle

\begin{abstract}
    It has been observed \citep{zhang2016understanding} that deep neural networks can memorize: they achieve 100\% accuracy on training data. Recent theoretical results explained such behavior in highly overparametrized regimes, where the number of neurons in each layer is larger than the number of training samples. In this paper, we show that neural networks can be trained to memorize training data perfectly in a mildly overparametrized regime, where the number of parameters is just a constant factor more than the number of training samples, and the number of neurons is much smaller.
\end{abstract}

\section{Introduction}

In deep learning, highly non-convex objectives are optimized by simple algorithms such as stochastic gradient descent. However, it was observed that neural networks are able to fit the training data perfectly, even when the data/labels are randomly corrupted\citep{zhang2016understanding}. Recently, a series of work (\citet{du2019gradient, allen2019convergence, chizat2018global, jacot2018neural}, see more references in Section~\ref{sec:related}) developed a theory of neural tangent kernels (NTK) that explains the success of training neural networks through overparametrization. Several results showed that if the number of neurons at each layer is much larger than the number of training samples, networks of different architectures (multilayer/recurrent) can all fit the training data perfectly.

However, if one considers the number of parameters required for the current theoretical analysis, these networks are highly overparametrized. Consider fully connected networks for example. If a two-layer network has a hidden layer with $r$ neurons, the number of parameters is at least $rd$ where $d$ is the dimension of the input. For deeper networks, if it has two consecutive hidden layers of size $r$, then the number of parameters is at least $r^2$. All of the existing works require the number of neurons $r$ per-layer to be at least the number of training samples $n$ (in fact, most of them require $r$ to be a polynomial of $n$). In these cases, the number of parameters can be at least $nd$ or even $n^2$ for deeper networks, which is much larger than the number of training samples $n$. Therefore, a natural question is whether neural networks can fit the training data in the mildly overparametrized regime - where the number of (trainable) parameters is only a constant factor larger than the number of training data. To achieve this, one would want to use a small number of neurons in each layer - $n/d$ for a two-layer network and $\sqrt{n}$ for a three-layer network. \citet{yun2018small} showed such networks have enough capacity to memorize any training data. In this paper we show with polynomial activation functions, simple optimization algorithms are guaranteed to find a solution that memorizes training data.


\subsection{Our Results}

In this paper, we give network architectures (with polynomial activations) such that every hidden layer has size much smaller than the number of training samples $n$, the total number of parameters is linear in $n$, and simple optimization algorithms on such neural networks can fit any training data.

We first give a warm-up result that works when the number of training samples is roughly $d^2$ (where $d$ is the input dimension).

\begin{theorem}[Informal]\label{thm:main1:informal} Suppose there are $n \le {d+1 \choose 2}$ training samples in general position, there exists a two-layer neural network with quadratic activations, such that the number of neurons in the hidden layer is $2d+2$, the total number of parameters is $O(d^2)$, and perturbed gradient descent can fit the network to any output.
\end{theorem}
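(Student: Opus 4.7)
The plan is to realize the two-layer network as an indefinite symmetric matrix factorization and then analyze its loss landscape. With quadratic activations the network output is $f(x;W,a) = \sum_{i=1}^{2d+2} a_i (w_i^\top x)^2 = x^\top M x$, where I fix the top weights $a_i = +1$ for the first $d+1$ neurons (whose weight vectors form the columns of $U \in \R^{d \times (d+1)}$) and $a_i = -1$ for the remaining $d+1$ neurons (columns of $V \in \R^{d \times (d+1)}$), so that $M = UU^\top - VV^\top$. The trainable parameters are the $2d(d+1) = O(d^2)$ entries of $(U,V)$, and the squared loss becomes $L(U,V) = \|\cA(M) - y\|^2$ with $\cA: \mathrm{Sym}_d \to \R^n$ defined by $\cA(M)_i = \langle M, x_i x_i^\top\rangle$. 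Under the general-position assumption with $n \le \binom{d+1}{2}$, the matrices $\{x_i x_i^\top\}$ are linearly independent in $\mathrm{Sym}_d$; equivalently the adjoint $\cA^*(r) = \sum_i r_i x_i x_i^\top$ is injective, the linear system $\cA(M^*) = y$ has a symmetric solution $M^*$, and writing $M^*$ via its eigendecomposition as a difference of two PSD matrices of rank $\le d$ gives an explicit $(U^*, V^*)$ with zero loss.

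The core technical step is the landscape analysis. Direct differentiation yields $\nabla_U L = 4 E U$ and $\nabla_V L = -4 E V$, where $E := \cA^*(\cA(M)-y) \in \mathrm{Sym}_d$. At any critical point $E\,[U \mid V] = 0$, so the column span of $[U \mid V]$ lies in $\ker(E)$. If $E = 0$, injectivity of $\cA^*$ forces $\cA(M) = y$, so the point is a global minimum. Otherwise, pick an eigenpair $(\lambda, v)$ of $E$ with $\lambda \ne 0$; since $v$ is in the range of $E$, which is orthogonal to $\ker(E)$, we have $U^\top v = 0$ and $V^\top v = 0$.

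Overparametrization---using $d+1$ columns per factor, one more than is strictly necessary to represent any symmetric $M$---now produces a direction of negative curvature. Because $U, V \in \R^{d \times (d+1)}$ each have nontrivial right null space, if $\lambda > 0$ pick $u \in \R^{d+1}\setminus\{0\}$ with $Vu = 0$ and set $\Delta V = v u^\top$, $\Delta U = 0$. Then $V \Delta V^\top = (Vu) v^\top = 0$ and $\Delta V V^\top = v (Vu)^\top = 0$, so the first-order change in $M$ vanishes, and under the rescaling $\Delta V \mapsto t \Delta V$ one gets exactly $M(U, V + t\Delta V) = M - t^2 \|u\|^2 v v^\top$. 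Substituting into $L$ and expanding,
\[
L(U, V + t v u^\top) = L(U,V) \;-\; 2 t^2 \|u\|^2 \langle E, vv^\top\rangle + O(t^4) = L(U,V) - 2 t^2 \|u\|^2 \lambda + O(t^4),
\]
which exhibits strict negative curvature. The case $\lambda < 0$ is handled symmetrically with $\Delta U = v u^\top$ for $u \in \ker(U)$. Hence every critical point is either a global minimum or a strict saddle.

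Combining this strict-saddle property with a standard escape-saddle guarantee for perturbed gradient descent (e.g.\ Jin--Ge--Netrapalli--Kakade--Jordan) on the locally smooth quartic loss $L$, together with a mild argument to control polynomial growth along the trajectory (by restricting to a large but bounded ball, or by adding a tiny regularizer that does not alter the landscape qualitatively), yields that PGD converges to a zero-loss solution and memorizes all $n$ labels. I expect the main obstacle to be precisely the rank-deficient critical-point case above: the essential trick is that one extra column in each factor lets us choose $u$ in the null space of $V$ (or $U$), which simultaneously kills the first-order variation of $M$ and leaves a pure rank-one curvature contribution $\pm\|u\|^2 v v^\top$ whose sign can be matched to the nonzero eigenvalue of $E$. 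The remaining ingredients---the network-to-factorization reformulation, existence of a minimizer, and the off-the-shelf PGD convergence theorem---are comparatively routine, though verifying the local smoothness constants of a quartic objective along iterates requires some care.
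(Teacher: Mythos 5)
Your proposal is correct and follows essentially the same route as the paper: its key lemma builds exactly your negative-curvature direction $\Delta W = v u^\top$ with $u$ in the null space of one sign-block of the weight matrix (which exists precisely because $r \ge 2d+2$) and $v$ the top eigenvector of the residual matrix $E=\sum_j \delta_j x_j x_j^\top$, so that the represented matrix $M$ changes only at second order in $t$. What the paper adds are the quantitative counterparts of the steps you flag as routine — bounding the loss by $\|E\|$ via $\sigma_{\min}\bigl([x_1^{\otimes 2},\dots,x_n^{\otimes 2}]\bigr)$ so that \emph{approximate} second-order stationary points (what PGD actually finds) have small loss, and adding the small regularizer together with a bounded-trajectory induction to control the smoothness and Hessian-Lipschitz constants of the quartic objective.
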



Here ``in general position'' will be formalized later as a deterministic condition that is true with probability 1 for random inputs, see Theorem~\ref{thm:main-theorem-twolayer} for details.

In this case, the number of hidden neurons is only roughly the square root of the number of training samples, so the weights for these neurons need to be trained carefully in order to fit the data. Our analysis relies on an analysis of optimization landscape - we show that every local minimum for such neural network must also be globally optimal (and has 0 training error). As a result, the algorithm can converge from an arbitrary initialization.

Of course, the result above is limited as the number of training samples cannot be larger than $O(d^2)$. We can extend the result to handle a larger number of training samples:

\begin{theorem}[Informal]\label{thm:main2:informal} Suppose number of training samples $n \le d^p$ for some constant $p$, if the training samples are in general position there exists a three-layer neural network with polynomial activations, such that the number of neurons $r$ in each layer is $O_p(\sqrt{n})$, and perturbed gradient descent on the middle layer can fit the network to any output.
\end{theorem}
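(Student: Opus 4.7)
The plan is to reduce the three-layer problem to the two-layer quadratic-activation setting of Theorem~\ref{thm:main-theorem-twolayer}. Fix an integer $q = \lceil p/2 \rceil$ and a polynomial activation $\sigma$ of degree $q$ (for example $\sigma(z) = z^q$). Choose weight vectors $w_1,\dots,w_{r_1} \in \R^d$ with $r_1 = \lceil \sqrt{2n}\rceil$ and freeze them in the first layer so that it produces the feature map $\phi(x) = (\sigma(w_1^T x),\dots,\sigma(w_{r_1}^T x)) \in \R^{r_1}$. The middle layer contains $r_2 = 2r_1+2$ trainable neurons $u_1,\dots,u_{r_2} \in \R^{r_1}$ with a quadratic activation, and the output layer is the fixed readout $f(x) = \sum_{j=1}^{r_2} a_j\,(u_j^T \phi(x))^2$ with half the $a_j$ set to $+1$ and half to $-1$, mirroring the $2d+2$-neuron template of Theorem~\ref{thm:main-theorem-twolayer}.

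Writing $M = \sum_j a_j u_j u_j^T$, the network output becomes $f(x) = \phi(x)^T M \phi(x)$, so the training loss as a function of the middle-layer parameters $U = (u_1,\dots,u_{r_2})$ is algebraically identical to the two-layer quadratic objective of Theorem~\ref{thm:main-theorem-twolayer}, but with the $r_1$-dimensional feature vectors $\phi(x^{(k)})$ in place of the raw inputs $x^{(k)}$. Consequently, Theorem~\ref{thm:main-theorem-twolayer} applies verbatim provided (i) $n \le \binom{r_1+1}{2}$, which is enforced by the choice of $r_1$, and (ii) the vectors $\phi(x^{(1)}),\dots,\phi(x^{(n)})$ satisfy the general position condition in $\R^{r_1}$, equivalent to linear independence of the symmetric rank-one matrices $\phi(x^{(k)})\phi(x^{(k)})^T$ in the $\binom{r_1+1}{2}$-dimensional space of symmetric matrices. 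Once (i) and (ii) hold, every local minimum of the middle-layer loss is global with zero training error, and perturbed gradient descent on $U$ converges from any initialization.

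The crux, and expected main obstacle, is (ii). Each entry $\phi_i(x)\phi_j(x)$ is a homogeneous polynomial of degree $2q \ge p$ in $x$, so the map $\Psi(x) = \phi(x)\phi(x)^T$ factors as $L \circ V_p$, where $V_p \colon \R^d \to \R^{\binom{d+p}{p}}$ is the (at most) degree-$p$ Veronese embedding and $L$ is a linear map whose entries are polynomials in $(w_1,\dots,w_{r_1})$. The general position hypothesis on $x^{(1)},\dots,x^{(n)}$, together with the assumption $n \le d^p$ (up to a $p$-dependent constant swallowed into $O_p$), implies that $V_p(x^{(1)}),\dots,V_p(x^{(n)})$ are linearly independent in $\R^{\binom{d+p}{p}}$, so (ii) reduces to showing that $L$ is injective on their $n$-dimensional span. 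This is a non-vanishing determinantal condition on the joint parameters $\bigl((w_i),(x^{(k)})\bigr)$, cutting out a proper algebraic subvariety of the ambient parameter space. Exhibiting even one valid configuration therefore suffices; a concrete choice is to take the $w_i$ indexed by degree-$q$ multi-indices so that the products $(w_i^T x)^q (w_j^T x)^q$ hit a spanning set of degree-$2q$ monomials, or equivalently to appeal to a Waring-type symmetric-tensor decomposition for the span of $\{(w^T x)^q : w \in \R^d\}$.

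Combining the pieces: with the first and output layers frozen, the middle-layer landscape inherits the global geometry of Theorem~\ref{thm:main-theorem-twolayer}; perturbed gradient descent on the $r_1 r_2 = O_p(n)$ trainable middle-layer parameters converges to a zero-error interpolant of any target labels; and each layer has width $O_p(\sqrt{n})$, as claimed.
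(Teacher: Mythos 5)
Your overall reduction is exactly the paper's: freeze a polynomial first layer as a feature map $\phi$, train only the quadratic middle layer, observe the loss is the two-layer objective with $\phi(x_j)$ in place of $x_j$, and invoke Theorem~\ref{thm:main-theorem-twolayer}. But there is a genuine gap where you declare the crux to be a ``non-vanishing determinantal condition'' settled by genericity. Theorem~\ref{thm:main-theorem-twolayer} is not a rank condition: its conclusion (how small an $\varepsilon$-second-order stationary point's loss is, and how many PGD iterations are needed) degrades polynomially in $1/\sigma$, where $\sigma$ is the smallest singular value of the matrix of outer products $\phi(x_j)\phi(x_j)^T$. A Zariski-genericity argument only gives $\sigma>0$ with no quantitative control, so it yields no bound on the accuracy $\varepsilon$ you must run PGD to, nor on the iteration count --- which is the actual content of the theorem (``efficiently,'' formalized in Theorems~\ref{thm:main-theorem-random-feature-with-perturbation} and~\ref{thm:deterministic} with explicit runtimes depending inverse-polynomially on the smoothing variance $v$ or on $\sigma_{\min}(X)$). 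The paper's entire technical effort for the three-layer case is precisely this quantitative step: a leave-one-out-distance argument combined with Carbery--Wright anti-concentration for polynomials (Lemmas~\ref{lem:projection}, \ref{lem:projection2}, \ref{lem:singular-lb-withperturb}) in the smoothed-analysis setting, or an $\varepsilon$-net argument over symmetric tensors (Lemmas~\ref{prop1_L5}--\ref{lem:singular-lb-noperturb}) in the deterministic setting. Your sketch replaces all of this with ``a proper algebraic subvariety has measure zero,'' which does not prove the stated result.

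Even the qualitative claim is left unfinished: because the $\binom{r_1+1}{2}$ product features $(w_i^Tx)^q(w_j^Tx)^q$ are generated by only $r_1\approx\sqrt{2n}$ vectors, the rows of your matrix $L$ are heavily correlated, and showing the relevant determinant is not identically zero requires an actual witness or an argument like the paper's projection trick (fixing all $r_l$ with $l\neq i,j$ and lower-bounding the component of row $(i,j)$ orthogonal to the span of the rest); you only gesture at a Waring-type construction without carrying it out. Two smaller points: your $\Psi(x)=\phi(x)\phi(x)^T$ has entries of degree $2q=2\lceil p/2\rceil\ge p$, so it factors through the degree-$2q$ Veronese, not the degree-$p$ one as written; and ``general position'' in the paper is formalized either as smoothed inputs or as a quantitative singular-value assumption on $[x_1^{\otimes 2p},\dots,x_n^{\otimes 2p}]$, not as bare linear independence of Veronese images, precisely so that the downstream bounds are quantitative.
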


Here $O_p$ considers $p$ as a constants and hides constant factors that only depend on $p$. We consider ``in general position'' in the smoothed analysis framework\citep{spielman2004smoothed} - given arbitrary inputs $x_1,x_2,...,x_n \in \R^d$, fix a perturbation radius $\sqrt{v}$, the actual inputs is $\bar{x}_j = x_j+\tilde{x}_j$ where $\tilde{x}_j\sim N(0, vI)$. The guarantee of training algorithm will depend inverse polynomially on the perturbation $v$ (note that the architecture -  in particular the number of neurons - is independent of $v$). The formal result is given in Theorem~\ref{thm:main-theorem-random-feature-with-perturbation} in Section~\ref{sec:proof-sketch-random-feature}. Later we also give a deterministic condition for the inputs, and prove a slightly weaker result (see Theorem~\ref{thm:deterministic}).

\subsection{Related Works}
\label{sec:related}
\paragraph{Neural Tangent Kernel} Many results in the framework of neural tangent kernel show that networks with different architecture can all memorize the training data, including two-layer \citep{du2019gradient}, multi-layer\citep{du2018gradient2, allen2019convergence, zou2019improved}, recurrent neural network\citep{allen2018convergence}. However, all of these works require the number of neurons in each layer to be at least quadratic in the number of training samples. \citet{oymak2019towards} improved the number of neurons required for two-layer networks, but their bound is still larger than the number of training samples. There are also more works for NTK on generalization guarantees (e.g., \citet{allen2018learning}), fine-grained analysis for specific inputs\citep{arora2019fine} and empirical performances\citep{arora2019exact}, but they are not directly related to our results. 
\paragraph{Representation Power of Neural Networks} For standard neural networks with ReLU activations, \cite{yun2018small} showed that networks of similar size as Theorem~\ref{thm:main2:informal} can memorize any training data. Their construction is delicate and it is not clear whether gradient descent can find such a memorizing network efficiently.

\paragraph{Matrix Factorizations} Since the activation function for our two-layer net is quadratic, training of the network is very similar to matrix factorization problem. Many existing works analyzed the optimization landscape and implicit bias for problems related to matrix factorization in various settings\citep{bhojanapalli2016global, ge2016matrix, ge2017no, park2016non, gunasekar2017implicit, li2018algorithmic, arora2019implicit}. In this line of work, \citet{du2018power} is the most similar to our two-layer result, where they showed how gradient descent can learn a two-layer neural network that represents any positive semidefinite matrix. However positive definite matrices cannot be used to memorize arbitrary data, and our two-layer network can represent an arbitrary matrix.

\paragraph{Interpolating Methods} Of course, simply memorizing the data may not be useful in machine learning. However, recently several works\citep{belkin2018overfitting, belkin2019does, liang2019risk, mei2019generalization}  showed that learning regimes that interpolate/memorize data can also have generalization guarantees. Proving generalization for our architectures is an interesting open problem.

\section{Preliminaries}
In this section, we introduce notations, the two neural network architectures used for Theorem~\ref{thm:main1:informal} and \ref{thm:main2:informal}, and the perturbed gradient descent algorithm.

\subsection{Notations}

We use $[n]$ to denote the set $\{1,2,...,n\}$.
For a vector $x$, we use $\|x\|_2$ to denote its $\ell_2$ norm, and sometimes $\|x\|$ as a shorthand. 
For a matrix $M$, we use $\|M\|_{F}$ to denote its Frobenius norm, $\|M\|$ to denote its spectral norm. We will also use $\lambda_i(M)$ and $\sigma_i(M)$ to denote the $i$-th largest eigenvalue and singular value of matrix $M$, and $\lambda_{\min}(M)$, $\sigma_{\min}(M)$ to denote the smallest eigenvalue/singular value. 

For the results of three-layer networks, our activation is going to be $x^p$, where $p$ is considered as a small constant. We use $O_p()$, $\Omega_p()$ to hide factors that only depend on $p$.

For vectors $x,y\in \R^d$, the tensor product is denoted by $(x\otimes x)\in \R^{d^2}$. We use $x^{\otimes p} \in \R^{d^p}$ as a shorthand for $p$-th power of $x$ in terms of tensor product. For two matrices $M,N\in \R^{d_1\times d_2}$, we use $M\otimes N \in \R^{d_1^2\times d_2^2}$ denote the Kronecker product of 2 matrices. 

\subsection{Network Architectures}
\label{sec:prelim_architecture}
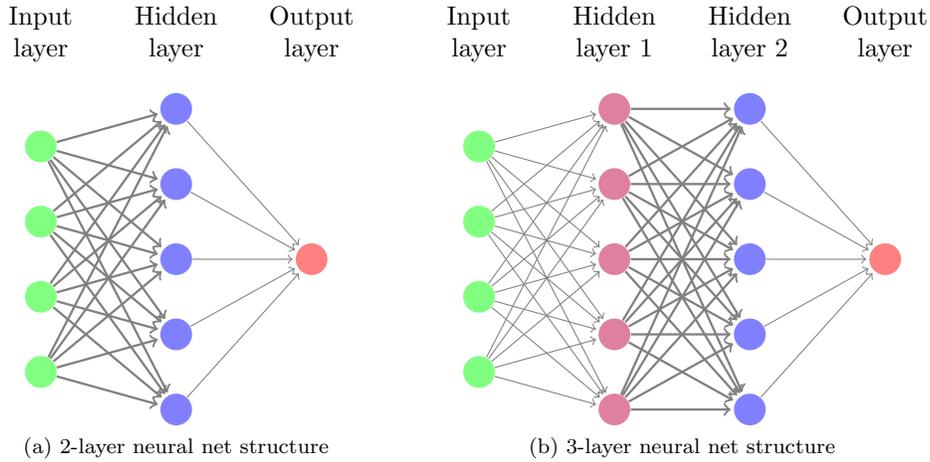
\begin{figure}[tb]
\def\layersep{1.8cm}
\def\tlayersep{3.6cm}
\centering
\subfloat[2-layer neural net structure]{
\begin{tikzpicture}[shorten >=1pt,->,draw=black!50, node distance=\layersep]
    \tikzstyle{every pin edge}=[<-,shorten <=1pt]
    \tikzstyle{neuron}=[circle,fill=black!25,minimum size=12pt,inner sep=0pt]
    \tikzstyle{input neuron}=[neuron, fill=green!50];
    \tikzstyle{output neuron}=[neuron, fill=red!50];
    \tikzstyle{hidden neuron}=[neuron, fill=blue!50];
    \tikzstyle{annot} = [text width=4em, text centered]

    \foreach \name / \y in {1,...,4}
        \node[input neuron] (I-\name) at (0,-\y) {};

    \foreach \name / \y in {1,...,5}
        \path[yshift=0.5cm]
            node[hidden neuron] (H-\name) at (\layersep,-\y cm) {};

    \node[output neuron, right of=H-3] (O) {};

    \foreach \source in {1,...,4}
        \foreach \dest in {1,...,5}
            \path[line width=0.3mm] (I-\source) edge (H-\dest);

    \foreach \source in {1,...,5}
        \path (H-\source) edge (O);

    \node[annot,above of=H-1, node distance=1cm] (hl) {Hidden layer};
    \node[annot,left of=hl] {Input layer};
    \node[annot,right of=hl] {Output layer};
\end{tikzpicture}
}
\quad
\subfloat[3-layer neural net structure]{
\begin{tikzpicture}[shorten >=1pt,->,draw=black!50, node distance=\layersep]
    \tikzstyle{every pin edge}=[<-,shorten <=1pt]
    \tikzstyle{neuron}=[circle,fill=black!25,minimum size=12pt,inner sep=0pt]
    \tikzstyle{input neuron}=[neuron, fill=green!50];
    \tikzstyle{output neuron}=[neuron, fill=red!50];
    \tikzstyle{hidden neuron 2}=[neuron, fill=blue!50];
    \tikzstyle{hidden neuron 1}=[neuron, fill=purple!50];
    \tikzstyle{annot} = [text width=4em, text centered]

    \foreach \name / \y in {1,...,4}
        \node[input neuron] (I-\name) at (0,-\y) {};

    \foreach \name / \y in {1,...,5}
        \path[yshift=0.5cm]
            node[hidden neuron 1] (H1-\name) at (\layersep,-\y cm) {};
    
    \foreach \name / \y in {1,...,5}
        \path[yshift=0.5cm]
            node[hidden neuron 2] (H2-\name) at (\tlayersep,-\y cm) {};

    \node[output neuron,right of=H2-3] (O) {};

    \foreach \source in {1,...,4}
        \foreach \dest in {1,...,5}
            \path (I-\source) edge (H1-\dest);
            
    \foreach \source in {1,...,5}
        \foreach \dest in {1,...,5}
            \path[line width=0.3mm] (H1-\source) edge (H2-\dest);

    \foreach \source in {1,...,5}
        \path (H2-\source) edge (O);

    \node[annot,above of=H1-1, node distance=1cm] (hl) {Hidden layer 1};
    \node[annot,above of=H2-1, node distance=1cm] (hr) {Hidden layer 2};
    \node[annot,left of=hl] {Input layer};
    \node[annot,right of=hr] {Output layer};
\end{tikzpicture}
}
\caption{Neural Network Architectures. The trained layer is in bold face. The activation function after the trained parameters is $x^2$(blue neurons). The activation function before the trained parameters is $x^p$(purple neurons).}
\label{fig:architecture}
\end{figure}






In this section, we introduce the neural net architectures we use. As we discussed, Theorem~\ref{thm:main1:informal} uses a two-layer network (see Figure~\ref{fig:architecture} (a)) and Theorem~\ref{thm:main2:informal} uses a three-layer network (see Figure~\ref{fig:architecture} (b)). 

\paragraph{Two-layer Neural Network} For the two-layer neural network, suppose the input samples $x$ are in $\R^d$, the hidden layer has $r$ hidden neurons (for simplicity, we assume $r$ is even, in Theorem~\ref{thm:main-theorem-twolayer} we will show that $r = 2d+2$ is enough). The activation function of the hidden layer is $\sigma(x) = x^2$.

We use $w_i\in\R^d$ to denote the input weight of hidden neuron $i$. These weight vectors are collected as a weight matrix $W = [w_1,w_2,\dots,w_r] \in \R^{d\times r}$. The output layer has only 1 neuron, and we use $a_i\in\R$ to denote the its input weight from hidden neuron $i$. There is no nonlinearity for the output layer. For simplicity, we fix the parameters $a_i,i\in [r]$ in the way that $a_i = 1$ for all $1\le i\le \frac{r}{2}$ and $a_i = -1$ for all $\frac{r}{2}+1 \le i \le r$. Given $x$ as the input, the output of the neural network is
\[y = \sum_{i=1}^r a_i(w_i^Tx)^2.\]

If the training samples are $\{(x_j,y_j)\}_{j\le n}$, we define the empirical risk of the neural network with parameters $W$ to be
\[f(W) = \frac{1}{4n}\sum_{j=1}^n\left(\sum_{i=1}^r a_i(w_i^Tx_j)^2 - y_j\right)^2.\]

\paragraph{Three-layer neural network} For Theorem~\ref{thm:main2:informal}, we use a more complicated, three-layer neural network. 
In this network, the first layer has a polynomial activation $\tau(x) = x^p$,
and the next two layers are the same as the two-layer network. 

We use $R = [r_1,\dots, r_k]^T\in\mathbb R^{k\times d}$ 
to denote the weight parameter of the first layer. The first hidden layer has $k$ neurons with activation $\tau(x) = x^p$ where $p$ is the parameter in Theorem~\ref{thm:main2:informal}. Given input $x$, the output of the first hidden layer is denoted as $z$, and satisfy $z_i = (r_i^T x_j)^p$.
The second hidden layer has $r$ neurons (again we will later show $r = 2k+2$ is enough).
The weight matrix for second layer is denoted as $W = [w_1,\dots, w_{r}]\in \R^{k\times r}$ where each $w_i\in \R^k$ is the weight for a neuron in the second hidden layer. The activation for the second hidden layer is $\sigma(x) = x^2$. The third layer has weight $a$ and is initialized the same way as before, where $a_1 = a_2=\cdots = a_{r/2} = 1$, and $a_{r/2+1} = \cdots = a_{r} = -1$. The final output $y$ can be computed as
\[
y = \sum_{i=1}^{r} a_i (w_i^T z)^2.
\]

Given inputs $(x_1,y_1), ..., (x_n,y_n)$, suppose $z_i$ is the output of the first hidden layer for $x_i$, the empirical loss is defined as:
\[f(W) = \frac{1}{4n}\sum_{j=1}^n\left(\sum_{i=1}^r a_i(w_i^Tz_j)^2 - y_j\right)^2.\]

Note that only the second-layer weight $W$ is trainable. The first layer with weights $R$ acts like a random feature layer that maps $x_i$'s into a new representation $z_i$'s. 

\subsection{Second order stationary points and perturbed gradient descent}
Gradient descent converges to a global optimum of a convex function. However, for non-convex objectives, gradient descent is only guaranteed to converge into a first-order stationary point - a point with 0 gradient, which can be a local/global optimum or a saddle point. Our result requires any algorithm that can find a second-order stationary point - a point with 0 gradient and positive definite Hessian. Many algorithms were known to achieve such guarantee\citep{ge2015escaping, sun2015nonconvex, carmon2018accelerated, agarwal2017finding, jin2017escape, jin2017accelerated}. As we require some additional properties of the algorithm (see Section~\ref{sec:proof-sketch-twolayer}), we will adapt the Perturbed Gradient Descent(PGD, \citep{jin2017escape}). See Section~\ref{sec:algdetail} for a detailed description of the algorithm. Here we give the guarantee of PGD that we need. The PGD algorithm requires the function and its gradient to be Lipschitz:


\begin{definition}[Smoothness and Hessian Lipschitz]
    A differentiable function $f(\cdot)$ is $\ell$\text{-smooth} if:
    \[\forall x_1,x_2,\ ||\nabla f(x_1) - \nabla f(x_2)|| \le \ell ||x_1 - x_2||.\]
    A twice-differentiable function $f(\cdot)$ is $\rho$\text{-Hessian Lipschitz} if:
    \[\forall x_1,x_2,\ ||\nabla^2 f(x_1) - \nabla^2 f(x_2)|| \le \rho ||x_1 - x_2||.\]
\end{definition}

Under these assumptions, we will consider an approximation for exact second-order stationary point as follows:

\begin{definition}[$\varepsilon$-second-order stationary point]\label{def:stationary-point}
    For a $\rho$-Hessian Lipschitz function $f(\cdot)$, we say that $x$ is an $\varepsilon$\textbf{-second-order stationary point} if:
    \[||\nabla f(x)||\le \varepsilon, \text{and }\lambda_{\min}(\nabla^2f(x)) \ge -\sqrt{\rho \varepsilon}.\]
\end{definition}


\cite{jin2017escape} showed that PGD converges to an $\varepsilon$-second-order stationary point efficiently:

\begin{restatable}[Convergence of PGD (\cite{jin2017escape})]{theorem}{thmpgdconvergence}\label{thm:pgd-convergence}
    Assume that $f(\cdot)$ is $\ell$-smooth and $\rho$-Hessian Lipschitz. Then there exists an absolute constant $c_{\text{max}}$ such that, for any $\delta > 0, \varepsilon 
    \le \frac{\ell^2}{\rho},\Delta_f \ge f(x_0) - f^*$, and constant $c \le c_{\text{max}}$, $PGD(x_0,\ell,\rho,\varepsilon,c,\delta,\Delta_f)$ will output an $\varepsilon$-second-order stationary point with probability $1-\delta$, and terminate in the following number of iterations:
    \[O\left(\frac{\ell(f(x_0) - f^*)}{\varepsilon^2}\log^4\left(\frac{d\ell\Delta_f}{\varepsilon^2\delta}\right)\right).\]
\end{restatable}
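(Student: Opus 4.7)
The plan is to prove Theorem~\ref{thm:pgd-convergence} by partitioning the trajectory into two kinds of iterations: those where the gradient is large and vanilla gradient descent makes guaranteed progress, and those near an approximate saddle point where the injected perturbation is needed to escape along a direction of negative curvature. I would then combine the per-phase function-value decreases with the total budget $\Delta_f = f(x_0) - f^*$ to bound the total iteration count.

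For the large-gradient regime I would apply the standard descent lemma for $\ell$-smooth functions. With step size $\eta = 1/\ell$, smoothness gives $f(x_{t+1}) \le f(x_t) - \frac{1}{2\ell}\|\nabla f(x_t)\|^2$, so whenever $\|\nabla f(x_t)\| \ge \varepsilon$ the function drops by at least $\varepsilon^2/(2\ell)$, which can happen at most $O(\ell \Delta_f / \varepsilon^2)$ times.

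The heart of the argument is analyzing iterations where $\|\nabla f(x_t)\| < \varepsilon$ but $\lambda_{\min}(\nabla^2 f(x_t)) < -\sqrt{\rho\varepsilon}$. Following \cite{jin2017escape}, I would show that after adding a uniform perturbation from a small ball of radius $r = \tilde{\Theta}(\varepsilon/\ell)$ and running $\mathcal{T} = \tilde{O}(1/(\eta\sqrt{\rho\varepsilon}))$ plain gradient steps, the objective drops by at least $\Omega(\varepsilon^{1.5}/\sqrt{\rho})$ with probability $1-\delta$. The proof I have in mind uses a two-point coupling: let $e_1$ be an eigenvector of $\nabla^2 f(x_t)$ with eigenvalue $-\gamma \le -\sqrt{\rho\varepsilon}$, and consider two trajectories $\{u_s\}$ and $\{u_s'\}$ starting from perturbations of $x_t$ that differ only along $e_1$. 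By Taylor expansion at $x_t$ together with the $\rho$-Hessian-Lipschitz bound, the difference $w_s = u_s - u_s'$ obeys $w_{s+1} \approx (I - \eta \nabla^2 f(x_t)) w_s$, so its $e_1$-component is amplified by the factor $(1+\eta\gamma)^s$. After $\mathcal{T}$ steps the two paths have separated further than any ball of radius $O(\sqrt{\varepsilon/\rho})$, so at least one of them must have escaped such a ball. An ``improve-or-localize'' inequality bounding distance traveled in terms of accumulated function decrease then converts the escape into the claimed drop, and a thin-slab volume argument shows a random perturbation misses the narrow ``stuck region'' with probability at least $1-\delta$.

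The hard part will be controlling the coupling tightly enough that the exponential amplification along $e_1$ is not washed out by nonlinear errors from the Hessian-Lipschitz approximation nor by amplification along the positive-curvature subspace; this forces careful balancing of the perturbation radius and $\mathcal{T}$ so that the whole trajectory stays in a region where the Hessian remains close to $\nabla^2 f(x_t)$ throughout the escape, which is why the quantitative claim needs the polylogarithmic slack in $\mathcal{T}$. Given this escape lemma, each perturbation either reduces $f$ by $\Omega(\varepsilon^{1.5}/\sqrt{\rho})$ or certifies an $\varepsilon$-second-order stationary point at the failure check; summing the contributions of both regimes and taking a union bound over at most $O(\ell \Delta_f / \varepsilon^2)$ perturbation events yields the stated $O(\ell(f(x_0)-f^*)/\varepsilon^2)$ iteration complexity, with the $\log^4$ factor arising from the per-perturbation failure probability combined with the length $\mathcal{T}$ of each escape phase.
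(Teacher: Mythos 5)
This theorem is not proved in the paper at all: it is imported verbatim from \cite{jin2017escape}, and the paper uses it as a black box. Your sketch is essentially a faithful reconstruction of the original proof in that cited work (descent lemma for the large-gradient phase, two-point coupling along the most negative eigendirection with an improve-or-localize bound and a thin-slab volume argument for the stuck region, then summing phase-wise decreases against $\Delta_f$), so it is consistent with the result as the paper invokes it.
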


%
%
%

\section{Warm-up: Two-layer Net for Fitting Small Training Set}\label{sec:proof-sketch-twolayer}

In this section, we show how the two-layer neural net in Section~\ref{sec:prelim_architecture} trained with perturbed gradient descent can fit any small training set (Theorem \ref{thm:main1:informal}). Our result is based on a characterization of optimization landscape: for small enough $\varepsilon$, every $\varepsilon$-second-order stationary point achieves near-zero training error. We then combine such a result with PGD to show that simple algorithms can always memorize the training data. Detailed proofs are deferred to Section~\ref{sec:twolayerformal} in the Appendix.


\subsection{Optimization landscape of two-layer neural network}


Recall that the two-layer network we consider has $r$-hidden units with bottom layer weights $w_1,w_2,...,w_r$, and the weight for the top layer is set to $a_i = 1$ for $1\le i\le r/2$, and $a_i = -1$ for $r/2+1\le i \le r$. For a set of input data $\{(x_1,y_1),(x_2,y_2),...,(x_n,y_n)\}$, the objective function is defined as \[f(W) = \frac{1}{4n}\sum_{j=1}^n\left(\sum_{i=1}^r a_i(w_i^Tx_j)^2 - y_j\right)^2.\]

With these definitions, we will show that when a point is an approximate second-order stationary point (in fact, we just need it to have an almost positive semidefinite Hessian) it must also have low loss:

\begin{restatable}[Optimization Landscape]{lemma}{lemoptlandscape}\label{lem:geo-property}
Given training data $\{(x_1,y_1),(x_2,y_2),...,(x_n,y_n)\}$,
    Suppose the matrix $X = [x_1^{\otimes 2},\dots,x_n^{\otimes 2}]\in \R^{d^2 \times n}$ has full column rank and the smallest singular value is at least $\sigma$. Also suppose that the number of hidden neurons satisfies $r\ge 2d+2$. Then if $\lambda_{\min}\nabla^2 f(W) \ge -\varepsilon$, the function value is bounded by $f(W) \le \frac{nd\varepsilon^2}{4\sigma^2}$.
\end{restatable}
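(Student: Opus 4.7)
\textbf{Proof proposal for Lemma \ref{lem:geo-property}.}

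The plan is to reformulate the network's output as a quadratic form and extract information about the residuals from the Hessian's near-PSD property using a carefully chosen perturbation. First, I would set $M(W) = \sum_i a_i w_i w_i^T \in \R^{d\times d}$ so that $\sum_i a_i (w_i^T x_j)^2 = \langle M(W), x_j x_j^T\rangle$. Let $e_j = \langle M(W), x_j x_j^T\rangle - y_j$ be the residuals and define the error matrix $E = \sum_{j=1}^n e_j x_j x_j^T$. A direct second-order expansion of $f$ in the direction $V = [v_1,\dots,v_r]$ gives
\[
V^T \nabla^2 f(W) V = \frac{1}{2n}\sum_{j=1}^n A_j^2 + \frac{1}{n}\sum_{i=1}^r a_i v_i^T E v_i,
\]
where $A_j = 2\sum_i a_i (w_i^T x_j)(v_i^T x_j)$. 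The first term is nonnegative, so the hypothesis $\lambda_{\min}(\nabla^2 f) \ge -\varepsilon$ forces $\sum_i a_i v_i^T E v_i \ge -n\varepsilon \|V\|_F^2 - \frac{1}{2}\sum_j A_j^2$ for every direction $V$.

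Next I would design a perturbation that kills the cross term $A_j$ while still isolating an arbitrary quadratic form on $E$. Fix a unit vector $u\in\R^d$ and set $v_i = \alpha_i u$ for scalars $\alpha_i$. Then $A_j = 2(u^T x_j)\, x_j^T\!\left(\sum_i a_i \alpha_i w_i\right)$, so choosing $\alpha$ to satisfy $\sum_i a_i \alpha_i w_i = 0$ makes every $A_j$ vanish, and $\sum_i a_i v_i^T E v_i = \bigl(\sum_i a_i \alpha_i^2\bigr) u^T E u$. Splitting $\alpha = (\alpha^{(1)},\alpha^{(2)})$ according to the sign of $a_i$, the linear constraint becomes $W_+ \alpha^{(1)} = W_- \alpha^{(2)}$, where $W_\pm$ are the $d\times (r/2)$ matrices formed from the positive and negative neurons. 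Because $r \ge 2d+2$ forces $r/2 \ge d+1$, each of $W_+, W_-$ has a nontrivial kernel. Taking $\alpha^{(1)}=0$ and any unit $\alpha^{(2)}\in\ker W_-$ yields $\sum_i a_i \alpha_i^2 = -1$ and $\|V\|_F = 1$, so the Hessian inequality collapses to $u^T E u \le n\varepsilon$. Symmetrically, using $\ker W_+$ gives $u^T E u \ge -n\varepsilon$. Hence $\|E\| \le n\varepsilon$ in operator norm.

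The last step converts this bound on $\|E\|$ back to a bound on the training loss via the smallest-singular-value assumption on $X$. Since $E$ is $d\times d$ and symmetric, $\|E\|_F^2 \le d\,\|E\|^2 \le d n^2 \varepsilon^2$. On the other hand, the vectorization of $E$ equals $X e$ where $e = (e_1,\dots,e_n)^T$, so $\|E\|_F = \|X e\|_2 \ge \sigma \|e\|_2$. Combining,
\[
\|e\|_2^2 \le \frac{\|E\|_F^2}{\sigma^2} \le \frac{d n^2 \varepsilon^2}{\sigma^2},
\qquad
f(W) = \frac{\|e\|_2^2}{4n} \le \frac{n d \varepsilon^2}{4\sigma^2},
\]
which is the claim.

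I expect the main obstacle to be locating the right test direction $V$. The two competing quantities in the Hessian quadratic form ($A_j^2$ and $\sum a_i v_i^T E v_i$) pull against each other, and a naive coordinate-style perturbation blows up the first term. The key observation is that the slightly-enlarged width $r\ge 2d+2$ provides exactly one extra degree of freedom in each sign block, so that one can project out the linear constraint $\sum a_i \alpha_i w_i = 0$ inside a single sign block and thereby zero out the cross term while still producing a negative definite signature on $E$. Once the perturbation is in hand, the remaining steps (bounding $\|E\|_F$ by $\sqrt{d}\|E\|$ and inverting $X$) are routine.
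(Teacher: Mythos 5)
Your proposal is correct and follows essentially the same route as the paper: you expand the Hessian quadratic form, use the width condition $r\ge 2d+2$ to pick a rank-one test direction $v_i=\alpha_i u$ with $\sum_i a_i\alpha_i w_i=0$ supported on one sign block (killing the PSD term), deduce a spectral-norm bound on the residual matrix, and convert it to a loss bound via $\|E\|_F\le\sqrt d\,\|E\|$ and $\sigma_{\min}(X)$. The only cosmetic differences are the scaling ($E = nM$ relative to the paper's $M=\frac1n\sum_j\delta_j x_jx_j^T$) and that the paper proves the exact identity $\lambda_{\min}\nabla^2 f(W)=-\max_i|\lambda_i(M)|$ while you derive only the one-sided inequality actually needed.
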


For simplicity, we will use $\delta_j(W) = \sum_{i=1}^r a_i(w_i^T x_j)^2 - y_j$ to denote the {\em residual} for $j$-th data point: the difference between the output of the neural network and the label $y_j$. We will also combine these residuals into a matrix $M(W) :=  \frac{1}{n}\sum_{j=1}^n\delta_j(W) x_jx_j^T$. Intuitively, we first show that when $M(W)$ is large, the smallest eigenvalue of $\nabla^2 f(W)$ is very negative.

\begin{restatable}{lemma}{lemsmallesteigenvalue}\label{lem:smallesteigenvalue}
    When the number of the hidden neurons $r \ge 2d+2$, we have
    \[\lambda_{\min}\nabla^2 f(W) = -\max_i |\lambda_i(M)|,\]
    where $\lambda_{\min}\nabla^2 f(W)$ denotes the smallest eigenvalue of the matrix $\nabla^2 f(W)$ and $\lambda_i (M)$ denotes the $i$-th eigenvalue of the matrix $M$.
\end{restatable}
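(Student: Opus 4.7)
The plan is to decompose $\nabla^2 f(W)$ as $D+P$, where $D$ is block-diagonal with $i$-th block $a_i M$ and $P\succeq 0$ is a positive-semidefinite correction; read off the lower bound $\lambda_{\min}(\nabla^2 f(W))\ge -\max_i|\lambda_i(M)|$ directly from this decomposition; and construct a matching direction using the linear dependence of $r/2\ge d+1$ vectors in $\R^d$.

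First I will compute the Hessian by direct differentiation. Writing $u_{ij}:=w_i^T x_j$, the diagonal and off-diagonal blocks come out to
\[
\nabla^2_{w_iw_i}f(W)=a_i M+\frac{2}{n}\sum_{j=1}^n u_{ij}^2\, x_j x_j^T,\qquad \nabla^2_{w_iw_l}f(W)=\frac{2 a_i a_l}{n}\sum_{j=1}^n u_{ij}u_{lj}\, x_j x_j^T\quad(i\ne l).
\]
Collecting these into the quadratic form, for $v=(v_1,\dots,v_r)\in\R^{rd}$ a short rearrangement gives the clean identity
\[
v^T\nabla^2 f(W)v=\sum_{i=1}^r a_i\, v_i^T M v_i+\frac{2}{n}\sum_{j=1}^n\Bigl(\sum_{i=1}^r a_i u_{ij}(v_i^T x_j)\Bigr)^{2}.
\]
The second term is manifestly non-negative, and the first equals $v^T D v$ with $D$ block-diagonal as above. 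Since half the $a_i$ equal $+1$ and half equal $-1$, $\lambda_{\min}(D)=\min\{\lambda_{\min}(M),-\lambda_{\max}(M)\}=-\max_i|\lambda_i(M)|$, which yields the lower bound.

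To prove equality I would construct a direction $v$ attaining this bound. Assume first $\lambda_{\max}(M)=\max_i|\lambda_i(M)|$, and let $u$ be a unit top eigenvector of $M$. Let $I^-:=\{i:a_i=-1\}$, of size $r/2\ge d+1$. The $|I^-|\ge d+1$ vectors $\{w_i\}_{i\in I^-}$ live in $\R^d$ and are therefore linearly dependent, so I can pick coefficients $(\alpha_i)_{i\in I^-}$, not all zero, with $\sum_{i\in I^-}\alpha_i w_i=0$. Setting $v_i=\alpha_i u$ for $i\in I^-$ and $v_i=0$ otherwise, one checks that for every $j$,
\[
\sum_{i=1}^r a_i u_{ij}(v_i^T x_j)=-(u^T x_j)\,x_j^T\!\sum_{i\in I^-}\alpha_i w_i=0,
\]
so the second term in the quadratic form vanishes, while each nonzero $v_i$ lies in the top eigenspace of $M$ with $a_i=-1$, giving $v^T D v=-\lambda_{\max}(M)\|v\|^2$. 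The symmetric case $-\lambda_{\min}(M)=\max_i|\lambda_i(M)|$ is handled identically using $I^+:=\{i:a_i=+1\}$ and the bottom eigenvector.

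The only delicate step is the linear-dependence argument, and this is precisely where the assumption $r\ge 2d+2$ enters: it guarantees at least $d+1$ neurons on each side, the threshold that forces linear dependence of the weights in $\R^d$. Combining the $P\succeq 0$ lower bound with this explicit attaining direction gives the claimed equality.
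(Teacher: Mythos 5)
Your proposal is correct and follows essentially the same route as the paper: the same decomposition of the Hessian quadratic form into a block-diagonal part built from $M$ plus a positive semidefinite term, and the same matching direction supported on one sign class of the $a_i$'s with coefficients annihilating $\sum_i \alpha_i w_i$. Your linear-dependence argument for $r/2\ge d+1$ vectors in $\R^d$ is just a more direct phrasing of the paper's counting of equations in the homogeneous linear system, so the two proofs coincide in substance.
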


Then we complete the proof by showing if the objective function is large, $M(W)$ is large.

\begin{restatable}{lemma}{lemspectralnormandvalue}\label{lem:spectralnormandfuncvalue}
    Suppose the matrix $X = [x_1^{\otimes 2},\dots,x_n^{\otimes 2}]\in \R^{d^2 \times n}$ has full column rank and the smallest singular value is at least $\sigma$. Then if the spectral norm of the matrix $M = \frac{1}{n}\sum_{j=1}^n \delta_j x_jx_j^T$ is upper bounded by $\lambda$, the function value is bounded by
    \[f(W) \le \frac{nd\lambda^2}{4\sigma^2}.\]
\end{restatable}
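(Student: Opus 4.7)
The plan is to express $f(W)$ directly in terms of the residual vector $\delta = (\delta_1,\ldots,\delta_n)^T$, and then bound $\|\delta\|_2$ by viewing $M$ as a linear image of $\delta$ under the map $X$. Concretely, $f(W) = \frac{1}{4n}\|\delta\|_2^2$, so it suffices to show $\|\delta\|_2^2 \le n^2 d\lambda^2/\sigma^2$.

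The key observation is that $M$ and $X\delta$ carry the same information up to reshaping. Since $x_j^{\otimes 2}$ is the vectorization of $x_j x_j^T$, the matrix $M = \frac{1}{n}\sum_{j=1}^n \delta_j x_j x_j^T$ satisfies $\mathrm{vec}(M) = \frac{1}{n}\sum_{j=1}^n \delta_j (x_j \otimes x_j) = \frac{1}{n} X\delta$. Passing to Euclidean norms gives $\|X\delta\|_2 = n\|\mathrm{vec}(M)\|_2 = n\|M\|_F$.

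Now I combine two standard bounds. On one side, the assumption on $X$ gives $\|X\delta\|_2 \ge \sigma_{\min}(X)\|\delta\|_2 \ge \sigma\|\delta\|_2$ (since $X$ has full column rank). On the other side, $M$ is a symmetric $d\times d$ matrix, so $\|M\|_F = \sqrt{\sum_{i=1}^d \lambda_i(M)^2} \le \sqrt{d}\,\|M\| \le \sqrt{d}\,\lambda$. Chaining these inequalities yields $\sigma\|\delta\|_2 \le n\sqrt{d}\,\lambda$, hence
\[
f(W) = \frac{1}{4n}\|\delta\|_2^2 \le \frac{1}{4n}\cdot\frac{n^2 d\lambda^2}{\sigma^2} = \frac{nd\lambda^2}{4\sigma^2},
\]
which is the claimed bound.

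I do not expect a real obstacle here: the whole argument is a short linear-algebra chain, with the only mildly nontrivial step being the $\sqrt{d}$ loss when converting between spectral and Frobenius norms of $M$. That factor is the source of the $d$ in the final bound, and it is unavoidable given only a spectral bound on $M$ (e.g., when all nonzero eigenvalues of $M$ have magnitude $\lambda$). The main care to take is being consistent with the convention that $x_j^{\otimes 2}$ denotes $\mathrm{vec}(x_j x_j^T)$, so that the identity $\mathrm{vec}(M) = \frac{1}{n} X\delta$ holds exactly, making the reduction to $\sigma_{\min}(X)$ clean.
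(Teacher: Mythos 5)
Your proof is correct and follows essentially the same route as the paper's: both identify $\frac{1}{n}X\delta$ with the vectorization of $M$ so that $\|X\delta\|_2 = n\|M\|_F$, lower bound this by $\sigma\|\delta\|_2$ via the smallest singular value of $X$, and upper bound $\|M\|_F$ by $\sqrt{d}\,\lambda$ using symmetry of $M$. No gaps; the only difference is presentational.
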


Combining the two lemmas, we know $f(W)$ is bounded when the point has almost positive Hessian, therefore every $\varepsilon$-second-order stationary point must be near-optimal. 



\subsection{Optimizing the two-layer neural net}
In this section, we show how to use PGD to train our two-layer neural network.



Given the property of the optimization landscape for $f(W)$, it is natural to directly apply PGD to find a second-order stationary point. However, this is not enough since the function $f$ does not have bounded smoothness constant and Hessian Lipschitz constant (its Lipschitz parameters depend on the norm of $W$), and without further constraints, PGD is not guaranteed to converge in polynomial time. In order to control the Lipschitz parameters, we note that these parameters are bounded when the norm of $W$ is bounded (see Lemma~\ref{lem:smoothness-lipschitz-hessian} in appendix). Therefore we add a small regularizer term to control the norm of $W$. More concretely, we optimize the following objective
\[g(W) = f(W) + \frac{\gamma}{2}||W||_F^2.\]
We want to use this regularizer term to show that: 1. the optimization landscape is preserved: for appropriate $\gamma$, any $\varepsilon$-second-order stationary point of $g(W)$ will still give a small $f(W)$; and 2. During the training process of the 2-layer neural network, the norm of $W$ is bounded, therefore the smoothness and Hessian Lipschitz parameters are bounded. Then, the proof of Theorem \ref{thm:main1:informal} just follows from the combination of Theorem \ref{thm:pgd-convergence} of PGD and the result of the geometric property.

The first step is simple as the regularizer only introduces a term $\gamma I$ to the Hessian, which increases all the eigenvalues by $\gamma$. Therefore any $\varepsilon$-second-order stationary point of $g(W)$ will also lead to the fact that $|\lambda_{\min}\nabla^2 f(W)|$ is small, and hence $f(W)$ is small by Lemma~\ref{lem:geo-property}. 



For the second step, note that in order to show the training process using PGD will not escape from the area $\{W:||W||_F^2 \le \Gamma\}$ with some $\Gamma$, it suffices to bound the function value $g(W)$ by $\gamma \Gamma/2$, which implies $\|W\|_F^2 \le \frac{2}{\gamma} g(W) \le \Gamma$. To bound the function value we use properties of PGD: for a gradient descent step, since the function is smooth in this region, the function value always decreases; for a perturbation step, the function value can increase, but cannot increase by too much. 
Using mathematical induction, we can show that the function value of $g$ is smaller than some fixed value(related to the random initialization but not related to time $t$) and will not escape the set $\{W:||W||_F^2 \le \Gamma\}$ for appropriate $\Gamma$.

Using PGD on function $g(W)$, we have the following main theorem for the 2-layer neural network.

\begin{restatable}[Main theorem for 2-layer NN]{theorem}{thmmainthmtwolayer}\label{thm:main-theorem-twolayer}
     Suppose the matrix $X = [x_1^{\otimes 2},\dots,x_n^{\otimes 2}]\in \R^{d^2 \times n}$ has full column rank and the smallest singular value is at least $\sigma$. Also assume that we have $||x_j||_2 \le B$ and $|y_j| \le Y$ for all $j \le n$. We choose our width of neural network $r\ge 2d+2$ and we choose $\rho = (6B^4\sqrt{2(f(0) + 1)})\left(nd/(\sigma^2\varepsilon)\right)^{1/4}$, $\gamma = \left(\sigma^2\varepsilon/nd\right)^{1/2}$, and  $\ell = \max\{(3B^4\frac{2(f(0) + 1)}{\gamma}+YB^2+\gamma),1\}$. Then there exists an absolute constant $c_{\text{max}}$ such that, for any $\delta > 0,\Delta \ge f(0) + 1$, and constant $c \le c_{\text{max}}$, $PGD(0,\ell,\rho,\varepsilon,c,\delta,\Delta)$ on $W$ will output an parameter $W^*$ such that with probability $1-\delta$, $f(W^*) \le \varepsilon$ when the algorithm terminates in the following number of iterations:
    \[O\left(\frac{B^8\ell(nd)^{5/2}(f(0)+1)^2}{\sigma^{5}\varepsilon^{5/2}}\log^4\left(\frac{Bnrd\ell\Delta(f(0)+1)}{\varepsilon^2\delta\sigma}\right)\right).\]
\end{restatable}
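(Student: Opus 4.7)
The plan is to apply Theorem~\ref{thm:pgd-convergence} to the regularized objective $g(W) = f(W) + (\gamma/2)\|W\|_F^2$ and then extract a function-value bound on $f$ at the output from the approximate second-order stationarity of $g$, using Lemma~\ref{lem:geo-property}. Two issues prevent direct application of PGD to $f$: (i) $f$ is a degree-four polynomial in $W$ and therefore has no uniform smoothness or Hessian-Lipschitz bound on $\R^{d\times r}$, and (ii) an approximate SOSP of $g$ only controls $\lambda_{\min}\nabla^2 g$, not $\lambda_{\min}\nabla^2 f$. The regularizer handles both: the additional $\gamma I$ in $\nabla^2 g$ is trivial to account for, and a sub-level-set argument on $g$ traps the iterates in a Frobenius ball where explicit Lipschitz bounds exist.

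First I would invoke the appendix bound (Lemma~\ref{lem:smoothness-lipschitz-hessian}) to get explicit smoothness $\ell$ and Hessian-Lipschitz $\rho$ constants for $g$ on the set $\mathcal{S}_\Gamma := \{W : \|W\|_F^2 \le \Gamma\}$, with $\Gamma := 2(f(0)+1)/\gamma$. A direct computation of $\nabla^2 f$ and its derivatives, using $\|x_j\| \le B$ and $|y_j| \le Y$, yields polynomial bounds that match the values chosen in the statement. Next I would prove by induction on the iteration index that $W_t \in \mathcal{S}_\Gamma$ throughout the PGD trajectory, by tracking $g(W_t)$. Gradient-descent steps monotonically decrease $g$ because the step size used by PGD is calibrated to $\ell$ (standard descent lemma), while the sporadic perturbation steps inject a vector of bounded norm that, by the choice of $\ell, \rho$, raises $g$ by only a controllable amount. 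Since PGD triggers only a logarithmic number of perturbations before termination, the cumulative increase can be bounded by $1$, so $g(W_t) \le f(0) + 1 = \gamma\Gamma/2$, which forces $\|W_t\|_F^2 \le 2g(W_t)/\gamma \le \Gamma$. This validates the use of the global constants $\ell, \rho$ throughout the run.

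Finally, once PGD outputs an $\varepsilon$-SOSP $W^*$ of $g$, the identity $\nabla^2 g = \nabla^2 f + \gamma I$ yields $\lambda_{\min}\nabla^2 f(W^*) \ge -\sqrt{\rho\varepsilon} - \gamma$. Applying Lemma~\ref{lem:geo-property} gives $f(W^*) \le nd(\sqrt{\rho\varepsilon}+\gamma)^2/(4\sigma^2)$, which reduces to $f(W^*) \le \varepsilon$ after the chosen calibration of $\gamma = (\sigma^2\varepsilon/(nd))^{1/2}$ and $\rho$. Plugging $\ell$, $\rho$, and initial sub-optimality $\Delta \ge f(0)+1$ into Theorem~\ref{thm:pgd-convergence} produces the claimed iteration count. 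The main technical obstacle is the confinement induction in the second step: the randomized perturbation phases can in principle push $W_t$ toward the boundary of $\mathcal{S}_\Gamma$, and to close the induction one must argue that along the actual trajectory (not merely in expectation) the total accumulated increase in $g$ remains below the budget. This hinges on a careful per-step quantitative estimate, which requires both that the perturbation radius is small relative to $\ell^{-1/2}$ and that the logarithmic bound on the number of perturbations from Theorem~\ref{thm:pgd-convergence} is used explicitly.
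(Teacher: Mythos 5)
Your overall architecture is the same as the paper's: regularize to $g(W)=f(W)+\frac{\gamma}{2}\|W\|_F^2$, use Lemma~\ref{lem:smoothness-lipschitz-hessian} on the sublevel-set-induced ball $\{W:\|W\|_F^2\le 2(f(0)+1)/\gamma\}$, run PGD on $g$, convert $\lambda_{\min}\nabla^2 g\ge-\sqrt{\rho\varepsilon}$ into $\lambda_{\min}\nabla^2 f\ge-\sqrt{\rho\varepsilon}-\gamma$, and finish with Lemma~\ref{lem:geo-property} plus a recalibrated target accuracy. That part is fine and matches the paper.

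The gap is in the confinement induction, which you correctly identify as the crux but then propose to close by the wrong mechanism. You argue that ``PGD triggers only a logarithmic number of perturbations,'' so the cumulative increase in $g$ can be budgeted below $1$. Neither half of this works. First, the number of perturbation phases in PGD is not logarithmic: the $\log^4$ factor in Theorem~\ref{thm:pgd-convergence} is not a count of perturbations; the number of phases is only bounded by roughly $\Delta_f/f_{\text{thres}}$, which is polynomial in $1/\varepsilon$ (and in $\Delta_f$). Second, summing worst-case per-perturbation increases across phases cannot give a fixed budget such as $1$ without invoking the decreases that the algorithm guarantees between perturbations, so ``cumulative increase'' is the wrong quantity to control. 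The argument that actually closes the induction (and is what the paper does) uses the structure of Algorithm~\ref{alg:pgd} itself: within a single perturbation phase the excursion above the pre-perturbation value $g(\tilde W_{t_{\text{noise}}})$ is at most $r\cdot g_{\text{thres}}+\frac{\ell}{2}r^2=\frac{3c\varepsilon^2}{2\chi^4\ell}\le 1$ (perturbation radius times gradient bound plus smoothness term), gradient steps never increase $g$, and at the end of each phase the stopping rule forces a dichotomy: either the algorithm terminates and returns the pre-perturbation iterate $\tilde W_{t_{\text{noise}}}$ (whose value is at most $g(W_{\text{ins}})$), or $g$ has dropped by at least $f_{\text{thres}}$ below the pre-perturbation value, so the value at the start of every subsequent phase is again at most $g(W_{\text{ins}})$. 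Hence increases never accumulate at all, and $g(W_t)\le g(W_{\text{ins}})+1$ holds along the whole trajectory, which is exactly what is needed to keep $\|W_t\|_F^2\le 2(f(0)+1)/\gamma$ and justify the constants $\ell,\rho$. Without replacing your counting argument by this phase-by-phase dichotomy, the induction does not close.
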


\section{Three-Layer Net for Fitting Larger Training Set}\label{sec:proof-sketch-random-feature}

In this section, we show how a three-layer neural net can fit a larger training set (Theorem \ref{thm:main2:informal}). The main limitation of the two-layer architecture in the previous section is that the activation functions are quadratic. No matter how many neurons the hidden layer has, the whole network only captures a quadratic function over the input, and cannot fit an arbitrary training set of size much larger than $d^2$. On the other hand, if one replaces the quadratic activation with other functions, it is known that even two-layer neural networks can have bad local minima\citep{safran2018spurious}.

To address this problem, the three-layer neural net in this section uses the first-layer as a random mapping of the input. The first layer is going to map inputs $x_i$'s into $z_i$'s of dimension $k$ (where $k = \Theta(\sqrt{n})$). If $z_i$'s satisfy the requirements of Theorem~\ref{thm:main-theorem-twolayer}, then we can use the same arguments as the previous section to show perturbed gradient descent can fit the training data.







We prove our main result in the smoothed analysis setting, which is a popular approach for going beyond worst-case. Given any worst-case input $\{x_1,x_2,...,x_n\}$, in the smoothed analysis framework, these inputs will first be slightly perturbed before given to the algorithm. More specifically, let $\bar x_j = x_j + \tilde x_j$, where $\tilde x_j\in \mathbb R^{d}$ is a random Gaussian vector following the distribution of $\cN(\mathbf 0, v \mathbf I)$. Here the amount of perturbation is controlled by the variance $v$. The final running time for our algorithm will depend inverse polynomially on $v$. Note that on the other hand, the network architecture and the number of neurons/parameters in each layer does not depend on $v$.

Let $\{z_1,z_2,...,z_n\}$ denote the output of the first layer with $(z_j)_i = (r_i^T \bar x_j)^p (j = 1,2,...,n)$, we first show that $\{z_j\}$'s satisfy the requirement of Theorem~\ref{thm:main-theorem-twolayer}: 

\begin{restatable}{lemma}{lemsmallestsingluarperturb}\label{lem:smallest-singular-perturb}
    Suppose $k \le O_p(d^p)$ and ${k+1 \choose 2} > n$, let $\bar{x}_j = x_j +\tilde{x}_j$ be the perturbed input in the smoothed analysis setting, where $\tilde{x}_j \sim \cN(\mathbf 0, v \mathbf I)$, let $\{z_1,z_2,...,z_n\}$ be the output of the first layer on the perturbed input ($(z_j)_i = (r_i^T \bar x_j)^p$). Let $Z\in \R^{k^2\times n}$ be the matrix whose $j$-th column is equal to $z_j^{\otimes 2}$, then with probability at least $1-\delta$, the smallest singular value of $Z$ is at least $\Omega_p(v^p\delta^{4p}/n^{2p+1/2}k^{4p})$.
\end{restatable}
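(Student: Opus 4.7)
The plan is to lower bound $\sigma_{\min}(Z)$ by a leave-one-out argument. Using the standard inequality
\[
\sigma_{\min}(Z) \;\ge\; \tfrac{1}{\sqrt n}\,\min_{j \in [n]} \mathrm{dist}\bigl(z_j^{\otimes 2},\, V_j\bigr), \qquad V_j := \mathrm{span}\{z_i^{\otimes 2}: i \ne j\},
\]
together with a union bound over $j$, it suffices to show that for each fixed $j$ the distance on the right is at least the target bound with probability $\ge 1 - \delta/n$ over the Gaussian perturbation $\tilde x_j$.

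Fix $j$ and condition on $R$ and on the other perturbations $\{\tilde x_i\}_{i \ne j}$, so that $V_j \subset \R^{k^2}$ becomes a deterministic subspace of dimension at most $n-1$. Because every column $z_i^{\otimes 2}$ (as a $k\times k$ matrix $z_iz_i^\top$) lies in the symmetric subspace $\mathrm{Sym}(\R^{k\times k})$ of dimension $\binom{k+1}{2} > n$, I can pick a unit symmetric matrix $U$ orthogonal to $V_j$ within this symmetric subspace. Then
\[
\mathrm{dist}(z_j^{\otimes 2}, V_j) \;\ge\; \bigl|\langle U,\, z_j z_j^\top\rangle\bigr| \;=\; |z_j^\top U z_j| \;=:\; |P(\tilde x_j)|,
\]
where $P(\tilde x_j) := \sum_{a,b} U_{ab}\bigl(r_a^\top(x_j + \tilde x_j)\bigr)^p \bigl(r_b^\top(x_j + \tilde x_j)\bigr)^p$ is a polynomial of degree $2p$ in the Gaussian vector $\tilde x_j \sim \mathcal{N}(0, vI)$.

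I would then invoke the Carbery--Wright anti-concentration inequality for degree-$2p$ Gaussian polynomials, which states $\Pr[|P(\tilde x_j)| \le \epsilon \,\|P\|_{L^2}] \le O(p)\,\epsilon^{1/(2p)}$. Choosing $\epsilon$ so that the failure probability is at most $\delta/n$ yields $|P(\tilde x_j)| \ge \Omega_p\bigl((\delta/n)^{2p}\bigr)\,\|P\|_{L^2}$ with the desired probability. After multiplying by $1/\sqrt n$ from the leave-one-out reduction, the full claim reduces to a \emph{deterministic} lower bound on $\|P\|_{L^2}$ in terms of $\|U\|_F = 1$.

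The hard step is this deterministic bound $\|P\|_{L^2} \ge \Omega_p(v^p/k^{O(p)})$, uniformly in unit symmetric $U$. My plan is to isolate the leading (degree-$2p$-in-$\tilde x_j$) part of $P$, namely $\zeta^\top U \zeta$ with $\zeta_i := (r_i^\top \tilde x_j)^p$; this is the component that produces the dominant $v^{2p}$-scaling of $\E[P^2]$. It therefore suffices to show that the bilinear form $U \mapsto \E_{\tilde x}\bigl[(\zeta^\top U \zeta)^2\bigr]$ on $\mathrm{Sym}(\R^{k\times k})$ has smallest eigenvalue $\Omega_p(v^{2p}/k^{O(p)})$, equivalently that the lifted moment operator $\E[\zeta\zeta^\top \otimes \zeta\zeta^\top]$ is well-conditioned on the symmetric subspace. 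Here the hypothesis $k \le O_p(d^p)$ is crucial: composing $\tilde x \mapsto (r_i^\top \tilde x)^p$ with the degree-$p$ Veronese embedding (whose image lives in $\R^{\Theta_p(d^p)}$) turns $\tilde x \mapsto \zeta(\tilde x)$ into a \emph{linear} map of full column rank $k$, after which standard moment estimates give the required conditioning. Combining the three contributions --- $1/\sqrt n$ from leave-one-out, $(\delta/n)^{2p}$ from Carbery--Wright, and $v^p/k^{O(p)}$ from the moment estimate --- produces the claimed bound $\Omega_p\bigl(v^p \delta^{4p}/(n^{2p+1/2}k^{4p})\bigr)$, with the extra $\delta^{2p}$ and $k^{O(p)}$ factors accounted for by a slightly weaker form of Carbery--Wright or by an additional anti-concentration step needed to control the moment matrix for the specific random $R$.
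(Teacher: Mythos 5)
Your column-side argument (leave-one-out over $j$, a unit symmetric witness $U$ orthogonal to $V_j$ using $\binom{k+1}{2}>n$, Carbery--Wright in $\tilde x_j$ applied to the degree-$2p$ polynomial $z_j^\top U z_j$, and a variance bound via the top homogeneous part) is sound and mirrors the second half of the paper's proof of Lemma~\ref{lem:singular-lb-withperturb}. The gap is in what you call the ``hard step.'' Lower bounding $\Var_{\tilde x_j}[P]$ uniformly over unit symmetric $U$ is, after the Hermite-expansion reduction (the paper's Lemma~\ref{lem:variance}), exactly the statement that the fully symmetrized tensors $\mbox{Proj}_{X_d^{2p}}\bigl(r_a^{\otimes p}\otimes r_b^{\otimes p}\bigr)$, $a\le b$ --- equivalently the polynomials $x\mapsto (r_a^\top x)^p(r_b^\top x)^p$ --- are \emph{robustly} linearly independent, i.e.\ that $\mbox{Proj}_{X_d^{2p}}(Q\otimes Q)$ has a quantitative least singular value on symmetric coefficient matrices. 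Your justification, that $k\le O_p(d^p)$ makes $Q=[r_1^{\otimes p},\dots,r_k^{\otimes p}]^\top$ full rank ``after which standard moment estimates give the required conditioning,'' does not deliver this: $\tilde x^{\otimes p}$ is not an isotropic vector in $\R^{d^p}$, so well-conditioning of $Q$ says nothing about the fourth-moment operator of $\zeta=Q\tilde x^{\otimes p}$ on the symmetric subspace. Concretely, linear independence of the $r_a^{\otimes p}$ does not imply independence of the symmetrized pair products, because the symmetrization over all $2p$ indices (which mixes the two blocks) has a large kernel; whether the span of the $\binom{k+1}{2}$ products avoids that kernel, and by how much, is precisely the crux, and it is a statement about the random configuration of the $r_i$'s, not about $\mathrm{rank}(Q)$.

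This is where the paper spends its main effort and where the hypothesis $k\le O_p(d^p)$ actually enters: not to make $Q$ full rank, but to ensure the dimension count $k\,D_d^p\binom{2p}{p} < D_d^{2p}$, so that for each row $(a,b)$ one can condition on the other $r_l$'s, note that every competing row lies in a subspace of dimension at most $k\,D_d^p\binom{2p}{p}$, and apply a second anti-concentration argument (Lemma~\ref{lem:projection2}) in the Gaussian pair $(r_a,r_b)$ to bound the leave-one-out distance of the rows of $\mbox{Proj}_{X_d^{2p}}(Q\otimes Q)$; this step is also the source of the $\delta^{2p}/k^{4p}$ factor in the final bound, which you instead attribute vaguely to ``a slightly weaker form of Carbery--Wright or an additional anti-concentration step.'' Until you carry out this random-matrix half of the argument (or replace it with a genuine alternative), the proposal is incomplete at its central point.
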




This lemma shows that the output of the first layer ($z_j$'s) satisfies the requirements of Theorem~\ref{thm:main-theorem-twolayer}. With this lemma, we can prove the main theorem of this section:

\begin{restatable}[Main theorem for 3-layer NN]{theorem}{thmrandomfeaturewithperturb}\label{thm:main-theorem-random-feature-with-perturbation}
Suppose the original inputs satisfy $\|x_j\|_2\le 1, |y_j|\le 1$, inputs $\bar x_j = x_j+\tilde{x}_j$ are perturbed by $\tilde{x}_j\sim \cN(0,vI)$, with probability $1-\delta$ over the random initialization, for $k = 2\lceil \sqrt{n} \rceil$, perturbed gradient descent on the second layer weights achieves a loss $f(W^*) \le \epsilon$ in $O_p(1)\cdot \frac{(n/v)^{O(p)}}{\epsilon^{5/2}} \log^4(n/\epsilon)$ iterations.
\end{restatable}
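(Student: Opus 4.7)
The plan is to treat the trainable part of the three-layer net as a two-layer quadratic network whose ``inputs'' are the random features $z_j = ((r_1^\top \bar x_j)^p,\dots,(r_k^\top \bar x_j)^p)^\top$, and then to invoke Theorem~\ref{thm:main-theorem-twolayer} with the $z_j$'s in place of the $x_j$'s. The four pieces below correspond to (i) obtaining the required smallest-singular-value bound from the random-feature lemma, (ii) bounding the feature norm $\|z_j\|_2$, (iii) applying the two-layer theorem, and (iv) consolidating the iteration count.

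Step 1 (smallest singular value of $Z$). Choose $k=2\lceil\sqrt{n}\rceil$ so that $\binom{k+1}{2}>n$; under the hypothesis $n\le d^p$ this also gives $k\le O_p(d^p)$. Apply Lemma~\ref{lem:smallest-singular-perturb} with failure probability $\delta/3$ to conclude that, with high probability over $R$ and the smoothing noise,
\[\sigma \;:=\; \sigma_{\min}(Z) \;\ge\; \Omega_p\!\left(\frac{v^{p}\delta^{4p}}{n^{2p+1/2}k^{4p}}\right) \;=\; \Omega_p\!\bigl(v^{p}/n^{O(p)}\bigr).\]

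Step 2 (upper bounds on $B$, $Y$ and $f(0)$). Using a standard sub-Gaussian tail bound on the inner products $r_i^\top \bar x_j$ (after an appropriate scaling of the initialization of $R$) together with $\|\bar x_j\|_2\le 1+O(\sqrt{vd\log(n/\delta)})$, we get with probability $1-\delta/3$ that $\max_{i,j}|r_i^\top \bar x_j|^{p}\le \mathrm{poly}_p(v,\log(n/\delta))$, hence
\[B \;:=\; \max_{j}\|z_j\|_2 \;\le\; \sqrt{k}\cdot \mathrm{poly}_p(v,\log(n/\delta)).\]
Since $|y_j|\le 1$ we may take $Y=1$, and $f(0) = \frac{1}{4n}\sum_j y_j^2\le 1/4$.

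Step 3 (invoke the two-layer theorem). Condition on the two high-probability events above. The middle-plus-top subnetwork is exactly the two-layer quadratic network of Section~\ref{sec:proof-sketch-twolayer}, with input vectors $z_j\in\R^{k}$, input dimension $k$, width $r=2k+2$, input bound $B$, label bound $Y$, and smallest singular value $\sigma$ (all as above). Invoke Theorem~\ref{thm:main-theorem-twolayer} with failure probability $\delta/3$ and the prescribed choices of $\ell,\rho,\gamma$: PGD on $W$ returns a point $W^{\ast}$ with $f(W^{\ast})\le\epsilon$. A union bound over Steps~1--3 gives total failure probability at most $\delta$.

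Step 4 (iteration count). Substituting the bounds of Steps~1--2 into the iteration count of Theorem~\ref{thm:main-theorem-twolayer},
\[O\!\left(\frac{B^{8}\ell(nk)^{5/2}(f(0)+1)^{2}}{\sigma^{5}\epsilon^{5/2}}\,\log^{4}\!\Bigl(\tfrac{Bnrk\ell\Delta(f(0)+1)}{\epsilon^{2}\delta\,\sigma}\Bigr)\right),\]
the factor $\sigma^{-5}$ contributes $(n/v)^{O(p)}$, the factor $B^{8}$ contributes $k^{4}\cdot\mathrm{poly}_p(v,\log(n/\delta))$, and the smoothness/Hessian-Lipschitz constants $\ell,\rho$ (Lemma~\ref{lem:smoothness-lipschitz-hessian}) are themselves polynomial in $B$ and therefore in $(n/v)^{O(p)}$; the remaining $(nk)^{5/2}$ and $(f(0)+1)^{2}$ factors are at worst $\mathrm{poly}(n)$. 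Collecting everything yields the claimed bound $O_p(1)\cdot(n/v)^{O(p)}\epsilon^{-5/2}\log^{4}(n/\epsilon)$.

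\paragraph{Main obstacle.} The conceptual reduction to the two-layer result is clean, so the work is almost entirely bookkeeping: one has to check that every dependence on $d$ (entering through $\|\bar x_j\|$ after smoothing and through $\|r_i\|$) is absorbed by the $p$-th-power tail of $r_i^\top \bar x_j$ into polylogarithmic factors, and that the negative power $\sigma^{-5}$ and the positive power $B^{8}$ together compile into the single $(n/v)^{O(p)}$ factor promised by the theorem. A secondary subtlety is the scaling of the first-layer initialization $R$, which must be consistent with the hypotheses of Lemma~\ref{lem:smallest-singular-perturb} while simultaneously producing the sub-Gaussian tail used in Step~2.
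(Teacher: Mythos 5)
Your proposal is correct and follows essentially the same route as the paper: invoke the smoothed-analysis singular-value lemma for $Z=(Q\otimes Q)\bar X$, bound the feature norms $\|z_j\|_2$ (the paper's Lemma~\ref{lem:norm-upperbound-perturb} plays the role of your Step~2), and then feed $\sigma$, $B$, $Y$, $f(0)$ into Theorem~\ref{thm:main-theorem-twolayer} and compile the iteration count. The only cosmetic difference is that you bound $|r_i^\top\bar x_j|$ directly by a sub-Gaussian tail (with an unnecessary remark about rescaling $R$) rather than bounding $\|r_i\|$ and $\|\bar x_j\|$ entrywise as the paper does, which is immaterial.
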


Using different tools, we can also prove a similar result without the smoothed analysis setting:

\begin{restatable}{theorem}{thmdeterministic}\label{thm:deterministic} Suppose the matrix $X = [x_1^{2p},...,x_n^{2p}]\in \R^{d^{2p}\times n}$ has full column rank, and smallest singular value at least $\sigma$. Choose $k = O_p(d^p)$, with high probability perturbed gradient descent on the second layer weights achieves a loss $f(W^*) \le \epsilon$ in $O_p(1)\cdot \frac{(n)^{O(p)}}{\sigma^5\epsilon^{5/2}} \log^4(n/\epsilon)$ iterations.
\end{restatable}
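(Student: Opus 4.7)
The plan is to mirror the proof of Theorem~\ref{thm:main-theorem-random-feature-with-perturbation}, replacing the smoothed-analysis guarantee of Lemma~\ref{lem:smallest-singular-perturb} with a deterministic counterpart that exploits the assumed lower bound $\sigma_{\min}(X) \ge \sigma$. Conditioning on the random first-layer weights $R$, the representations $z_1,\dots,z_n \in \R^k$ are fixed and the problem reduces to running perturbed gradient descent on the top two layers. By Theorem~\ref{thm:main-theorem-twolayer}, it suffices to establish a polynomial lower bound on $\sigma_{\min}(Z)$, where $Z = [z_1^{\otimes 2},\dots,z_n^{\otimes 2}] \in \R^{k^2\times n}$; plugging this bound into Theorem~\ref{thm:main-theorem-twolayer} (with its ``$d$'' parameter replaced by $k = O_p(d^p)$ and its ``$\sigma$'' replaced by our lower bound on $\sigma_{\min}(Z)$) then yields the stated iteration count.

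The key is the factorization $Z = (P \otimes P)^\top X$, where $P := [r_1^{\otimes p},\dots,r_k^{\otimes p}] \in \R^{d^p \times k}$. This follows from the identity $(r_i^\top x_j)^p = \langle r_i^{\otimes p}, x_j^{\otimes p}\rangle$, which gives $z_j = P^\top x_j^{\otimes p}$ and hence $z_j^{\otimes 2} = (P\otimes P)^\top x_j^{\otimes 2p}$. Every column $x_j^{\otimes 2p}$ lies in $V_{\mathrm{sym}} \otimes V_{\mathrm{sym}} \subseteq \R^{d^{2p}}$, where $V_{\mathrm{sym}} \subseteq \R^{d^p}$ is the symmetric $p$-tensor subspace of dimension $D_p := \binom{d+p-1}{p} = O_p(d^p)$. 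Since restriction commutes with Kronecker product and the smallest singular values multiply under Kronecker product, the smallest singular value of $(P\otimes P)^\top$ restricted to $V_{\mathrm{sym}} \otimes V_{\mathrm{sym}}$ equals $\sigma_{\min}(P^\top|_{V_{\mathrm{sym}}})^2$. Combined with the hypothesis $\sigma_{\min}(X) \ge \sigma$, this gives $\sigma_{\min}(Z) \ge \sigma \cdot \sigma_{\min}(P^\top|_{V_{\mathrm{sym}}})^2$, so only $\sigma_{\min}(P^\top|_{V_{\mathrm{sym}}})$ remains to be lower bounded.

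The main technical step (replacing Lemma~\ref{lem:smallest-singular-perturb}) is to show: for $k = C_p \cdot D_p$ with a sufficiently large $p$-dependent constant $C_p$ and $r_i \sim \cN(0,I)$ iid, with probability $1-\delta$ one has $\sigma_{\min}(P^\top|_{V_{\mathrm{sym}}}) \ge 1/\mathrm{poly}_p(d, 1/\delta)$. The expectation $\Sigma_p := \E\bigl[r^{\otimes p}(r^{\otimes p})^\top\bigr]\big|_{V_{\mathrm{sym}}}$ commutes with the $O(d)$-action, so by Schur's lemma it is block-scalar on each $O(d)$-irreducible component (harmonic polynomials of degrees $p,p-2,\dots$), and a direct moment computation shows its smallest eigenvalue is a positive constant $c_p$ depending only on $p$ (for $p=2$ the eigenvalues are $2$ on the traceless part and $d+2$ on the trace direction). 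One then needs to show that $\tfrac{1}{k}\sum_i r_i^{\otimes p}(r_i^{\otimes p})^\top$ stays well-conditioned on $V_{\mathrm{sym}}$. The hard part, which is precisely where the deterministic proof differs most from the smoothed one, is that the random rank-one operators $r_i^{\otimes p}(r_i^{\otimes p})^\top$ have polynomial tails of order $\|r\|^{2p} \sim d^p$ and highly dependent entries, so off-the-shelf matrix Bernstein does not immediately give the sample size $O_p(d^p)$. The two natural routes are: (i) truncate the low-probability tail event $\{\|r_i\| \gg \sqrt{d}\cdot \mathrm{polylog}\}$ and apply matrix Bernstein to the truncated variables, absorbing the loss in the $\mathrm{poly}_p$ factors; or (ii) bypass full matrix concentration by applying the Carbery--Wright polynomial anti-concentration inequality to $\det\bigl(P^\top P|_{V_{\mathrm{sym}}}\bigr)$, viewed as a polynomial of degree $2pD_p$ in the entries of $r_1,\dots,r_k$, to directly rule out near-degenerate Gram matrices. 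Either route delivers the polynomial lower bound on $\sigma_{\min}(P^\top|_{V_{\mathrm{sym}}})$ needed, and the remaining smoothness and initialization bookkeeping (controlling $\|z_j\|$ via $\|r_i\|$ and $\|x_j\|\le 1$, and verifying the regularizer argument of Section~\ref{sec:proof-sketch-twolayer}) goes through unchanged.
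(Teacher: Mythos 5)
Your reduction is exactly the paper's: writing $Z=(P\otimes P)^{\top}X$, observing that every column $x_j^{\otimes 2p}$ lies in $V_{\mathrm{sym}}\otimes V_{\mathrm{sym}}$, bounding $\sigma_{\min}(Z)\ge\sigma_{\min}(P^{\top}|_{V_{\mathrm{sym}}})^2\,\sigma_{\min}(X)$, and then invoking Theorem~\ref{thm:main-theorem-twolayer} is precisely Lemma~\ref{lem:singular-lb-noperturb} together with the norm bound of Lemma~\ref{lem:norm-upperbound}. The gap is in the one step you yourself flag as hard: neither of your two routes for lower bounding $\sigma_{\min}(P^{\top}|_{V_{\mathrm{sym}}})$ at $k=O_p(d^p)$ actually delivers the theorem. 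Route (ii) fails quantitatively: $\det(P^{\top}P|_{V_{\mathrm{sym}}})$ is a polynomial of degree $q=2pD_d^p=\Theta_p(d^p)$ in the Gaussian entries, so Carbery--Wright (Proposition~\ref{prop:anti-concentration}) only gives $\Pr\{|\det|\le t\}\lesssim q\,(t/\sqrt{\Var})^{1/q}$; to make this probability small you must take $t$ exponentially small in $d^p$, and converting a determinant bound into a $\lambda_{\min}$ bound costs a further factor $\lambda_{\max}^{D_d^p-1}$, so the resulting lower bound on $\sigma_{\min}(Z)$ --- and hence the iteration count, which scales like $\sigma_{\min}(Z)^{-5}$ through Theorem~\ref{thm:main-theorem-twolayer} --- is exponential in $d^p$, not $n^{O(p)}$. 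Route (i) is sound in spirit (and your computation that $\lambda_{\min}\bigl(\E[r^{\otimes p}(r^{\otimes p})^{\top}]|_{V_{\mathrm{sym}}}\bigr)\ge 1/p!$ is correct), but after truncating at $\|r_i\|\lesssim\sqrt{d}$ each summand has operator norm $\Theta_p(d^p)$ while the ambient dimension is $D_d^p=\Theta_p(d^p)$, so matrix Chernoff/Bernstein requires $k\gtrsim_p d^p\log d$ for any high-probability conclusion; at $k=O_p(d^p)$ the tail bound is vacuous, so the loss is not absorbed ``in the $\mathrm{poly}_p$ factors'' --- it is a logarithmic oversampling that the statement $k=O_p(d^p)$ does not permit.

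The paper closes this step differently (Lemmas~\ref{prop1_L5}--\ref{prop3_L5} and \ref{lem:leastSVQ}): take an $\varepsilon$-net of the unit symmetric tensors (Lemma~\ref{lem:eps-net}); for each fixed net point $c$, the degree-$p$ polynomial $\langle c,r^{\otimes p}\rangle$ has variance at least $\|c\|_{\mathrm{rv}}^2\ge 1/p!$, so Carbery--Wright makes each row small with probability $O(p)\eta^{1/p}$, and independence of the $k$ rows drives the per-point failure probability down to $(O(p)\eta^{1/p})^{k}$, which beats the net cardinality $(1+2\sqrt{p!}/\varepsilon)^{D_d^p}$ already at $k=\alpha pD_d^p$ for an inverse-polynomial choice of $\eta$; a second net argument supplies the uniform upper bound on $\|Qc\|$ needed to pass from the net to the whole sphere. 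The price is a weaker, $k^{-\Theta_p(1)}$, bound on $\sigma_{\min}(Q|_{V_{\mathrm{sym}}})$, which is exactly what the $n^{O(p)}$ slack in the theorem absorbs --- this trade-off (constant-factor oversampling in $k$ in exchange for a polynomially small rather than constant singular value) is the idea your proposal is missing. If you want to keep your covariance viewpoint, a small-ball/lower-isometry argument in the style of Koltchinskii--Mendelson, with Carbery--Wright playing the role of the small-ball condition, would also give $k\propto D_d^p$ without logarithms; but as written, the key replacement for Lemma~\ref{lem:smallest-singular-perturb} is not established.
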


When the number of samples $n$ is smaller than $d^{2p}/(2p)!$, one can choose $k = O_p(d^p)$, in this regime the result of Theorem~\ref{thm:deterministic} is close to Theorem~\ref{thm:main-theorem-random-feature-with-perturbation}. However, if $n$ is just larger, say $n = d^{2p}$, one may need to choose $k = O_p(d^{p+1})$, which gives sub-optimal number of neurons and parameters.


\section{Experiments}
In this section, we validate our theory using experiments. Detailed parameters of the experiments as well as more result are deferred to Section~\ref{sec:more-experiments} in Appendix. 

\paragraph{Small Synthetic Example} We first run gradient descent on a small synthetic data-set, which fits into the setting of Theorem~\ref{thm:main-theorem-twolayer}. 
Our training set, including the samples and the labels, are generated from a fixed normalized uniform distribution(random sample from a hypercube and then normalized to have norm 1). As shown in Figure \ref{fig:random-sample}, simple gradient descent can already memorize the training set. 

\begin{figure}[ht!]
    \centering
    \includegraphics[width=2.5in]{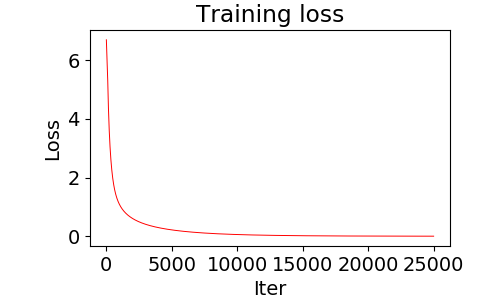}
    \caption{Training loss for random sample experiment}
    \label{fig:random-sample}
\end{figure}

\paragraph{MNIST Experiment} We also show how our architectures (both two-layer and three-layer) can be used to memorize MNIST. For MNIST, we use a squared loss between the network's prediction and the true label (which is an integer in $\{0,1,...,9\}$). For the two-layer experiment, we use the original MNIST dataset, with a small Gaussian perturbation added to the data to make sure the condition in Theorem~\ref{thm:main-theorem-twolayer} is satisfied. For the three-layer experiment, we use PCA to project MNIST images to 100 dimensions (so the two-layer architecture will no longer be able to memorize the training set). See Figure~\ref{fig:original-label} for the results. In this part, we use ADAM as the optimizer to improve convergence speed, but as we discussed earlier, our main result is on the optimization landscape and the algorithm is flexible.


\begin{figure}[!th]
    \centering
    \subfloat[Two-layer network with perturbation on input]{\includegraphics[width=2.5in]{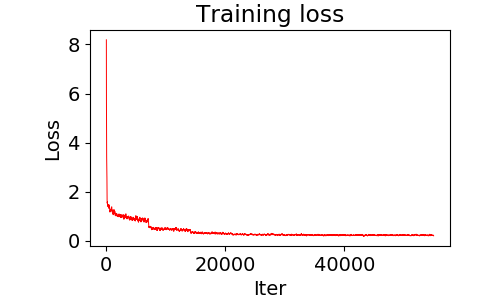}}
    \subfloat[Three-layer network on top 100 PCA directions]{\includegraphics[width=2.5in]{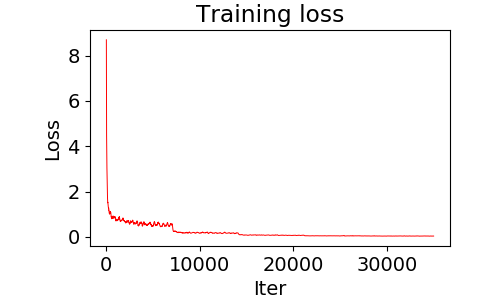}}
    \caption{MNIST with original label}
    \label{fig:original-label}
\end{figure}

\paragraph{MNIST with random label} We further test our results on MNIST with random labels to verify that our result does not use any potential structure in the MNIST datasets. The setting is exactly the same as before. As shown in Figure \ref{fig:random-label}, the training loss can also converge.

\begin{figure}[!th]
    \centering
    \subfloat[Two-layer network with perturbation on input]{\includegraphics[width=2.5in]{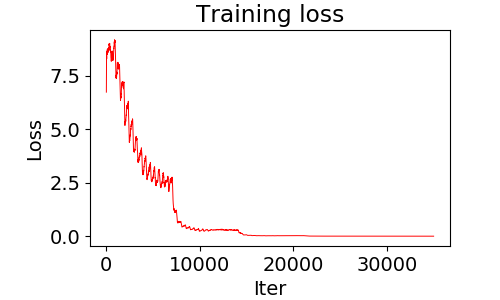}}
    \subfloat[Three-layer network on top 100 PCA directions]{\includegraphics[width=2.5in]{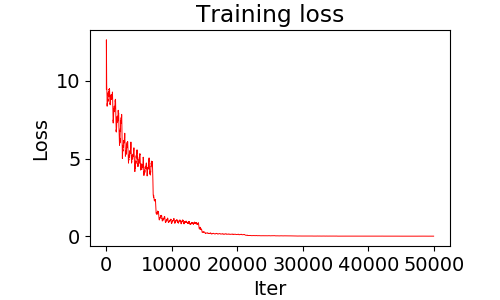}}
    \caption{MNIST with random label}
    \label{fig:random-label}
\end{figure}

\section{Conclusion} 
In this paper, we showed that even a mildly overparametrized neural network can be trained to memorize the training set efficiently. The number of neurons and parameters in our results are tight (up to constant factors) and matches the bounds in \cite{yun2018small}. There are several immediate open problems, including generalizing our result to more standard activation functions and providing generalization guarantees. More importantly, we believe that the mildly overparametrized regime is more realistic and interesting compared to the highly overparametrized regime. We hope this work would serve as a first step towards understanding the mildly overparametrized regime for deep learning.

\newpage

\bibliography{ref}

\newpage

\appendix

\section{More Experiments and Detailed Experiment Setup}
\label{sec:more-experiments}

\subsection{Experiments setup}
In this section, we introduce the experiment setup in detail.

\paragraph{Small Synthetic Example} We generate the dataset in the following way: We first set up a random matrices $X\in\R^{N\times d}$(samples), where $N$ is the number of samples, $d$ is the input dimension and $Y\in\R^{N}$(labels). Each entry in $X$ or $Y$ follows a uniform distribution with support $[-1,1]$. Each entry is independent from others. Then we normalize the dataset $X$ such that each row in $X$ has norm $1$, denote the normalized dataset as $\hat X = [\hat x_1,\dots,\hat x_N]^T$. Then we compute the smallest singular value for the matrix $[\hat x_1^{\otimes 2},\dots,\hat x_N^{\otimes 2}]^T$, and we feed the normalized dataset $\hat X$ into the two-layer network(Section \ref{sec:prelim_architecture}) with $r$ hidden neurons. We select all the parameters as shown in Theorem \ref{thm:main-theorem-twolayer}, and plot the function value for $f(\cdot)$.

In our experiment for the small artificial random dataset, we choose $N = 300,d = 100$, and $r = 300$.

\paragraph{MNIST experiments}

For MNIST, we use a squared loss between the network's prediction and the true label (which is an integer in $\{0,1,...,9\}$). 

For the first two-layer network structure, we first normalize the samples in MNIST dataset to have norm 1. Then we set up a two-layer network with quadratic activation with $r = 3000$ hidden neurons (note that although our theory suggests to choose $r = 2d+2$, having a larger $r$ increases the number of decreasing directions and helps optimization algorithms in practice). For these experiments, we use Adam optimizer\citep{kingma2014adam} with batch size 128, initial learning rate 0.003, and decay the learning rate by a factor of 0.3 every 15 epochs (we find that the learning rate-decay is crucial for getting high accuracy). 

We run the two-layer network in two settings, one for the original MNIST data, and one for the MNIST data with a small Gaussian noise (0.01 standard deviation per coordinate). The perturbation is added in order for the conditions in Theorem~\ref{thm:main-theorem-twolayer} to hold.


For the three-layer network structure, we first normalize the samples in MNIST dataset with norm 1. Then we do the PCA to project it into a 100-dimension subspace. We use $D = [x_1,\dots,x_n]$ to denote this dataset after PCA. Note that the original 2-layer network may not apply to this setting, since now the matrix $X = [x_1^{\otimes 2},\dots, x_n^{\otimes 2}]$ does not have full column rank($60000 > 100^2)$. We then add a small Gaussian perturbation to $\tilde D\sim \cN(0,\sigma_1^2)$ to the sample matrix $D$ and denote the perturbed matrix $\bar D = [\bar x_1,\dots,\bar x_n]$. We then randomly select a matrix $Q\sim \cN(0,\sigma_2^2)^{k\times d}$ and compute the random feature $z_j = (Q\bar x_j)^2$, where $(\cdot)^2$ denote the element-wise square. Then we feed this sample into the 2-layer neural network with hidden neuron $d$. Note that this is equivalent to our three-layer network structure in Section \ref{sec:prelim_architecture}. In our experiments, $k = 750, r = 3000, \sigma_1 = 0.05, \sigma_2 = 0.15$.

\paragraph{MNIST with random labels} These experiments have exactly the same set-up as the original MNIST experiments, except that the labels are replaced by a random number in \{0,1,2,...,9\}.




\subsection{Experiment Results}
In this section, we give detailed experiment results with bigger plots. For all the training loss graphs, we record the training loss for every 5 iterations. Then for the $i$th recorded loss, we average the recorded loss from $i-19$th to $i$th and set it as the average loss at $(5i)$th iteration. Then we take the logarithm on the loss and generated the training loss graphs.

\paragraph{Small Synthetic Example}


\begin{figure}[h!]
    \centering
    \includegraphics[width=5in]{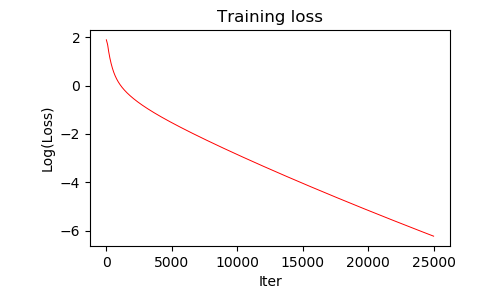}
    \caption{Synthetic Example}
    \label{fig:random-sample-large}
\end{figure}

As we can see in Figure~\ref{fig:random-sample-large} the loss converges to 0 quickly.

\paragraph{MNIST experiments with original labels}

\begin{figure}[ht!]
    \centering
    \includegraphics[width=5in]{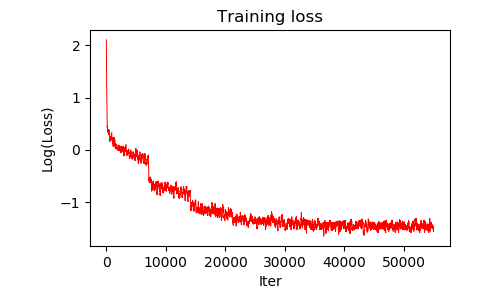}
    \caption{Two-layer network on original MNIST}
    \label{fig:2-layer-original-large}
\end{figure}

\begin{figure}[ht!]
    \centering
    \includegraphics[width=5in]{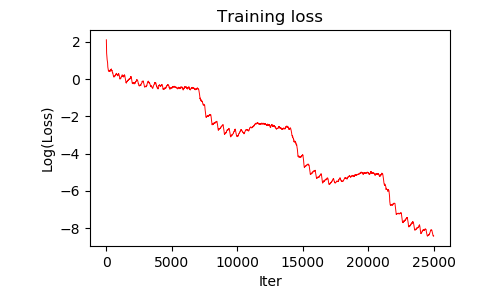}
    \caption{Two-layer network on MNIST, with noise std 0.01}
    \label{fig:2-layer-perturbed-large}
\end{figure}

First we compare Figure~\ref{fig:2-layer-original-large} and Figure~\ref{fig:2-layer-perturbed-large}. In Figure~\ref{fig:2-layer-original-large}, we optimize the two-layer architecture with original input/labels. Here the loss decreases to a small value ($\sim 0.1$), but the decrease becomes slower afterwards. This is likely because for the matrix $X$ defined in Theorem~\ref{thm:main-theorem-twolayer}, some of the directions have very small singular values, which makes it much harder to correctly optimize for those directions. In Figure~\ref{fig:2-layer-perturbed-large}, after adding the perturbation the smallest singular value of the matrix $X$ becomes better, and as we can see the loss decreases geometrically to a very small value ($<1e-5$).

A surprising phenomenon is that even though we offer no generalization guarantees, the network trained as in Figure~\ref{fig:2-layer-original-large} has an MSE error of 1.21 when tested on test set, which is much better than a random guess (recall the range of labels is 0 to 9). This is likely due to some implicit regularization effect \citep{gunasekar2017implicit, li2018algorithmic}.

For three-layer networks, in Figure~\ref{fig:3-layer-moise-large} we can see even though we are using only the top 100 PCA directions, the three-layer architecture can still drive the training error to a very low level.


\begin{figure}[ht!]
    \centering
    \includegraphics[width=5in]{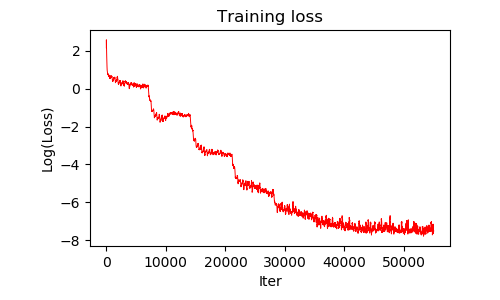}
    \caption{Three-layer network with top 100 PCA directions of MNIST, 0.05 noise per direction}
    \label{fig:3-layer-moise-large}
\end{figure}

\paragraph{MNIST with random label}

When we try to fit random labels, the original MNIST input does not work well. We believe this is again because there are many small singular values for the matrix $X$ in Theorem~\ref{thm:main-theorem-twolayer}, so the data does not have enough effective dimensions fit random labels. The reason that it was still able to fit the original labels to some extent (as in Figure~\ref{fig:2-layer-original-large}) is likely because the original label is correlated with some features of the input, so the original label is less likely to fall into the subspace with smaller singular values. Similar phenomenon was found in \cite{arora2019fine}.

Once we add perturbation, for two-layer networks we can fit the random label to very high accuracy, as in Figure~\ref{fig:2-layer-perturbed-random-sample-large}. The performance for three-layer network in Figure~\ref{fig:3-layer-random-label-large} is also similar to Figure~\ref{fig:3-layer-moise-large}.

\begin{figure}[ht!]
    \centering
    \includegraphics[width=5in]{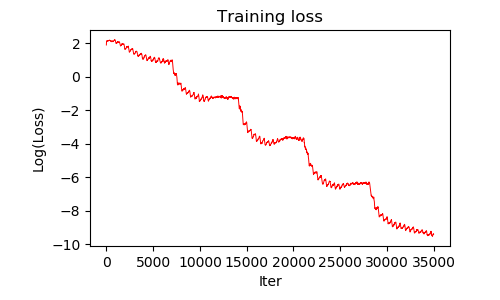}
    \caption{Two-layer network on MNIST, with noise std 0.01, random labels}
    \label{fig:2-layer-perturbed-random-sample-large}
\end{figure}

\begin{figure}[ht!]
    \centering
    \includegraphics[width=5in]{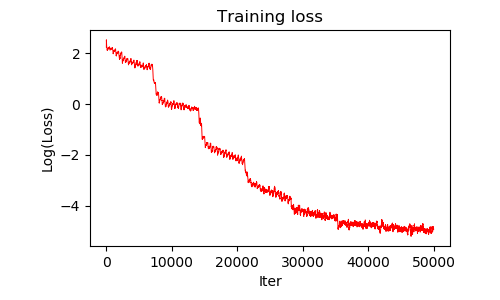}
    \caption{Three-layer network with top 100 PCA directions of MNIST, 0.05 noise per direction, random labels}
    \label{fig:3-layer-random-label-large}
\end{figure}




\section{Detailed Description of Perturbed Gradient Descent}
\label{sec:algdetail}
In this section we give the pseudo-code of the Perturbed Gradient Descent algorithm as in \cite{jin2017escape}, see Algorithm~\ref{alg:pgd}. The algorithm is quite simple: it just runs the standard gradient descent, except if the loss has not decreased for a long enough time, it adds a perturbation. The perturbation allows the algorithm to escape saddle points. Note that we only use PGD algorithm to find a second-order stationary point. Many other algorithms, including stochastic gradient descent and accelerated gradient descent, are also known to find a second-order stationary point efficiently. All these algorithms can be used for our analysis.

\begin{algorithm}[!ht]
    \caption{Perturbed Gradient Descent}
    \label{alg:pgd}
    \begin{algorithmic}[1]
        \Require $x_0,\ell,\rho,\varepsilon,c,\delta,\Delta_f$.
        \State $\chi\leftarrow 3\max\left\{\log\left(\frac{d\ell \Delta_f}{c\varepsilon^2 \delta}\right),4\right\},\eta\leftarrow\frac{c}{\ell},r\leftarrow\frac{\sqrt{c}\varepsilon}{\chi^2\ell},g_{\text{thres}}\leftarrow \frac{\sqrt{c}\varepsilon}{\chi^2},f_{\text{thres}}\leftarrow\frac{c\sqrt{\varepsilon^3}}{\chi^3\sqrt{\rho}},t_{\text{thres}}\leftarrow \frac{\chi\ell}{c^2\sqrt{\rho\varepsilon}}$
        \State $t_{\text{noise}} \leftarrow -t_{\text{thres}} - 1$
        \For{$t=0,1,\dots$}
            \If{$||\nabla f(x_t)||\le g_{\text{thres}}$ and $t - t_{\text{noise}} > t_{\text{thres}}$}
                \State $\tilde x_t \leftarrow x_t, t_{\text{noise}}\leftarrow t$
                \State $x_t \leftarrow \tilde x_t + \xi_t$, where $\xi_t$ is drawn uniformly from $\mathbb B_0(r)$.
            \EndIf
            \If{$t - t_{\text{noise}} = t_{\text{thres}}$ and $f(x_t) - f(\tilde x_{t_{\text{noise}}}) > -f_{\text{thres}}$}
                \State\Return $\tilde x_{t_{\text{noise}}}$
            \EndIf
            \State $x_{t+1}\leftarrow x_t - \eta\nabla f(x_t)$
        \EndFor
    \end{algorithmic}
\end{algorithm}


\section{Gradient and Hessian of the Cost Function}

Before we prove any of our main theorems, we first compute the gradient and Hessian of the functions $f(W)$ and $g(W)$. In our training process, we need to compute the gradient of function $g(W)$, and in the analysis for the smoothness and Hessian Lipschitz constants, we need both the gradient and Hessian.

Recall that given the samples and their corresponding labels $\{(x_j,y_j)\}_{j\le n}$, we define the cost function of the neural network with parameters $W = [w_1,\dots,w_r]\in\R^{d\times r}$,
\[f(W) = \frac{1}{4n}\sum_{j=1}^n\left(\sum_{i=1}^r a_i(w_i^Tx_j)^2 - y_j\right)^2.\]
Given the above form of the cost function, we can write out the gradient and the hessian with respect to $W$. We have the following gradient,
\begin{align*}
    \frac{\partial f(W)}{\partial w_k} =& \frac{1}{4n}\sum_{j=1}^n 2\left(\sum_{i=1}^r a_i(w_i^Tx_j)^2 - y_j\right) \cdot 2 a_k (w_k^Tx_j) x_j\\
    =& \frac{a_k}{n}\sum_{j=1}^n \left(\sum_{i=1}^r a_i(w_i^Tx_j)^2 - y_j\right)x_jx_j^Tw_k.
\end{align*}
and $\frac{\partial^2 f(W)}{\partial w_{k_1}\partial w_{k_2}} =$
\[ \left\{\begin{aligned}
   \frac{a_{k_1}}{n}\sum_{j=1}^n \left(\sum_{i=1}^r a_i(w_i^Tx_j)^2 - y_j\right)x_jx_j^T + \frac{2a_{k_1}a_{k_2}}{n}\sum_{j=1}^n(x_j^Tw_{k_1})(x_j^Tw_{k_2})x_jx_j^T&,\ \text{if}\  k_1 = k_2 \\
   \frac{2a_{k_1}a_{k_2}}{n}\sum_{j=1}^n(x_j^Tw_{k_1})(x_j^Tw_{k_2})x_jx_j^T &,\ \text{if}\  k_1 \neq k_2
\end{aligned}\right.\]
In the above computation, $\frac{\partial f(W)}{\partial w_k}$ is a column vector and $\frac{\partial^2 f(W)}{\partial w_{k_1}\partial w_{k_2}}$ is a square matrix whose different rows means the derivative to elements in $w_{k_2}$ and different columns represent the derivative to elements in $w_{k_1}$. Then, given the above formula, we can write out the quadratic form of the hessian with respect to the parameters $Z = [z_1,z_2,\dots,z_r]\in \R^{d\times r}$,
\begin{align*}
    &\nabla^2 f(W)(Z,Z)\\
    =& \sum_{k=1}^r z_k^T\left(\frac{a_{k}}{n}\sum_{j=1}^n \left(\sum_{i=1}^r a_i(w_i^Tx_j)^2 - y_j\right)x_jx_j^T\right)z_k \\
    &\quad + \sum_{1\le k_1,k_2\le r}w_{k_2}^T\left(\frac{2a_{k_1}a_{k_2}}{n}\sum_{j=1}^n(x_j^Tw_{k_1})(x_j^Tw_{k_2})x_jx_j^T\right)w_{k_1} \\
    =& \sum_{k=1}^r z_k^T\left(\frac{a_{k}}{n}\sum_{j=1}^n \left(\sum_{i=1}^r a_i(w_i^Tx_j)^2 - y_j\right)x_jx_j^T\right)z_k + \frac{2}{n}\sum_{j=1}^n\left(\sum_{i=1}^r a_i w_i^Tx_jx_j^Tz_i\right)^2.
\end{align*}

In order to train this neural network in polynomial time, we need to add a small regularizer to the original ocst function $f(W)$. Let
\[g(W) = f(W) + \frac{\gamma}{2}||W||_F^2,\]
where $\gamma$ is a constant. Then we can directly get the gradient and the hessian of $g(W)$ from those of $f(W)$. We have
\begin{align*}
    \nabla_{w_k} g(W) =& \frac{a_k}{n}\sum_{j=1}^n \left(\sum_{i=1}^r a_i(w_i^Tx_j)^2 - y_j\right)x_jx_j^Tw_k + \gamma w_k \\
    \nabla^2_W g(W)(Z,Z) =& \sum_{k=1}^r z_k^T\left(\frac{a_{k}}{n}\sum_{j=1}^n \left(\sum_{i=1}^r a_i(w_i^Tx_j)^2 - y_j\right)x_jx_j^T\right)z_k\\
    &\quad+ \frac{2}{n}\sum_{j=1}^n\left(\sum_{i=1}^r a_i w_i^Tx_jx_j^Tz_i\right)^2 + \gamma ||Z||_F^2.
\end{align*}
For simplicity, we can use $x_j^TWAW^Tx_j-y_j$ to denote $(\sum_{i=1}^r a_i(w_i^Tx_j)^2 - y_j$, where $A$ is a diagonal matrix with $A_{ii} = a_i$. Then we have
\begin{align*}
    \nabla_W g(W) =& \frac{1}{n}\sum_{j=1}^n \left(x_j^TWAW^Tx_j-y_j\right)x_jx_j^TWA + \gamma W \\
    \nabla^2_W g(W)(Z,Z) =& \sum_{k=1}^r z_k^T\left(\frac{a_{k}}{n}\sum_{j=1}^n \left(x_j^TWAW^Tx_j-y_j\right)x_jx_j^T\right)z_k\\
    &\quad+ \frac{2}{n}\sum_{j=1}^n\left(\sum_{i=1}^r a_i w_i^Tx_jx_j^Tz_i\right)^2 + \gamma ||Z||_F^2.
\end{align*}

\section{Omitted Proofs for Section~\ref{sec:proof-sketch-twolayer}} 
\label{sec:twolayerformal}
In this section, we will give a formal proof of Theorem \ref{thm:main-theorem-twolayer}. We will follow the proof sketch in Section \ref{sec:proof-sketch-twolayer}. First in Section~\ref{subsec:landscapeproof} we prove Lemma~\ref{lem:geo-property} which gives the optimization landscape for the two-layer neural network with \emph{large enough} width;then in Section~\ref{subsec:2layeralg} we will show that the training process on the function with regularization will end in polynomial time.

\subsection{Optimization landscape of two-layer neural net}\label{subsec:landscapeproof}
In this part we will prove the optimization landscape(Lemma \ref{lem:geo-property}) of 2-layer neural network. First we recall Lemma \ref{lem:geo-property}.

\lemoptlandscape*

For simplicity, we will use $\delta_j(W) = \sum_{i=1}^r a_i(w_i^T x_j)^2 - y_j$ to denote the error of the output of the neural network and the label $y_j$. Consider the matrix $M = \frac{1}{n}\sum_{j=1}^n\delta_j x_jx_j^T$. To show that every $\varepsilon$-second-order stationary point $W$ of $f$ will have small function value $f(W)$, we need the following 2 lemmas.

Generally speaking, the first lemma shows that, when the network is \emph{large enough}, any point with \emph{almost Semi-definite Hessian} will lead to a small spectral norm of matrix $M$. 

\lemsmallesteigenvalue*

\begin{proof}
    First note that the equation
    \[\lambda_{\min}\nabla^2 f(W) = -\max_i |\lambda_i (M)|\]
    is equivalent to
    \[\min_{||Z||_F = 1}\nabla^2 f(W)(Z,Z) = -\max_{||z||_2 = 1} |z^T Mz|,\]
    and we will give a proof of the equivalent form.
    
    First, we show that
    \[\min_{||Z||_F = 1}\nabla^2 f(W)(Z,Z) \ge -\max_{||z||_2 = 1} |z^T Mz|.\]
    Intuitively, this is because $\nabla^2 f(W)$ is the sum of two terms, one of them is always positive semidefinite, and the other term is equivalent to a weighted combination of the matrix $M$ applied to different columns of $Z$.
    \begin{align*}
        &\nabla^2 f(W)(Z,Z)\\
        =& \sum_{k=1}^r z_k^T\left(\frac{a_{k}}{n}\sum_{j=1}^n \left(\sum_{i=1}^r a_i(w_i^Tx_j)^2 - y_j\right)x_jx_j^T\right)z_k + \frac{2}{n}\sum_{j=1}^n\left(\sum_{i=1}^r a_i w_i^Tx_jx_j^Tz_i\right)^2 \\
        =& \sum_{k=1}^r a_k z_k^T M z_k + \frac{2}{n}\sum_{j=1}^n\left(\sum_{i=1}^r a_i w_i^Tx_jx_j^Tz_i\right)^2 \\
        \ge & \sum_{k=1}^r a_k z_k^T M z_k \\
        \ge & -\sum_{k=1}^r \max_i|\lambda_{i}(M)|\cdot ||z_k||_2^2 \\
        =& -\max_i|\lambda_{i}(M)|\cdot ||Z||_F^2.
    \end{align*}
    Then we have
    \[\min_{||Z||_F = 1}\nabla^2 f(W)(Z,Z) \ge \min_{||Z||_F = 1}(-\max_i|\lambda_{i}(M)|\cdot ||Z||_F^2) = -\max_i|\lambda_{i}M| = -\max_{||z||_2 = 1} |z^T Mz|.\]
    
    For the other side, we show that
    \[\min_{||Z||_F = 1}\nabla^2 f(W)(Z,Z) \le -\max_{||z||_2 = 1} |z^T Mz|\]
    by showing that there exists $Z,||Z||_F = 1$ such that $\nabla^2 f(W)(Z,Z) = -\max_{||z||_2 = 1} |z^T Mz|$. 
    
    First, let $z_0 = \arg\max_{||z||_2 = 1} |z^T Mz|$. Recall that for simplicity, we assume that $r$ is an even number and $a_i = 1$ for all $i \le \frac{r}{2}$ and $a_i = -1$ for all $i \ge \frac{r+2}{2}$. If $z_0^T Mz_0 < 0$, there exists $u\in\R^{r}$ such that
    \begin{enumerate}
        \item $||u||_2 = 1$,
        \item $u_i = 0$ for all $i \ge \frac{r+2}{2}$,
        \item $\sum_{i=1}^r a_i u_i w_i = \textbf{0}$,
    \end{enumerate}
    since for constraints 2 and 3, they form a homogeneous linear system, and constraint 2 has $\frac{r}{2}$ equations and constraint 3 has $d$ equations. The total number of the variables is $r$ and we have $r > \frac{r}{2} + d$ since we assume that $r \ge 2d+2$. Then there must exists $r\neq \textbf{0}$ that satisfies constraints 2 and 3. Then we normalize that $u$ to have norm $||u||_2 = 1$.
    
    Then, let $Z = z_0u^T$, we have $||Z||_F^2 = ||z_0||_2^2\cdot ||u||_2^2 = 1$ and
    \begin{align*}
        \nabla^2 f(W)(Z,Z) =& \sum_{k=1}^r a_k z_k^T M z_k + \frac{2}{n}\sum_{j=1}^n\left(\sum_{i=1}^r a_i w_i^Tx_jx_j^Tz_i\right)^2\\
        =& \sum_{k=1}^r a_k u_k^2 z_0^T M z_0 + \frac{2}{n}\sum_{j=1}^n\left(\sum_{i=1}^r a_i u_i w_i^Tx_jx_j^Tz_0\right)^2\\
        =& z_0^TMz_0 + \frac{2}{n}\sum_{j=1}^n\left(\sum_{i=1}^r\textbf{0}^Tx_jx_j^Tz_0\right)^2\\
        =& -\max_{||z||_2 = 1} |z^T Mz|,
    \end{align*}
    where the third equality comes from the fact that $||u||_2^2 = \sum_{i=1}^r u_i^2 = 1$, $u_i = 0$ for all $i > \frac{r}{2}$, and $\sum_{i=1}^r a_i u_i w_i = \textbf{0}$. The proof for the case when $z_0^T Mz_0 > 0$ is symmetric, except we use the second half of the coordinates (where $a_i = -1$).

\end{proof}

The next step needs to connect the matrix $M$ and the loss function. In particular, we will show that if the spectral norm of $M$ is small, the loss is also small. 

\lemspectralnormandvalue*

\begin{proof}
    We know that the function value $f(W) = \frac{1}{n}\sum_{j=1}^n \delta_j^2 = \frac{1}{n}||\delta||_2^2$, where $\delta \in\R^n$ is the vector whose $j$-th element is $\delta_j$. Because $X = [x_1^{\otimes 2},\dots,x_n^{\otimes 2}]\in \R^{d^2 \times n}$ has full column rank and the smallest singular value is at least $\sigma$, we know that for any $v\in\R^n$,
    \[||Xv||_2 \ge \sigma_{\min}(X)\cdot ||v||_2 \ge \sigma ||v||_2.\]
    Since $M = \frac{1}{n}\sum_{j=1}^n \delta_j x_jx_j^T$ is a symmetric matrix, $M$ has $d$ real eigenvalues, and we use $\lambda_1,\dots,\lambda_d$ to denote these eigenvalues. Because we assume that the spectral norm of the matrix $M = \frac{1}{n}\sum_{j=1}^n \delta_j x_jx_j^T$ is upper bounded by $\lambda$, which means that $|\lambda_i| \le \lambda$ for all $1\le i\le d$, and we have
    \[||M||_F^2 = \sum_{i=1}^d \lambda_i^2 \le \sum_{i=1}^d \lambda^2 = d\lambda^2.\]
    Then we can conclude that
    \[||M||_F^2 = ||\frac{1}{n}\sum_{j=1}^n \delta_j x_jx_j^T||_F^2 = \frac{1}{n^2}||X\delta||_2^2 \ge \frac{1}{n^2}\sigma^2 ||\delta||_2^2,\]
    where the second equation comes from the fact that reordering a matrix to a vector preserves the Frobenius norm.
    
    Then combining the previous argument, we have
    \[f(W) = \frac{1}{4n}||\delta||_2^2 \le \frac{n}{4\sigma^2}||M||_F^2 \le \frac{nd\lambda^2}{4\sigma^2}.\]
\end{proof}

Lemma~\ref{lem:geo-property} follows immediately from Lemma~\ref{lem:smallesteigenvalue} and Lemma~\ref{lem:spectralnormandfuncvalue}.

\subsection{Training guarantee of the two-layer neural net}
\label{subsec:2layeralg}
Recall that in order to derive the time complexity for the training procedure, we add a regularizer to the function $f$. More concretely, 
\[g(W) = f(W) + \frac{\gamma}{2}||W||_F^2,\]
where $\gamma$ is a constant that we choose in Theorem~\ref{thm:main-theorem-twolayer}.

To analyze the running time of the PGD algorithm, we first bound the smoothness and Hessian Lipschitz parameters when the Frobenius norm of $W$ is bounded.

\begin{restatable}{lemma}{lemsmoothness}\label{lem:smoothness-lipschitz-hessian}
    In the set $\{W:||W||_F^2 \le \Gamma\}$, if we have $||x_j||_2 \le B$ and $|y_j| \le Y$ for all $j \le n$, then
    \begin{enumerate}
        \item $\nabla g(W)$ is $(3B^4\Gamma+YB^2+\gamma)$-smooth.
        \item $\nabla^2 g(W)$ has $6B^4\Gamma^{\frac{1}{2}}$-Lipschitz Hessian.
    \end{enumerate}
\end{restatable}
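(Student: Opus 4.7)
The plan is to bound both quantities by directly analyzing the explicit formulas for $\nabla^2 g(W)$ and its derivative that were derived earlier in the paper. Since $g$ differs from $f$ only by the quadratic regularizer $\frac{\gamma}{2}\|W\|_F^2$, whose Hessian is $\gamma I$ (a constant) and whose third derivative vanishes, the regularizer contributes $\gamma$ to the smoothness bound and zero to the Hessian Lipschitz bound; all remaining work is on $f$.

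For part~1, I would upper bound the quadratic form $|\nabla^2 g(W)(Z,Z)|$ over $\|Z\|_F = 1$. Using the decomposition from the preceding subsection, I split $\nabla^2 f(W)(Z,Z)$ into the ``residual'' piece $\sum_k a_k z_k^T M z_k$ (with $M = \frac{1}{n}\sum_j \delta_j x_j x_j^T$) and the ``squared'' piece $\frac{2}{n}\sum_j (\sum_i a_i w_i^T x_j x_j^T z_i)^2$. For the first piece, I would use $|\delta_j| \le |x_j^T W A W^T x_j| + |y_j| \le B^2\|W\|_F^2 + Y \le B^2\Gamma + Y$ together with $\|x_j x_j^T\| \le B^2$, giving a bound of $(B^4\Gamma + YB^2)\|Z\|_F^2$. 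For the second piece I would apply Cauchy--Schwarz in the form $|\sum_i a_i w_i^T x_j x_j^T z_i| \le \|W^T x_j\|\cdot\|Z^T x_j\| \le B^2\|W\|_F\|Z\|_F$, yielding a bound of $2B^4\Gamma \|Z\|_F^2$. Adding these and the $\gamma\|Z\|_F^2$ regularizer contribution gives exactly $(3B^4\Gamma + YB^2 + \gamma)\|Z\|_F^2$.

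For part~2, I would compute the directional derivative $\tfrac{d}{dt}\big|_{t=0}\nabla^2 f(W + tY)(Z,Z)$ and bound it by $6B^4\Gamma^{1/2}\|Y\|_F\|Z\|_F^2$, which by a standard symmetrization argument implies the stated Hessian Lipschitz constant. Differentiating the ``residual'' piece produces the term $\frac{2}{n}\sum_j (\sum_i a_i (w_i^T x_j)(y_i^T x_j))(\sum_k a_k (z_k^T x_j)^2)$, which I would bound by $2B^4\|W\|_F\|Y\|_F\|Z\|_F^2$ using two Cauchy--Schwarz applications. Differentiating the ``squared'' piece produces $\frac{4}{n}\sum_j (\sum_i a_i w_i^T x_j x_j^T z_i)(\sum_i a_i y_i^T x_j x_j^T z_i)$, which the same Cauchy--Schwarz bound controls by $4B^4\|W\|_F\|Y\|_F\|Z\|_F^2$. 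Summing gives the factor $2 + 4 = 6$, and invoking $\|W\|_F \le \Gamma^{1/2}$ finishes the bound.

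The calculations are routine but index-heavy; the main thing to watch is not losing any of the three symmetric ways that $W$ can be differentiated in the Hessian expression so as to arrive at the tight constant $6$ rather than something looser. Once all three third-derivative cross-terms are identified and each is controlled via the same Frobenius/spectral Cauchy--Schwarz inequality $\|M^T x_j\| \le B\|M\|_F$, everything lines up cleanly and both Lipschitz constants fall out.
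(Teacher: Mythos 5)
Your proposal is correct and yields exactly the stated constants, but it reaches them by a different mechanism than the paper. For part~1 the paper never touches the Hessian: it bounds $\|\nabla g(U)-\nabla g(V)\|_F$ directly, telescoping the gradient difference into three terms (swapping $U$ for $V$ one occurrence at a time), each controlled by $B^4\Gamma\|U-V\|_F$ plus the $YB^2$ and $\gamma$ contributions. You instead bound the Hessian quadratic form $|\nabla^2 g(W)(Z,Z)|$ by $(3B^4\Gamma+YB^2+\gamma)\|Z\|_F^2$ and infer gradient Lipschitzness; the two term-by-term bounds you give ($B^4\Gamma+YB^2$ for the residual piece, $2B^4\Gamma$ for the squared piece) are correct, and the split $1+2=3$ mirrors the paper's three telescoping terms. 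For part~2 the paper again works with two-point differences of the Hessian quadratic form, using the factorization $\langle UA,\cdot\rangle^2-\langle VA,\cdot\rangle^2=\langle (U-V)A,\cdot\rangle\langle (U+V)A,\cdot\rangle$ and $\|U+V\|_F\le 2\Gamma^{1/2}$ to get $2B^4\Gamma^{1/2}+4B^4\Gamma^{1/2}$, whereas you bound the third directional derivative ($2B^4\|W\|_F+4B^4\|W\|_F$) and integrate along the segment; the constants coincide. Two small things you should make explicit for your route to be airtight: the set $\{W:\|W\|_F^2\le\Gamma\}$ is convex, so the segment between any two points stays in the region where your derivative bounds hold (the paper's two-point arguments only need the endpoint norms, so it never has to say this); and your diagonal bound $|D^3 g(W)[Y,Z,Z]|\le 6B^4\Gamma^{1/2}\|Y\|_F\|Z\|_F^2$ transfers to the operator norm without loss because the form is symmetric in the last two arguments (polarization), which is what lets you keep the constant $6$. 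Also, rename the direction matrix, since $Y$ already denotes the label bound in the lemma.
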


\begin{proof}
    We first figure out the smoothness constant. We have
    \begin{align*}
        &||\nabla g(U) - \nabla g(V)||_F \\ 
        =& ||\frac{1}{n}\sum_{j=1}^n \left(x_j^TUAU^Tx_j-y_j\right)x_jx_j^TUA + \gamma U - \frac{1}{n}\sum_{j=1}^n \left(x_j^TVAV^Tx_j-y_j\right)x_jx_j^TVA - \gamma V||_F \\
        \le& ||\frac{1}{n}\sum_{j=1}^n \left(x_j^TUAU^Tx_j-y_j\right)x_jx_j^TUA - \frac{1}{n}\sum_{j=1}^n \left(x_j^TVAV^Tx_j-y_j\right)x_jx_j^TVA||_F + \gamma ||U-V||_F.
    \end{align*}
    Then we bound the first term, we have
    \begin{align*}
        & ||\frac{1}{n}\sum_{j=1}^n \left(x_j^TUAU^Tx_j-y_j\right)x_jx_j^TUA - \frac{1}{n}\sum_{j=1}^n \left(x_j^TVAV^Tx_j-y_j\right)x_jx_j^TVA||_F \\
        =& ||\frac{1}{n}\sum_{j=1}^n \left(x_j^TUAU^Tx_j-y_j\right)x_jx_j^TUA - \frac{1}{n}\sum_{j=1}^n \left(x_j^TUAU^Tx_j-y_j\right)x_jx_j^TVA \\
        &\quad + \frac{1}{n}\sum_{j=1}^n \left(x_j^TUAU^Tx_j-y_j\right)x_jx_j^TVA - \frac{1}{n}\sum_{j=1}^n \left(x_j^TUAV^Tx_j-y_j\right)x_jx_j^TVA \\
        &\quad + \frac{1}{n}\sum_{j=1}^n \left(x_j^TUAV^Tx_j-y_j\right)x_jx_j^TVA - \frac{1}{n}\sum_{j=1}^n \left(x_j^TVAV^Tx_j-y_j\right)x_jx_j^TVA||_F\\
        \le& ||\frac{1}{n}\sum_{j=1}^n \left(x_j^TUAU^Tx_j-y_j\right)x_jx_j^TUA - \frac{1}{n}\sum_{j=1}^n \left(x_j^TUAU^Tx_j-y_j\right)x_jx_j^TVA||_F \\
        &\quad + ||\frac{1}{n}\sum_{j=1}^n \left(x_j^TUAU^Tx_j-y_j\right)x_jx_j^TVA - \frac{1}{n}\sum_{j=1}^n \left(x_j^TUAV^Tx_j-y_j\right)x_jx_j^TVA||_F \\
        &\quad + ||\frac{1}{n}\sum_{j=1}^n \left(x_j^TUAV^Tx_j-y_j\right)x_jx_j^TVA - \frac{1}{n}\sum_{j=1}^n \left(x_j^TVAV^Tx_j-y_j\right)x_jx_j^TVA||_F.\\
    \end{align*}
    The first term can be bounded by
    \begin{align*}
        &||\frac{1}{n}\sum_{j=1}^n \left(x_j^TUAU^Tx_j-y_j\right)x_jx_j^TUA - \frac{1}{n}\sum_{j=1}^n \left(x_j^TUAU^Tx_j-y_j\right)x_jx_j^TVA||_F \\
        \le& ||\frac{1}{n}\sum_{j=1}^n \left(x_j^TUAU^Tx_j\right)x_jx_j^TUA - \frac{1}{n}\sum_{j=1}^n \left(x_j^TUAU^Tx_j\right)x_jx_j^TVA||_F\\
        &\quad+ ||\frac{1}{n}\sum_{j=1}^n y_jx_jx_j^TUA - y_jx_jx_j^TVA||_F \\
        \le& ||\frac{1}{n}\sum_{j=1}^n \left(x_j^TUAU^Tx_j\right)x_jx_j^T||_F||(U-V)A||_F + YB^2||(U-V)A||_F \\
        \le& B^4\Gamma ||U-V||_F + YB^2||U-V||_F.
    \end{align*}
    Similarly, we can show that
    \[||\frac{1}{n}\sum_{j=1}^n \left(x_j^TUAU^Tx_j-y_j\right)x_jx_j^TVA - \frac{1}{n}\sum_{j=1}^n \left(x_j^TUAV^Tx_j-y_j\right)x_jx_j^TVA||_F \le B^4\Gamma ||U-V||_F,\]
    and
    \[||\frac{1}{n}\sum_{j=1}^n \left(x_j^TUAV^Tx_j-y_j\right)x_jx_j^TVA - \frac{1}{n}\sum_{j=1}^n \left(x_j^TVAV^Tx_j-y_j\right)x_jx_j^TVA||_F \le B^4\Gamma ||U-V||_F.\]
    Then, we have
    \[||\nabla g(U) - \nabla g(V)||_F \le (3B^4\Gamma+YB^2+\gamma) ||U-V||_F.\]
    Then we bound the Hessian Lipschitz constant. We have
    \begin{align*}
        &|\nabla^2 g(U)(Z,Z) - \nabla^2 g(V)(Z,Z)|\\
        =& |\sum_{k=1}^r z_k^T\left(\frac{a_{k}}{n}\sum_{j=1}^n \left(x_j^TUAU^Tx_j-y_j\right)x_jx_j^T\right)z_k + \frac{2}{n}\sum_{j=1}^n\left(\sum_{i=1}^r a_i u_i^Tx_jx_j^Tz_i\right)^2 + \gamma ||Z||_F^2 \\
        &\quad - \sum_{k=1}^r z_k^T\left(\frac{a_{k}}{n}\sum_{j=1}^n \left(x_j^TVAV^Tx_j-y_j\right)x_jx_j^T\right)z_k - \frac{2}{n}\sum_{j=1}^n\left(\sum_{i=1}^r a_i v_i^Tx_jx_j^Tz_i\right)^2 - \gamma ||Z||_F^2| \\
        \le&\sum_{k=1}^r|z_k^T\left(\frac{a_{k}}{n}\sum_{j=1}^n \left(x_j^T(UAU^T-VAV^T)x_j\right)x_jx_j^T\right)z_k|\\
        &\quad+ \frac{2}{n}\sum_{j=1}^n|\left(\sum_{i=1}^r a_i u_i^Tx_jx_j^Tz_i\right)^2 - \left(\sum_{i=1}^r a_i v_i^Tx_jx_j^Tz_i\right)^2|.
    \end{align*}
    First we have
    \begin{align*}
        &|z_k^T\left(\frac{a_{k}}{n}\sum_{j=1}^n \left(x_j^T(UAU^T-VAV^T)x_j\right)x_jx_j^T\right)z_k|\\
        \le& \frac{1}{n}\sum_{j=1}^n||\left(x_j^T(UAU^T-VAV^T)x_j\right)x_jx_j^T||_F||z_k||_2^2 \\
        \le& \frac{1}{n}\sum_{j=1}^n||\left(x_j^T(UAU^T-UAV^T + UAV^T - VAV^T)x_j\right)x_jx_j^T||_F||z_k||_2^2 \\
        \le& 2B^4\Gamma^{\frac{1}{2}}||U-V||_F||z_k||_2^2,
    \end{align*}
    So we can bound the first term by
    \begin{align*}
        &\sum_{k=1}^r|z_k^T\left(\frac{a_{k}}{n}\sum_{j=1}^n \left(x_j^T(UAU^T-VAV^T)x_j\right)x_jx_j^T\right)z_k|\\
        \le&\sum_{k=1}^r2B^4\Gamma^{\frac{1}{2}}||U-V||_F||z_k||_2^2 = 2B^4\Gamma^{\frac{1}{2}}||U-V||_F||Z||_F^2.
    \end{align*}
    Then for the second term, note that
    \[\sum_{i=1}^r a_i u_i^Tx_jx_j^Tz_i = \langle UA, x_jx_j^TZ\rangle,\]
    and we have
    \begin{align*}
        &\frac{2}{n}\sum_{j=1}^n|\left(\sum_{i=1}^r a_i u_i^Tx_jx_j^Tz_i\right)^2 - \left(\sum_{i=1}^r a_i v_i^Tx_jx_j^Tz_i\right)^2| \\
        =& \frac{2}{n}\sum_{j=1}^n |\langle UA, x_jx_j^TZ\rangle^2 - \langle VA, x_jx_j^TZ\rangle^2| \\
        =& \frac{2}{n}\sum_{j=1}^n |\langle (U-V)A, x_jx_j^TZ\rangle\langle (U+V)A, x_jx_j^TZ\rangle| \\
        \le& \frac{2}{n}\sum_{j=1}^n||(U-V)A||_F||x_jx_j^TZ||_F||(U+V)A||_F||x_jx_j^TZ||_F \\
        \le& 4B^4\Gamma^{\frac{1}{2}}||U-V||_F||Z||_F^2,
    \end{align*}
    where the first inequality comes from the Cauchy-Schwatz inequality. Combining with the previous computation, we have
    \[|\nabla^2 g(U)(Z,Z) - \nabla^2 g(V)(Z,Z)| \le 6B^4\Gamma^{\frac{1}{2}}||U-V||_F||Z||_F^2.\]
\end{proof}

We also have the theorem showing the convergence result of Perturbed Gradient Descent(Algorithm \ref{alg:pgd}).

\thmpgdconvergence*

Then based on the convergence result in \cite{jin2017escape} and the previous lemmas, we have the following main theorem for 2-layer neural network with quadratic activation.

\thmmainthmtwolayer*

\begin{proof}[Proof of Theorem \ref{thm:main-theorem-twolayer}]
    We first show that during the training process, if the constant $c \le 1$, the objective function value satisfies
    \[g(W_t) \le g(W_{\text{ins}}) + \frac{3c\varepsilon^2}{2\chi^4},\]
    where we choose the smoothness constant $\ell \ge 1$ to be the smoothness for the region $g(W) \le  g(W_{\text{ins}}) + \frac{3c\varepsilon^2}{2\chi^4}$.
    
    In the PGD algorithm (Algorithm~\ref{alg:pgd}), we say a point is in a perturbation phase, if $t-t_{noise} < t_{thres}$. A point $x_t$ is the beginning of a perturbation phase if it reaches line 5 of Algorithm~\ref{alg:pgd} and a perturbation is added to it.
    
    We use induction to show that the following properties hold. 
    \begin{enumerate}
        \item If time $t$ is not in the perturbation phase, then $g(W_t) \le g(W_{\text{ins}})$.
        \item If time $t$ is in a perturbation phase, then $g(W_t) \le g(W_{\text{ins}}) + \frac{3c\varepsilon^2}{2\chi^4\ell}$. Moreover, if $t$ is the beginning of a perturbation phase, then $g(\tilde W_t) \le g(W_{\text{ins}})$.
    \end{enumerate}
    First we show that at time $t=0$, the property holds. If $t=0$ is not the beginning of a perturbation phase, then the inequality holds trivially by initialization.
    If $t=0$ is the beginning of a perturbation phase, then we know that $g(\tilde W_0) = g(W_{\text{ins}})$ from the definition of the algorithm, then
    \begin{align}
        g(W_0) =& g(\tilde W_0 + \xi_0)\label{equ:perturb} \\
        \le& g(\tilde W_0) + ||\xi_0||_F||\nabla g(\tilde W_0)||_F + \frac{\ell}{2}||\nabla g(\tilde W_0)||_F^2\nonumber \\
        \le& g(\tilde W_0) + r\cdot g_{\text{thres}} + \frac{\ell}{2}r^2\nonumber \\
        \le& g(\tilde W_0) + \frac{\sqrt{c}\varepsilon}{\chi^2\ell} \cdot \frac{\sqrt{c}\varepsilon}{\chi^2} +\frac{\ell}{2} \frac{\sqrt{c}\varepsilon}{\chi^2\ell}\cdot \frac{\sqrt{c}\varepsilon}{\chi^2\ell}\nonumber \\
        =& g(W_{\text{ins}}) + \frac{3c\varepsilon^2}{2\chi^4\ell}.\nonumber
    \end{align}
    
    Now we do the induction: assuming the two properties hold for time $t$, we will show that they also hold at time $t+1$. We break the proof into 3 cases:
    
    {\bf Case 1}: $t+1$ is not in a perturbation phase. In this case, the algorithm does not add a perturbation on $W_{t+1}$, and we have
    
    \begin{align}
        g(W_{t+1}) =& g(W_{t} - \eta \nabla g(W_{t}))\label{equ:smoothness} \\ 
        \le& g(W_{t}) - \langle \eta \nabla g(W_{t}), \nabla g(W_t)\rangle + \frac{\ell}{2}||\eta \nabla g(W_t)||_F^2\nonumber \\
        \le& g(W_t) - \frac{\eta}{2}||\nabla g(W_t)||_F^2||\nonumber\\
        \le& g(W_t). \nonumber
    \end{align}
    
    If $t$ is not in a perturbation phase, then from the induction hypothesis, we have
    \[g(W_{t+1}) \le g(W_t) \le g(W_{\text{ins}}),\]
    otherwise if $t$ is in a perturbation phase, since $t+1$ is not in a perturbation phase, $t$ must be at the end of the phase. By design of the algorithm we have:
    \[g(W_{t+1}) \le g(W_t) \le g(\tilde W_{t_{\text{noise}}}) - f_{\text{thres}} \le g(W_{\text{ins}}).\]
    
    {\bf Case 2}: $t+1$ is in a perturbation phase, but not at the beginning.
    Using the same reasoning as \eqref{equ:smoothness}, we know
    \[g(W_{t+1}) \le g(W_t) \le g(W_{\text{ins}}).\]
    {\bf Case 3}: $t+1$ is at the beginning of a perturbation phase. First we know that
    \[g(W_{t}) \le g(W_{\text{ins}}),\]
    since $t$ is either not in a perturbation phase of at the end of a perturbation phase, then we have $g(\tilde W_{t+1}) \le g(W_{\text{ins}})$. Same as the computation in \eqref{equ:perturb}, we have
    \[g(W_{t+1}) \le g(W_{\text{ins}}) + \frac{3c\varepsilon^2}{2\chi^4\ell}.\]
    This finishes the induction.
    
    Since we choose $\ell \ge 1$, we can choose the other parameters such that $g(W_{t+1}) \le g(W_{\text{ins}}) + \frac{3c\varepsilon^2}{2\chi^4} \le g(W_{\text{ins}}) + 1$. Then since
    \[g(W) = f(W) + \frac{\gamma}{2}||W||_F^2,\]
    we know that during the training process, we have $||W||_F^2 \le \frac{2(g(W_{\text{ins}}) + 1)}{\gamma}$. Since we train from $W_{\text{ins}} = 0$, we have $||W||_F^2 \le \frac{2(f(0) + 1)}{\gamma}$. From Lemma \ref{lem:smoothness-lipschitz-hessian}, we know that 
    \begin{enumerate}
        \item $\nabla g(W)$ is $(3B^4\frac{2(f(0) + 1)}{\gamma}+YB^2+\gamma)$-smooth.
        \item $\nabla^2 g(W)$ has $6B^4\sqrt{\frac{2(f(0) + 1)}{\gamma}}$-Lipschitz Hessian.
    \end{enumerate}
    As we choose $\gamma = (6B^4\sqrt{2(f(0) + 1)})^{2/5}\cdot \varepsilon^{2/5}$, we know that $\rho = (6B^4\sqrt{2(f(0) + 1)})^{4/5}\cdot \varepsilon^{-1/5}$ is an upper bound on the Lipschitz Hessian constant.
    
    When PGD stops, we know that
    \[\lambda_{\min}(\nabla^2 g(W)) \ge -\sqrt{\rho\varepsilon} = -(6B^4\sqrt{2(f(0) + 1)})^{2/5}\cdot \varepsilon^{2/5},\]
    and we have
    \[\lambda_{\min}(\nabla^2 f(W)) \ge \lambda_{\min}(\nabla^2 g(W)) - \gamma \ge -2(6B^4\sqrt{2(f(0) + 1)})^{2/5}\cdot \varepsilon^{2/5}.\]
    From Lemma \ref{lem:smallesteigenvalue}, we know that the spectral norm of matrix $M$ is bounded by $2(6B^4\sqrt{2(f(0) + 1)})^{2/5}\cdot \varepsilon^{2/5}$, and from Lemma \ref{lem:spectralnormandfuncvalue}, we know that
    \[f(W) \le \frac{nd\cdot 4(6B^4\sqrt{2(f(0) + 1)})^{4/5}\cdot \varepsilon^{4/5}}{4\sigma^2} = \frac{nd\cdot (6B^4\sqrt{2(f(0) + 1)})^{4/5}\cdot \varepsilon^{4/5}}{\sigma^2}. \]
    The running time follows directly from the convergence theorem of Perturbed Gradient Descent(Theorem \ref{thm:pgd-convergence}) and the previous argument that the training trajectory will not escape from the set $\{W:||W||_F^2 \le \frac{2(g(W_{\text{ins}}) + 1)}{\gamma}\}$.
    
    Then, in order to get the error to be smaller than $\varepsilon$, we choose
    \[\varepsilon' = \left(\frac{\sigma^2\varepsilon}{nd}\right)^{5/4}\frac{1}{6B^4\sqrt{2(f(0) + 1)}},\]
    and the total running time should be
    \[O\left(\frac{B^8\ell(nd)^{5/2}(f(0)+1)^2}{\sigma^{5}\varepsilon^{5/2}}\log^4\left(\frac{Bnrd\ell\Delta(f(0)+1)}{\varepsilon^2\delta\sigma}\right)\right).\]
    Besides, our parameter $\rho$ and $\gamma$ is chosen to be
    \[\rho = (6B^4\sqrt{2(f(0) + 1)})^{4/5}\cdot \varepsilon'^{-1/5} = (6B^4\sqrt{2(f(0) + 1)})\left(\frac{nd}{\sigma^2\varepsilon}\right)^{\frac{1}{4}},\]
    and
    \[\gamma = (6B^4\sqrt{2(f(0) + 1)})^{2/5}\cdot \varepsilon'^{2/5} = \left(\frac{\sigma^2\varepsilon}{nd}\right)^{\frac{1}{2}}.\]
\end{proof}

\section{Omitted Proofs in Section~\ref{sec:proof-sketch-random-feature}}
\label{sec:threelayerformal}
In this section, we give the proof of the main results of our three-layer neural network(Theorem \ref{thm:main-theorem-random-feature-with-perturbation} and \ref{thm:deterministic}). Our proof mostly uses leave-one-out distance to bound the smallest singular value of the relevant matrices, which is a common approach in random matrix theory (e.g., in \cite{}). However, the matrices we are interested in involves high order tensor powers that have many correlated entries, so we need to rely on tools such as 
anti-concentration for polynomials in order to bound the leave-one-out distance.

First in Section \ref{subsec:pre-random-feature}, we introduce some more notations and definitions, and present some well-known results that will help us present the proofs. In Section \ref{subsec:proof-random-feature-perturb}, we proof Theorem \ref{thm:main-theorem-random-feature-with-perturbation} which focus on the smoothed analysis setting. Finally in Section \ref{subsec:proof-deterministic} we prove Theorem \ref{thm:deterministic} where we can give a deterministic condition for the input.

\subsection{Preliminaries}\label{subsec:pre-random-feature}
\paragraph{Representations of symmetric tensors}
Throughout this section, we use $T_d^p$ to denote the space of $p$-th order tensors on $d$ dimensions. That is, $T_d^p = (\R^d)^{\otimes p}$. A tensor $T \in T_d^p$ is symmetric if $T(i_1,i_2,...,i_p) = T(i_{\pi(1)},i_{\pi(2)}, ..., i_{\pi(p)}$ for every permutation $\pi$ from $[p]\to[p]$. We use $X_d^p$ to denote the space of all symmetric tensors in $T_d^p$. 
The dimension of $X_d^p$ is $D_d^p=\binom{p+d-1}{p}$. 

Let $\bar{X}_d^p=\left\{x\in X_d^p\Big|\Vert x\Vert_2=1\right\}$ be the set of unit tensors in $X^p_d$ (as a sub-metric space of $T^p_d$). For $\R^d$, let $\{e_i|i=1,2\cdots d\}$ be its standard orthonormal basis. For simplicity of notation we use $S_p$ to denote the group of bijections (permutations) $[p]\to[p]$, and $I^p_d$ to denote the set of integer indices $I^p_d=\{(i_1,i_2\cdots i_d)\in \N^d|\sum\limits_{j=1}^d i_j=p\}$. We can make $X_d^p$ isomorphic (as a vector space over $\R$) to Euclidean space $\R^{I^p_d}$ ($|I^p_d|=D_d^p$) by choosing a basis $\{s_{(i_1,i_2\cdots i_d)\in I^p_d}=\frac{1}{\prod\limits_{j=1}^d{i_j!}}\sum\limits_{\sigma\in S_p}e_{j_{\sigma(1)}}\otimes e_{j_{\sigma(2)}}\otimes\cdots\otimes e_{j_{\sigma(p)}}|(j_1\circ j_2\circ \cdots \circ j_p)=(1^{(i_1)}\circ2^{(i_2)}\circ\cdots \circ d^{(i_d)})\}$ where $(1^{(i_1)}\circ2^{(i_2)}\circ\cdots \circ d^{(i_d)})$ means a length $p$ string with $i_1$ 1's, $i_2$ 2's and so on, and let the isomorphism be $\phi^p_d$. We call the image of a symmetric tensor through $\phi^p_d$ its \emph{reduced vectorized form}, and we can define a new norm on $X_d^p$ with $\Vert x\Vert_{\text{rv}}=\Vert \phi^p_d(x)\Vert_2$.

Given the definition of \emph{reduced vectorized form} and the norm $\Vert\cdot\Vert_{\text{rv}}$, we have the following lemma that bridges between the norm $\Vert\cdot\Vert_{\text{rv}}$ and the original 2-norm.

\begin{lemma}\label{lem:rv-metric}
        For any $x\in X_n^p$,
        \begin{equation*}
            \Vert x\Vert_{\text{rv}} \geq \frac{1}{\sqrt{p!}}\Vert x\Vert_2.
        \end{equation*}
    \end{lemma}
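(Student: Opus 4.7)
The plan is to unpack the definition of the basis $\{s_I\}_{I \in I^p_n}$ and compute $\|x\|_2^2$ directly in terms of its coefficients $c_I$ in the reduced vectorized form, then observe that the ratio to $\|x\|_{\mathrm{rv}}^2$ is controlled by the multinomial coefficients.

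First I would carefully re-examine the basis element $s_{(i_1,\ldots,i_n)}$. The sum $\sum_{\sigma\in S_p} e_{j_{\sigma(1)}}\otimes\cdots\otimes e_{j_{\sigma(p)}}$ runs over all $p!$ permutations, but the string $(j_1,\ldots,j_p)=(1^{(i_1)}\circ\cdots\circ n^{(i_n)})$ has repeated symbols, so each distinct reordering is produced by exactly $\prod_k i_k!$ permutations. After dividing by $\prod_k i_k!$, the tensor $s_I$ therefore has value $1$ in each of the $\frac{p!}{\prod_k i_k!}$ tensor coordinates whose index-multiset matches $I$, and value $0$ elsewhere. In particular
\[
\|s_I\|_2^2 \;=\; \frac{p!}{\prod_k i_k!}.
\]

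The second key observation is that distinct basis tensors $s_I$ and $s_{I'}$ have disjoint supports in the $n^p$ standard tensor coordinates, since every ordered tuple $(j_1,\ldots,j_p)$ projects to a unique multiset of indices and hence to a unique element of $I^p_n$. Consequently, if $x=\sum_{I\in I^p_n} c_I\, s_I$, the cross terms vanish and
\[
\|x\|_2^2 \;=\; \sum_{I\in I^p_n} c_I^2 \,\|s_I\|_2^2 \;=\; \sum_{I\in I^p_n} c_I^2 \cdot \frac{p!}{\prod_k i_k!}.
\]
Since each $i_k!\ge 1$, we have $\frac{p!}{\prod_k i_k!}\le p!$, and therefore
\[
\|x\|_2^2 \;\le\; p!\,\sum_{I} c_I^2 \;=\; p!\,\|x\|_{\mathrm{rv}}^2,
\]
which rearranges to the claimed inequality $\|x\|_{\mathrm{rv}}\ge \frac{1}{\sqrt{p!}}\|x\|_2$.

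There is essentially no obstacle here; the whole lemma is a bookkeeping exercise about multinomial coefficients. The only step that requires any care is verifying that the coefficient in front of each nonzero entry of $s_I$ really is $1$ (not $\prod i_k!$ or $\tfrac{p!}{\prod i_k!}$), which follows from the cancellation between the normalization $\tfrac{1}{\prod i_k!}$ and the multiplicity with which each distinct ordering is produced by $S_p$. Once that is pinned down, the disjoint-support argument immediately decomposes $\|x\|_2^2$ as a weighted sum of $c_I^2$, and the bound on multinomial coefficients finishes the proof.
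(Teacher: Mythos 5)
Your proof is correct and follows the same route as the paper: expand $x$ in the basis $\{s_I\}$, use that the $s_I$ are orthogonal (disjoint supports) so $\|x\|_2^2=\sum_I c_I^2\|s_I\|_2^2$, and bound $\|s_I\|_2^2=\frac{p!}{\prod_k i_k!}\le p!$. The only difference is that you verify explicitly the facts the paper asserts without detail (the entries of $s_I$ being $0/1$ and hence its norm), which is a fine addition but not a different argument.
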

    \begin{proof}
        We can expand $x$ as $x=\sum\limits_{i\in I_n^p} x_i s_i$. Then $\Vert x\Vert_{\text{rv}}=\sqrt{\sum\limits_{i\in I_n^p} x_i^2}$ and $\Vert x\Vert_2=\sqrt{\sum\limits_{i\in I_n^p} x_i^2 \Vert s_i\Vert_2^2}$ as $\{s_i\}$ are orthogonal. Notice that for $i=(i_1,i_2\cdots i_n)$, $\Vert s_i\Vert_2^2 = \frac{p!}{\prod\limits_{j=1}^n i_j!}\leq p!$, and therefore
        \[\Vert x\Vert_{2} \leq \sqrt{\sum\limits_{i\in I_n^p} x_i^2 p!} =\sqrt{p!} \Vert x\Vert_{\text{rv}}.\]
    \end{proof}

\paragraph{$\varepsilon$-net} Part of our proof uses $\varepsilon$-nets to do a covering argument. Here we give its definition.

\begin{definition}[$\varepsilon$-Net]
    Given a metric space $(X,d)$. A finite set $N\subseteq \cP$ is called an $\varepsilon$-net for $\cP\subset X$ if for every $\bx\in\cP$, there exists $\pi(\bx)\in N$ such that $d(\bx, \pi(\bx))\le\varepsilon$. The smallest cardinality of an $\varepsilon$-net for $\cP$ is called the covering number:
    $\cN(\cP,\varepsilon) = \inf\{|N|:N \text{ is an $\varepsilon$-net of $\cP$}\}$.
\end{definition}


Then we introduce give an upper bound on the size of $\varepsilon$-net of a set $K\subseteq \cR^d$. First, we need the definition of Minkowski sum
\begin{definition}[Minkowski sum]
    Let $A,B\subseteq \cR^d$ be 2 subsets of $\cR^d$, then the Minkowski sum is defined as
    \[A + B := \{a+b:a\in A,b\in B\}.\]
\end{definition}

Then the covering number can be bounded by a volume argument. This is well-known, and the proof can be found in \cite{vershynin2018high}(Proposition 4.2.12 in \cite{vershynin2018high}).

\begin{proposition}[Covering number]\label{prop:covering-number}
    Given a set $K\subseteq \cR^d$ and the corresponding metric $d(x,y) := \Vert x-y\Vert_2$. Suppose that $\varepsilon > 0$, and then we have
    \[\cN(K,\varepsilon)\le \frac{|K+\mathbb B_2^d(\varepsilon / 2)|}{|\mathbb B_2^d(\varepsilon / 2)|},\]
    where $|\cdot|$ denote the volume of the set.
\end{proposition}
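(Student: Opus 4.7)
The plan is a standard packing-covering volume argument. First I would construct a maximal $\varepsilon$-separated subset $N \subseteq K$, that is, a set of points of $K$ whose pairwise distances all exceed $\varepsilon$ and which cannot be enlarged while retaining this property. Such a set exists by Zorn's lemma applied to the partial order (by inclusion) on $\varepsilon$-separated subsets of $K$; if $K$ is bounded and measurable one can also construct it greedily.

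Next I would extract two consequences of maximality. On one hand, for every $x \in K$ there must be some $y \in N$ with $\|x - y\|_2 \leq \varepsilon$, since otherwise $N \cup \{x\}$ would still be $\varepsilon$-separated, contradicting maximality. Hence $N$ is an $\varepsilon$-net for $K$, so $\cN(K,\varepsilon) \leq |N|$. On the other hand, the open balls $\mathbb B_2^d(y, \varepsilon/2)$ centered at points $y \in N$ are pairwise disjoint: if two such balls shared a point then the triangle inequality would force the corresponding centers to lie within distance $\varepsilon$ of one another, violating $\varepsilon$-separation.

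Then I would finish with a volume comparison. Each ball $\mathbb B_2^d(y, \varepsilon/2)$ with $y \in N \subseteq K$ is contained in the Minkowski sum $K + \mathbb B_2^d(\varepsilon/2)$, and the balls are disjoint translates of $\mathbb B_2^d(0, \varepsilon/2)$. Summing volumes gives
\[
|N| \cdot |\mathbb B_2^d(\varepsilon/2)| \;\leq\; |K + \mathbb B_2^d(\varepsilon/2)|,
\]
and dividing through yields $\cN(K,\varepsilon) \leq |N| \leq |K + \mathbb B_2^d(\varepsilon/2)| / |\mathbb B_2^d(\varepsilon/2)|$, which is the claim.

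There is essentially no genuine obstacle here, since this is a classical fact. The only minor technicality is handling possibly unbounded or non-measurable $K$: if $|K + \mathbb B_2^d(\varepsilon/2)|$ is infinite the bound is vacuous, and otherwise the maximal packing $N$ is automatically finite, so the volume-summation step is rigorous.
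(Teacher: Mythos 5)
Your argument is correct and is exactly the standard maximal-packing plus volume-comparison proof; the paper does not reprove this proposition but cites it as Proposition 4.2.12 of \cite{vershynin2018high}, whose proof is the same packing-to-covering argument you give. Your handling of the edge cases (maximality via Zorn's lemma, disjointness of the $\varepsilon/2$-balls, and the infinite-volume case being vacuous) is fine.
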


Then with the help of the previous proposition, we can now bound the covering number of symmetric tensors with unit length.

\begin{lemma}[Covering number of $\bar{X}_d^p$]\label{lem:eps-net}
    There exists an $\varepsilon$-net of $\bar{X}_d^p$ with size $O\left(\left(1+\frac{2\sqrt{p!}}{\varepsilon}\right)^{D^p_d}\right)$, i.e.
    \[\cN(\bar{X}_d^p,\varepsilon) \le O\left(\left(1+\frac{2\sqrt{p!}}{\varepsilon}\right)^{D^p_d}\right).\]
\end{lemma}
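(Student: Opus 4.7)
My plan is to transport the covering problem from $\bar{X}_d^p$ (equipped with the tensor $\ell_2$ norm) to Euclidean space $\R^{D_d^p}$ via the isomorphism $\phi_d^p$, then invoke the standard volume bound from Proposition~\ref{prop:covering-number}. The key observation is that Lemma~\ref{lem:rv-metric} immediately gives the comparison $\Vert z\Vert_2 \leq \sqrt{p!}\,\Vert z\Vert_{\text{rv}}$ between the two natural norms on $X_d^p$, so an $(\varepsilon/\sqrt{p!})$-net in the $\Vert\cdot\Vert_{\text{rv}}$ metric automatically yields an $\varepsilon$-net in the $\Vert\cdot\Vert_2$ metric.

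Concretely, I would first note that $\phi_d^p$ maps $(X_d^p,\Vert\cdot\Vert_{\text{rv}})$ isometrically to $(\R^{D_d^p},\Vert\cdot\Vert_2)$ by definition. Next, expanding any $x\in \bar{X}_d^p$ in the basis $\{s_i\}$ and using $\Vert s_i\Vert_2^2 = p!/\prod_j i_j! \geq 1$ (the minimum being attained when some $i_j=p$), I get $\Vert x\Vert_{\text{rv}} \leq \Vert x\Vert_2 = 1$. Hence $\phi_d^p(\bar{X}_d^p) \subseteq \mathbb{B}_2^{D_d^p}(1)$, the closed unit ball in $\R^{D_d^p}$.

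Now I apply Proposition~\ref{prop:covering-number} with $K = \phi_d^p(\bar{X}_d^p)$ and radius $\varepsilon' := \varepsilon/\sqrt{p!}$. The Minkowski sum satisfies $K + \mathbb{B}_2^{D_d^p}(\varepsilon'/2) \subseteq \mathbb{B}_2^{D_d^p}(1+\varepsilon'/2)$, so the volume ratio evaluates to
\[
\frac{|\mathbb{B}_2^{D_d^p}(1+\varepsilon'/2)|}{|\mathbb{B}_2^{D_d^p}(\varepsilon'/2)|} = \left(1+\frac{2}{\varepsilon'}\right)^{D_d^p} = \left(1+\frac{2\sqrt{p!}}{\varepsilon}\right)^{D_d^p}.
\]
This yields a subset $N'\subseteq K$ of this cardinality that is an $\varepsilon'$-net of $K$ under $\Vert\cdot\Vert_2$, equivalently an $\varepsilon'$-net of $\bar{X}_d^p$ under $\Vert\cdot\Vert_{\text{rv}}$ via the isomorphism. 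Pulling back by $\phi_d^p$ and using $\Vert \cdot\Vert_2 \leq \sqrt{p!}\,\Vert\cdot\Vert_{\text{rv}}$, this net becomes an $\varepsilon$-net of $\bar{X}_d^p$ under the original $\Vert\cdot\Vert_2$, finishing the proof.

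I do not expect a real obstacle here; the only points requiring care are (i) tracking the correct direction of the norm comparison so the factor of $\sqrt{p!}$ lands in the numerator, and (ii) a minor technicality that the net may land in $\phi_d^p(\bar{X}_d^p)$ or in the surrounding ball, which is inconsequential since the definition of covering number allows slight inflation of the net (or one can project each net point onto $\bar{X}_d^p$ at the cost of a factor $2$, absorbed into the constant).
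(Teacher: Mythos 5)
Your proposal is correct and follows essentially the same route as the paper: map $\bar{X}_d^p$ into $\R^{D_d^p}$ via $\phi_d^p$, note $\Vert\phi_d^p(x)\Vert_2\le\Vert x\Vert_2$ so the image sits in the unit ball, cover it at radius $\varepsilon/\sqrt{p!}$ using the volume bound of Proposition~\ref{prop:covering-number}, and pull the net back using Lemma~\ref{lem:rv-metric} to absorb the $\sqrt{p!}$ factor. No substantive differences from the paper's argument.
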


\begin{proof}
    Recall that $\phi^p_d(\cdot):\R^{d^p}\to\R^{D^p_d}$ is an bijection between the symmetric tensors in $\R^{d^p}$ and a vector in $\R^{D^p_d}$. We first show that an $\frac{\varepsilon}{\sqrt{p!}}$-net for the image $\phi^p_d(\bar{X}_d^p)$ implies an $\varepsilon$-net for the unit symmetric tensor $\bar{X}_d^p$.
    
    Suppose that the $\frac{\varepsilon}{\sqrt{p!}}$-net for the image $\phi^p_d(\bar{X}_d^p)$ is denoted as $N\subset \phi^p_d(\bar{X}_d^p)$, and for any $x\in\phi^p_d(\bar{X}_d^p)$, there exists $\pi(x)\in N$ such that $||\pi(x) - x||_2 \le \frac{\varepsilon}{\sqrt{p!}}$. Then we know that $\left(\phi^p_d\right)^{-1}(N)$ is an $\varepsilon$-net for the unit symmetric tensors $\bar{X}_d^p$, because for any $x'\in\bar{X}_d^p$, we have
    \begin{align*}
        \Vert x' - \left(\phi^p_d\right)^{-1}(\pi(\phi^p_d(x'))) \Vert_2 \le& \sqrt{p!}\Vert\phi^p_d(x') - \pi(\phi^p_d(x'))\Vert_2 \\
        \le& \sqrt{p!}\cdot \frac{\varepsilon}{\sqrt{p!}}\\
        =& \varepsilon,
    \end{align*}
    where the first inequality comes from Lemma \ref{lem:rv-metric}.
    
    Next, we bound the covering number for the set $\phi^p_d(\bar{X}_d^p)$. First note that the set satisfies $\phi^p_d(\bar{X}_d^p)\subset\R^{D^p_d}$, and from Proposition \ref{prop:covering-number}, we have
    \begin{align*}
        \cN\left(\phi^p_d(\bar{X}_d^p),\frac{\varepsilon}{\sqrt{p!}}\right) \le& \frac{\bigg|\phi^p_d(\bar{X}_d^p)+\mathbb B_2^{D^p_d}(\frac{\varepsilon}{2\sqrt{p!}})\bigg|}{\bigg|\mathbb B_2^{D^p_d}(\frac{\varepsilon}{2\sqrt{p!}})\bigg|}\\
        \le& \frac{\bigg|\mathbb B_2^{D^p_d}(1)+\mathbb B_2^{D^p_d}(\frac{\varepsilon}{2\sqrt{p!}})\bigg|}{\bigg|\mathbb B_2^{D^p_d}(\frac{\varepsilon}{2\sqrt{p!}})\bigg|}\\
        =&\left(1+\frac{2\sqrt{p!}}{\varepsilon}\right)^{D^p_d},
    \end{align*}
    where the first inequality comes from Proposition \ref{prop:covering-number} and the second inequality comes from the fact that $||\phi^p_d(x)||_2 \le ||x||_2$.
\end{proof}

\paragraph{Leave-one-out Distance} Another main ingredient in our proof is  \emph{Leave-one-out distance}. This is a notion that is closely related to the smallest singular value, but usually much easier to compute and bound. It has been widely used in random matrix theory, for example in \cite{rudelson2009smallest}. 

\begin{definition}[Leave-one-out distance]
    For a set of vectors $V=\{v_1,v_2\cdots v_n\}$, their leave-one-out distance is defined as 
    \[l(V)=\min\limits_{1\leq i\leq n}\inf\limits_{a_1,a_2\cdots a_n\in R}\Vert v_i-\sum\limits_{j\neq i}a_jv_j\Vert_2.\] 
    For a matrix $M$, its leave-one-out distance $l(M)$ is the leave-one-out distance of its columns.
\end{definition}

The leave-one-out distance is connected with the smallest singular value by the following lemma:

\begin{lemma}[Leave-one-out distance and smallest singular value]\label{lem:loo-distance-and-singular}
    For a matrix $M\in R^{m\times n}$ with $m\geq n$, let $l(M)$ denote the leave-one-out distance for the columns of $M$, and $\sigma_{\min}(M)$ denote the smallest singular value of $M$, then
    \begin{equation}\nonumber
        \frac{l(M)}{\sqrt{n}}\leq\sigma_{\min}(M)\leq l(M).
    \end{equation}
\end{lemma}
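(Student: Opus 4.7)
The plan is to prove the two inequalities of $\frac{l(M)}{\sqrt{n}} \le \sigma_{\min}(M) \le l(M)$ separately, each via the variational characterization $\sigma_{\min}(M) = \min_{\|x\|_2 = 1} \|Mx\|_2$ combined with the definition of leave-one-out distance. Both directions are short; the only care needed is in choosing the right test vector.

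For the upper bound $\sigma_{\min}(M) \le l(M)$, I would pick the column index $i^\ast$ and coefficients $\{a_j^\ast\}_{j \ne i^\ast}$ that attain the leave-one-out minimum, so that $\|v_{i^\ast} - \sum_{j\ne i^\ast} a_j^\ast v_j\|_2 = l(M)$. Then I define the test vector $x \in \R^n$ by $x_{i^\ast} = 1$ and $x_j = -a_j^\ast$ for $j \ne i^\ast$, which yields $Mx = v_{i^\ast} - \sum_{j\ne i^\ast} a_j^\ast v_j$, so $\|Mx\|_2 = l(M)$ while $\|x\|_2 \ge |x_{i^\ast}| = 1$. Hence $\sigma_{\min}(M) \le \|Mx\|_2/\|x\|_2 \le l(M)$.

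For the lower bound $\sigma_{\min}(M) \ge l(M)/\sqrt{n}$, I would take an arbitrary unit vector $x \in \R^n$ and let $i^\ast = \arg\max_j |x_j|$. A pigeonhole argument on $\|x\|_2^2 = \sum_j x_j^2 = 1$ gives $|x_{i^\ast}| \ge 1/\sqrt{n}$. I would then factor
\[
Mx = \sum_j x_j v_j = x_{i^\ast}\Bigl(v_{i^\ast} - \sum_{j \ne i^\ast} \bigl(-x_j/x_{i^\ast}\bigr) v_j\Bigr).
\]
The inner expression is, by the definition of $l(M)$, at least $l(M)$ in norm, so $\|Mx\|_2 \ge |x_{i^\ast}| \cdot l(M) \ge l(M)/\sqrt{n}$. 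Minimizing over unit $x$ yields $\sigma_{\min}(M) \ge l(M)/\sqrt{n}$.

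There is essentially no obstacle here; this is a classical identity. The only two substantive choices are (i) placing the coefficient $1$ at coordinate $i^\ast$ in the upper bound so that $\|x\|_2 \ge 1$ automatically, avoiding any renormalization, and (ii) using the elementary fact $\max_j |x_j| \ge 1/\sqrt{n}$ for a unit vector in the lower bound, which is where the $\sqrt{n}$ factor comes from.
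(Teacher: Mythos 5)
Your proof is correct and follows essentially the same route as the paper: both rest on the variational characterization $\sigma_{\min}(M)=\min_{\|x\|_2=1}\|Mx\|_2$ together with the observation that the largest coordinate of the test vector controls the comparison (the paper phrases it by rescaling $x$ so its largest coordinate equals $1$, giving $1\le\|x\|_2\le\sqrt{n}$, which packages your two directions into one computation). The only cosmetic difference is that you invoke attainment of the leave-one-out infimum in the upper bound, which is harmless since it is a distance to a finite-dimensional (hence closed) subspace, and could in any case be replaced by an $\epsilon$-approximate minimizer.
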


We give the proof for completeness.

\begin{proof}
    For any $x\in \R^n\backslash \{0\}$, let $r(x)=\mathop{\text{argmax}}\limits_{i\in[n]}|x_i|$, then $|x_{r(x)}|>0$ for $x\neq 0$.
    
    Because $l(M)=\min\limits_{i\in [n]}\inf\limits_{x\in \R^n, x_i=1}\Vert Mx\Vert_2$, we have
    \begin{align*}
        \sigma_{\min}(M)=&\inf\limits_{x\in R^n\backslash 0}\frac{\Vert Mx\Vert_2}{\Vert x\Vert_2}\\
        =&\min\limits_{i\in[n]}\inf\limits_{x\in R^n\backslash 0, r(x)=i}\frac{\Vert M\frac{x}{x_i}\Vert_2}{\Vert \frac{x}{x_i}\Vert_2}\\
        =&\min\limits_{i\in[n]}\inf\limits_{x'\in R^n\backslash 0, x'_i=1}\frac{\Vert Mx'\Vert_2}{\Vert x'\Vert_2}.
    \end{align*}
    Because of the equations $\Vert x'\Vert_2\geq |x'_i|=1$ and $\Vert x'\Vert_2=\sqrt{\sum\limits_{j\in [n]}x_j^2}\leq \sqrt{n}|x'_i|=\sqrt{n}$, we have $\frac{l(M)}{\sqrt{n}}\leq\sigma_{\min}(M)\leq l(M)$.
\end{proof}

\paragraph{Anti-concentration}


To make use of the random Gaussian noise added in the smoothed analysis setting, we rely on the following anti-concentration result by \cite{carbery2001distributional}:

\begin{restatable}[Anti-concentration (\cite{carbery2001distributional})]{proposition}{thmanticoncentration}\label{prop:anti-concentration}
    For a multivariate polynomial $f(x)=f(x_1,x_2\cdots x_n)$ of degree $p$, let $x\sim \cN(0,1)^n$ follows the standard normal distribution, and $\Var[f]\geq 1$, then for any $t\in \R$ and $\varepsilon>0$,
    \begin{equation}
    \Pr_x[|f(x)-t|\leq\varepsilon]\leq O(p)\varepsilon^{1/p}
    \end{equation}
\end{restatable}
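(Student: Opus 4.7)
The plan is to prove this multivariate anti-concentration bound by reducing it to univariate polynomial anti-concentration, which in turn follows from root counting.

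First I would normalize. Setting $g := f - t$, we have $\|g\|_{L^2}^2 = \Var[f] + (\E[f]-t)^2 \ge 1$, so after rescaling $g \leftarrow g/\|g\|_{L^2}$ and $\varepsilon \leftarrow \varepsilon/\|g\|_{L^2} \le \varepsilon$, it suffices to show: for any degree-$p$ polynomial $h$ on $(\R^n,\gamma_n)$ with $\|h\|_{L^2}=1$, $\Pr_{x\sim\gamma_n}[|h(x)|\le \varepsilon] \le O(p)\,\varepsilon^{1/p}$.

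The main step is a slicing argument. Pick a direction $v\in\R^n$ with $\|v\|=1$, and decompose $x = x_\perp+\lambda v$ with $x_\perp \perp v$; under $\gamma_n$ these components are independent with $\lambda\sim\cN(0,1)$. For each fixed $x_\perp$, view $P_{x_\perp}(\lambda) := h(x_\perp+\lambda v)$ as a univariate polynomial of degree $\le p$. For any univariate $P(\lambda)=a\prod_{i=1}^p(\lambda-r_i)$ with $a\ne 0$, the condition $|P(\lambda)|\le\varepsilon$ forces $\min_i|\lambda-r_i|\le(\varepsilon/|a|)^{1/p}$ by AM--GM, so $\{|P|\le\varepsilon\}\cap\R$ has Lebesgue measure at most $2p(\varepsilon/|a|)^{1/p}$, and since the 1D Gaussian density is bounded by $1/\sqrt{2\pi}$,
$\Pr_\lambda[|P_{x_\perp}(\lambda)|\le\varepsilon] \le \frac{2p}{\sqrt{2\pi}}\bigl(\varepsilon/|a(x_\perp)|\bigr)^{1/p}$,
where $a(x_\perp)$ is the leading coefficient of $P_{x_\perp}$ as a polynomial in $\lambda$.

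Integrating over $x_\perp$ yields $\Pr[|h|\le\varepsilon] \le \frac{2p}{\sqrt{2\pi}}\varepsilon^{1/p}\,\E_{x_\perp}[|a(x_\perp)|^{-1/p}]$, so the remaining task is to bound the negative moment $\E[|a|^{-1/p}] = O(1)$. Here is where Gaussian hypercontractivity enters: for any degree-$p$ polynomial $g$, $\|g\|_{L^q} \le (q-1)^{p/2}\|g\|_{L^2}$ for $q\ge 2$, and $a(x_\perp)$ is itself a polynomial in $x_\perp$ of degree $\le p$. With a suitable choice of direction $v$ (e.g., averaged uniformly over the sphere, so that $\E_v[\|a\|_{L^2(x_\perp)}^2]$ is comparable to $\|h\|_{L^2}^2 = 1$), the hypercontractive moment bound controls the positive moments of $|a|$.

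The main obstacle is exactly bounding $\E[|a|^{-1/p}]$: moment inequalities go in the wrong direction for negative moments, and trying to bound $\Pr[|a|\le\delta]$ by Markov on $|a|$ is circular because it is itself an instance of the same anti-concentration problem in lower degree. The clean way around this is induction on $p$: differentiating $h$ along $v$ relates $a(x_\perp)$ to a lower-degree polynomial whose anti-concentration is already known, and one peels off a factor of $p$ per inductive step, yielding the final $O(p)\varepsilon^{1/p}$ bound. This is essentially the Carbery--Wright strategy, which in their original form uses log-concavity of the Gaussian measure together with Bobkov-type dilation inequalities to carry out the induction without needing to explicitly choose and average over directions $v$.
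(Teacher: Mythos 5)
The paper does not prove this proposition at all: it is imported verbatim from \cite{carbery2001distributional} as a black-box tool, so there is no internal proof to compare against. Judged on its own terms, your attempt has a genuine gap at exactly the step you flag. The slicing-plus-root-counting reduction only bounds $\Pr_\lambda[|P_{x_\perp}(\lambda)|\le\varepsilon]$ in terms of the coefficient of $\lambda^p$ in $h(x_\perp+\lambda v)$, and that coefficient is $h_p(v)$, the top homogeneous part of $h$ evaluated at $v$ (it does not even depend on $x_\perp$ when it is nonzero). Nothing forces $h_p$ to carry a constant fraction of $\|h\|_{L^2}$: for $h(x)=c(x_1+\delta x_2^p)$ with $\delta\to 0$ (normalized so $\|h\|_{L^2}=1$), the leading coefficient along essentially every direction is $O(\delta)$, so your bound degenerates even though $h$ is perfectly anti-concentrated thanks to its linear part. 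Hence the claim that averaging over $v$ makes $\E_v[\|a\|_{L^2}^2]$ comparable to $\|h\|_{L^2}^2=1$ is false, and hypercontractivity cannot rescue it: it gives upper bounds on positive moments, which say nothing about $\E[|a|^{-1/p}]$ or small-ball probabilities for $a$ --- as you yourself note, bounding $\Pr[|a|\le\delta]$ is again an anti-concentration problem.

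Your proposed escape, ``induction on $p$ by differentiating along $v$,'' is precisely the hard part of the theorem and is not carried out. Knowing that $\partial_v h$ is anti-concentrated does not by itself control the Gaussian measure of $\{|h|\le\varepsilon\}$; converting derivative information into a measure bound for the sublevel set is where Carbery and Wright invoke log-concavity of the Gaussian measure and dilation (Remez/Bobkov-type) inequalities for sublevel sets of polynomials, and none of that machinery appears in your argument. So what you have is a correct and standard reduction of the univariate case (the $2p(\varepsilon/|a|)^{1/p}$ root-counting bound is fine, including for complex roots), plus an acknowledgment that the multivariate heart of the matter is ``essentially the Carbery--Wright strategy.'' That is a citation, not a proof. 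Since the paper itself only cites the result, the honest options are either to do the same --- cite \cite{carbery2001distributional} and move on --- or to commit to a full argument, which would require proving the sublevel-set dilation inequality for log-concave measures and running the induction in detail rather than gesturing at it.
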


\paragraph{Gaussian moments}

To apply the anti-concentration result, we need to give lower bound of the variance of a polynomial when the variables follow standard Gaussian distribution $\cN(0,1)$. Next, we will show some definitions, propositions, and lemmas that will help us to give lower bound for variance of polynomials. 

\begin{proposition}[Gaussian moments]
    if $x\sim \cN(0,1)$ is a Gaussian variable, then for $p\in N$, $\E_x[x^{2p}]=\frac{(2p)!}{2^p(p!)}\leq 2^pp!$; $\E_x[x^{2p+1}]=0$.
\end{proposition}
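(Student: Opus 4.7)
The plan is to handle the three claims separately, as each is standard but uses a different trick.

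First, for the odd moments, I would use the symmetry of the Gaussian density: since $\varphi(x) = \frac{1}{\sqrt{2\pi}} e^{-x^2/2}$ is even in $x$, the integrand $x^{2p+1}\varphi(x)$ is odd, and (after checking absolute integrability so that Fubini/splitting of the integral is valid) the integral over $\R$ vanishes. This gives $\E_x[x^{2p+1}]=0$ with essentially no computation.

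Second, for the even moments, I would derive the well-known recursion $\E_x[x^{2p}] = (2p-1)\E_x[x^{2p-2}]$ by integration by parts, using the identity $x\varphi(x) = -\varphi'(x)$:
\begin{equation*}
\E_x[x^{2p}] = \int_{\R} x^{2p-1} \cdot x\varphi(x)\,dx = -\int_{\R} x^{2p-1} \varphi'(x)\,dx = (2p-1)\int_{\R} x^{2p-2} \varphi(x)\,dx.
\end{equation*}
Iterating this recursion from the base case $\E_x[x^0] = 1$ yields $\E_x[x^{2p}] = (2p-1)!! = (2p-1)(2p-3)\cdots 3\cdot 1$. Then I would rewrite the double factorial in closed form by inserting the missing even factors:
\begin{equation*}
(2p-1)!! = \frac{(2p)!}{(2p)(2p-2)\cdots 2} = \frac{(2p)!}{2^p p!},
\end{equation*}
which is the claimed identity.

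Finally, for the upper bound $\frac{(2p)!}{2^p p!} \le 2^p p!$, I would rearrange it as $(2p)! \le 4^p (p!)^2$, which is equivalent to the central binomial coefficient bound $\binom{2p}{p} \le 2^{2p}$. This follows from the trivial inequality $\binom{2p}{p} \le \sum_{k=0}^{2p}\binom{2p}{k} = 2^{2p}$.

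None of the three steps is a real obstacle; the only thing to be slightly careful about is writing the integration-by-parts step cleanly (noting the boundary terms vanish because $\varphi$ decays faster than any polynomial grows) and the rewriting of the double factorial into the closed form. The proof is short and elementary, so in the actual write-up I would likely just state the recursion, give one line for the closed form, and invoke $\binom{2p}{p}\le 4^p$ for the bound.
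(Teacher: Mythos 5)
Your proof is correct, and in fact the paper itself offers no proof of this proposition at all---it is stated as a well-known fact about Gaussian moments (the remark ``we include the proofs here for completeness'' refers to the subsequent Hermite-expansion propositions, not to this one). Your three steps are all sound: the odd moments vanish by the symmetry of the density, the integration-by-parts recursion $\E[x^{2p}]=(2p-1)\E[x^{2p-2}]$ with vanishing boundary terms yields $(2p-1)!!=\frac{(2p)!}{2^p p!}$, and the bound $\frac{(2p)!}{2^p p!}\le 2^p p!$ is exactly the central binomial estimate $\binom{2p}{p}\le 2^{2p}$, so your write-up supplies a complete elementary argument for a statement the paper leaves unproved.
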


\begin{definition}[Hermite polynomials]
    In this paper, we use the normalized Hermite polynomials, which are univariate polynomials which form an orthogonal polynomial basis under the normal distribution. Specifically, they are defined by the following equality
    \begin{equation*}
        H_n(x)=\frac{(-1)^ne^{\frac{x^2}{2}}}{\sqrt{n!}}\left(\frac{d^n e^{-\frac{x^2}{2}}}{dx^n}\right)
    \end{equation*}

\end{definition}
The Hermite polynomials in the above definition forms a set of orthonormal basis of polynomials in the standard Normal distribution. For a polynomial $f: \R^n\to \R$, let $f(x)=\sum\limits_{i\in I_n^{\leq p}} f^M_i \prod\limits_{j=1}^n x_j^{i_j}$ and $f(x)=\sum\limits_{i\in I_n^{\leq p}} f^H_i \prod\limits_{j=1}^n H_{i_j}(x_j)$ be its expansions in the basis of monomials and Hermite polynomials respectively ($H_k$ is the Hermite polynomial of order $k$). Let the index set $I_n^{\leq p}=\bigcup\limits_{j=0}^p I_n^j$. We have the following propositions. The propositions are well-known and easy to prove. We include the proofs here for completeness.
\begin{proposition}
    for $i\in I_n^p$, $f^M_i=\left(\prod\limits_{j=1}^n \frac{1}{\sqrt{i_j!}}\right) f^H_i$\label{pro1}
\end{proposition}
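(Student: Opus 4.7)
The plan is to prove this by matching the top-degree coefficient of each side. Because $\{H_k(x_j)\}$ only differ from monomials by lower-order terms, the leading behavior of a Hermite product $\prod_j H_{i_j}(x_j)$ exactly matches the monomial $\prod_j x_j^{i_j}$ up to a known scalar. Since the polynomial $f$ has total degree at most $p$, the monomials indexed by $i \in I_n^p$ sit at the top of the degree, and there is nothing above to contaminate the identification.

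First, I would record the leading-coefficient formula for the normalized Hermite polynomial: directly from the Rodrigues-type definition
\begin{equation*}
    H_n(x) = \frac{(-1)^n e^{x^2/2}}{\sqrt{n!}}\left(\frac{d^n e^{-x^2/2}}{dx^n}\right),
\end{equation*}
one obtains $H_n(x) = \frac{1}{\sqrt{n!}} x^n + q_n(x)$ with $\deg q_n \le n-1$. This can be checked either by induction from the standard three-term recurrence for $H_n$, or by directly evaluating the $n$-fold derivative of $e^{-x^2/2}$ and noting that the highest-order term is $(-1)^n x^n e^{-x^2/2}$.

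Next, I would expand a basis element $\prod_{j=1}^n H_{i_j'}(x_j)$ for an arbitrary multi-index $i' \in I_n^{\le p}$: it equals $\prod_j \frac{1}{\sqrt{i_j'!}} x_j^{i_j'}$ plus a polynomial all of whose monomials have degree strictly less than $|i'| := \sum_j i_j'$ in at least one coordinate---formally, each such lower-order monomial has multidegree $i'' \in \N^n$ with $i'' \le i'$ componentwise and $i'' \neq i'$. Therefore, for a fixed multi-index $i \in I_n^p$, the monomial $\prod_j x_j^{i_j}$ of total degree exactly $p$ can appear in $\prod_j H_{i_j'}(x_j)$ only if $i_j' \ge i_j$ for all $j$; combined with $|i'| \le p = |i|$, this forces $i' = i$, in which case the coefficient is $\prod_j \frac{1}{\sqrt{i_j!}}$.

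Finally, matching the coefficient of $\prod_j x_j^{i_j}$ on both sides of the expansion identity $\sum_{i'} f^M_{i'} \prod_j x_j^{i_j'} = \sum_{i'} f^H_{i'} \prod_j H_{i_j'}(x_j)$ yields
\begin{equation*}
    f^M_i \;=\; \left(\prod_{j=1}^n \frac{1}{\sqrt{i_j!}}\right) f^H_i,
\end{equation*}
which is the claim. There is no real obstacle here; the only thing to be careful about is the direction of the triangular relationship between monomial and Hermite bases, namely that $\prod_j H_{i_j'}$ has monomial support only in multi-indices $\le i'$ componentwise, so that when $|i| = p$ hits the top degree, no cancellations with other Hermite basis terms are possible.
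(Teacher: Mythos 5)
Your proof is correct and follows essentially the same route as the paper: both arguments rest on the triangularity of the Hermite-to-monomial change of basis (the monomial $\prod_j x_j^{i_j}$ with $|i|=p$ can only come from the Hermite term indexed by $i$ itself) together with the leading coefficient $1/\sqrt{i_j!}$ of the normalized $H_{i_j}$. Your write-up is just slightly more explicit about verifying the leading coefficient and the componentwise-degree argument forcing $i'=i$.
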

\begin{proof}
    Consider $i=(i_1,i_2\cdots i_n)\in I^p_n$, in the monomial expansion, the coefficient for the monomial $M_i=\prod\limits_{j=1}^n x_j^{i_j}$ is $f^M_i$. In the Hermite expansion, since $H_n(x)$ is an order-$n$ polynomial, if the term 
    $\prod\limits_{j=1}^n H_{i'_j}(x_j)$ contain the monomial $M_i$, there must be $i'_j\geq i_j$, and therefore for $i\in I_n^p$ the only term in the Hermite expansion that contains $M_i$ is 
    $f^H_i\prod\limits_{j=1}^n H_{i_j}(x_j)$ (with $M_i$ as its highest order monomial). The coefficient for $x_j^{i_j}$ in $H_{i_j}(x_j)$ is $\frac{1}{\sqrt{i_j!}}$, and therefore $f^M_i=\left(\prod\limits_{j=1}^n \frac{1}{\sqrt{i_j!}}\right) f^H_i$ 
\end{proof}
\begin{proposition}
    For $x\sim \cN(0,1)^n$, $E_x[f]=f^H_{0^n}$, $E_x[f^2]=\sum\limits_{i\in I_n^{\leq p}}(f^H_i)^2$ ($0^n$ refers to the index $(0,0,0\cdots 0)\in I_n^0$).\label{pro2}
\end{proposition}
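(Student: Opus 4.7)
The plan is to derive both identities from the fact that the normalized Hermite polynomials $\{H_k\}_{k\ge 0}$ form an orthonormal basis of $L^2(\R, \gamma)$, where $\gamma$ is the standard Gaussian measure on $\R$. The first step is to verify the single-variable orthonormality relation $\E_{x\sim \cN(0,1)}[H_m(x) H_n(x)] = \delta_{mn}$. Assuming WLOG $m \le n$, this follows directly from the Rodrigues-type definition given in the paper: writing $H_n(x) = \frac{(-1)^n}{\sqrt{n!}} e^{x^2/2}\frac{d^n}{dx^n}e^{-x^2/2}$, the integral $\int_\R H_m(x) H_n(x) e^{-x^2/2}\,dx / \sqrt{2\pi}$ can be evaluated by integrating by parts $n$ times against the $n$-th derivative of the Gaussian density. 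Each integration moves a derivative onto $H_m$; since $H_m$ is a polynomial of degree $m \le n$, this vanishes whenever $m<n$, while for $m=n$ the $n$-th derivative of $H_n$ equals $\sqrt{n!}$ (the coefficient of its leading monomial is $1/\sqrt{n!}$ by Proposition~\ref{pro1}), and the remaining Gaussian integral equals $1$, yielding exactly $\delta_{mn}$.

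Because the coordinates of $x \sim \cN(0,1)^n$ are independent, the product relation
\begin{equation*}
\E_x\!\left[\prod_{j=1}^n H_{i_j}(x_j)\, \prod_{j=1}^n H_{i'_j}(x_j)\right] \;=\; \prod_{j=1}^n \E_{x_j}[H_{i_j}(x_j) H_{i'_j}(x_j)] \;=\; \prod_{j=1}^n \delta_{i_j, i'_j}
\end{equation*}
follows by Fubini. In particular, since $H_0 \equiv 1$, specializing $i' = 0^n$ gives $\E_x[\prod_j H_{i_j}(x_j)] = 1$ when $i = 0^n$ and $0$ otherwise.

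For the first claim, expanding $f(x) = \sum_{i \in I_n^{\le p}} f^H_i \prod_j H_{i_j}(x_j)$ and applying linearity of expectation, every term vanishes except $i = 0^n$, so $\E_x[f] = f^H_{0^n}$. For the second claim, expanding $f^2$ as a double sum and applying the multivariate orthonormality above collapses it to the diagonal:
\begin{equation*}
\E_x[f^2] \;=\; \sum_{i,i' \in I_n^{\le p}} f^H_i f^H_{i'}\, \E_x\!\left[\prod_j H_{i_j}(x_j) H_{i'_j}(x_j)\right] \;=\; \sum_{i \in I_n^{\le p}} (f^H_i)^2,
\end{equation*}
which is just Parseval's identity for the Hermite basis.

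The statement is a standard fact from analysis, so I do not anticipate a substantive obstacle. The only subtlety worth flagging is the normalization convention: the paper's definition of $H_n$ includes the factor $1/\sqrt{n!}$, which is exactly what makes the Rodrigues calculation produce $\delta_{mn}$ (rather than $n!\,\delta_{mn}$), and in turn is why the second identity has no extra combinatorial weights in front of $(f^H_i)^2$. Keeping track of this $1/\sqrt{n!}$ is also consistent with Proposition~\ref{pro1}, which converts Hermite coefficients to monomial coefficients and is used later together with this proposition to bound variances of polynomials.
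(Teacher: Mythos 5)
Your proposal is correct and follows essentially the same route as the paper: expand $f$ in the multivariate Hermite basis, use orthonormality of the one-dimensional $H_k$ under the Gaussian weight together with independence of the coordinates, and collapse the (double) sum. The only difference is that you actually verify $\E[H_mH_n]=\delta_{mn}$ by repeated integration by parts (correctly tracking the $1/\sqrt{n!}$ normalization via Proposition~\ref{pro1}), whereas the paper only computes $\E[H_n]=\delta_{n0}$ explicitly and cites the pairwise orthonormality as well known.
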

\begin{proof}
    Firstly, let $w(x)=\frac{1}{\sqrt{2\pi}}e^{-x^2/2}$ be the PDF of $\cN(0,1)$, then
    \begin{align*}
        \int\limits_{-\infty}^{\infty} H_n(x)w(x)dx & = \frac{(-1)^n}{\sqrt{2\pi n!}} \int\limits_{-\infty}^{\infty}\left[\frac{d^n e^{-x^2/2}}{dx^n}\right]dx\\
        & = \left\{
        \begin{matrix}
            0 & n\geq 1\\
            1 & n=0
        \end{matrix}
        \right.,
    \end{align*}
    as a result of $\frac{d^n e^{-x^2/2}}{dx^n}\rightarrow 0$ when $x\rightarrow \pm\infty$ for $n\geq 0$. Besides,
    \begin{align*}
        \int\limits_{-\infty}^{\infty} H_n(x)H_m(x)w(x)dx & = \delta_{nm}
    \end{align*}    
    for its well-known orthogonality in Guassian distribution (with $\delta_{nm}=\I[n=m]$ as the Kronecker function). Therefore,
    \begin{align*}
        E_x[f] & =\sum\limits_{i\in I_n^{\leq p}}f_i^H \prod\limits_{j\in [n]}\int\limits_{-\infty}^{\infty} H_{i_j}(x_j) w(x_j) dx_j \\
        & =\sum\limits_{i\in I_n^{\leq p}}f_i^H \prod\limits_{j\in [n]}\I[i_j=0]\\
        & =f_{0^n}^H,
    \end{align*}
    \begin{align*}
        E_x[f^2] & =\sum\limits_{i,i'\in I_n^{\leq p}}f_i^Hf_{i'}^H \prod\limits_{j\in [n]}\int\limits_{-\infty}^{\infty} H_{i_j}(x_j)H_{i'_j}(x_j) w(x_j) dx_j \\
        & =\sum\limits_{i,i'\in I_n^{\leq p}}f_i^Hf_{i'}^H \prod\limits_{j\in [n]}\I[i_j=i'_j]\\
        & =\sum\limits_{i\in I_n^{\leq p}} (f_i^H)^2.
    \end{align*}
\end{proof}

Then, we have the following lemma that lower bounds the variance of a polynomial with some structure. Given the following lemma, we can apply the anti-concentration results in the proof of Theorem \ref{thm:main-theorem-random-feature-with-perturbation} and \ref{thm:deterministic}.

\begin{lemma}[Variance]\label{lem:variance}
Let $f(x)=f(x_1,x_2\cdots x_d)$ be a homogeneous multivariate polynomial of degree $p$, then there is a symmetric tensor $M\in X_d^p$ that $f(x)=\langle M,x^{\otimes p}\rangle$. For all $x_0\in \R^d$, when $x\sim \cN(0,1)^d$,
\begin{equation*}
    \Var_x[f(x_0+x)]\geq \Vert M\Vert_{\text{rv}}^2
\end{equation*}
\end{lemma}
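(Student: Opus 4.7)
The plan is to expand the variance in the Hermite basis, keep only the contributions from top-degree Hermite coefficients, translate those back into monomial coefficients, and finally relate the monomial coefficients to the reduced vectorized form of $M$.

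\textbf{Step 1 (reduce to the top-degree part).} Let $g(x) = f(x_0+x)$. Since $f$ is homogeneous of degree $p$, the only contribution to $g$ of total degree $p$ in $x$ comes from selecting $x$ (rather than $x_0$) in every tensor slot of $(x_0+x)^{\otimes p}$. Hence the degree-$p$ homogeneous component of $g$ is exactly $f(x)$, and in particular $g^M_i = f^M_i$ for every $i \in I_d^p$. All lower-degree terms show up as Hermite coefficients of index with $\sum_j i_j < p$, which we will throw away.

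\textbf{Step 2 (apply Propositions~\ref{pro1} and \ref{pro2}).} By Proposition~\ref{pro2}, $\Var_x[g(x)] = \sum_{i \in I_d^{\le p},\, i \ne 0^d} (g^H_i)^2 \ge \sum_{i \in I_d^p} (g^H_i)^2$. For $i \in I_d^p$ a direct degree-counting argument (identical to the one in the proof of Proposition~\ref{pro1}) shows that the only Hermite term that can produce the monomial $\prod_j x_j^{i_j}$ in $g$ is the one with the same index, so $g^H_i$ is determined by $g^M_i$: $g^H_i = \bigl(\prod_j \sqrt{i_j!}\bigr)\, g^M_i = \bigl(\prod_j \sqrt{i_j!}\bigr)\, f^M_i$.

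\textbf{Step 3 (translate monomial coefficients to reduced vectorized coefficients).} Expand $M = \sum_{i \in I_d^p} M^{\mathrm{rv}}_i\, s_i$. Using the definition of $s_i$ in the excerpt, a direct computation gives
\[
\langle s_i, x^{\otimes p}\rangle = \frac{p!}{\prod_j i_j!}\prod_j x_j^{i_j},
\]
since $s_i$ is a sum of $\frac{p!}{\prod_j i_j!}$ distinct rank-one tensors with coefficient $1$ each. Therefore $f^M_i = \frac{p!}{\prod_j i_j!}\, M^{\mathrm{rv}}_i$ for every $i \in I_d^p$.

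\textbf{Step 4 (combine and use $\prod_j i_j! \le p!$).} Assembling the three steps,
\[
\Var_x[g(x)] \;\ge\; \sum_{i \in I_d^p}\Bigl(\prod_j i_j!\Bigr)(f^M_i)^2 \;=\; \sum_{i \in I_d^p}\frac{(p!)^2}{\prod_j i_j!}(M^{\mathrm{rv}}_i)^2 \;\ge\; p! \sum_{i \in I_d^p}(M^{\mathrm{rv}}_i)^2 \;\ge\; \|M\|_{\mathrm{rv}}^2,
\]
where the second-to-last inequality uses the elementary fact that $\prod_j i_j! \le (\sum_j i_j)! = p!$ whenever $i \in I_d^p$.

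The only delicate point is Step~3, the combinatorial bookkeeping that relates the symmetric-tensor inner product $\langle M, x^{\otimes p}\rangle$ to the monomial basis through the specific normalization of $s_i$; all other steps are routine manipulation of Hermite expansions. Note that the proof actually yields the stronger bound $\Var_x[g(x)] \ge p!\,\|M\|_{\mathrm{rv}}^2$, but the stated lemma suffices for the downstream anti-concentration application.
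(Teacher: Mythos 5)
Your proof is correct and follows essentially the same route as the paper: expand $g(x)=f(x_0+x)$ in the Hermite basis, discard all but the top-degree coefficients, and convert back to monomial coefficients via Propositions~\ref{pro1} and~\ref{pro2}. Your Step~3 is in fact more careful than the paper's own proof, which simply identifies $(f^M_i)_{i\in I_d^p}$ with the reduced vectorized form of $M$ even though they differ by the multinomial factor $p!/\prod_j i_j!\ge 1$ (so the paper's final equality should really be an inequality); your bookkeeping fixes this and yields the stronger bound $\Var_x[f(x_0+x)]\ge p!\,\Vert M\Vert_{\text{rv}}^2$.
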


\begin{proof} We can view $f(x_0+x)$ as a polynomial with respect to $x$ and let $f^M_i$ and $f^H_i$ be the coefficients of its expansion in the monomial basis and Hermite polynomial basis respectively (with variable $x$). It's clear to see that $(f^M_i|i\in I_n^p)$ is the reduced vectorized form of $M$. From the Proposition \ref{pro1} and \ref{pro2}, we have
    \begin{align*}
        \Var[f(x_0+x)]=&\E[f(x_0+x)^2]-\E[f(x_0+x)]^2\\
        =&\sum_{i\in I_n^{\leq p}\backslash 0^n}(f^H_i)^2\\
        \ge& \sum_{i\in I_n^{p}}(f^H_i)^2\geq\sum_{i\in I_n^{p}} (f^M_i)^2\\
        =&\Vert M\Vert_{\text{rv}}^2.
    \end{align*}
\end{proof}

We also need a variance bound for two sets of random variables

\begin{lemma}\label{lem:variance2}
Let $f(x)=f(x_1,x_2\cdots x_d)$ be a homogeneous multivariate polynomial of degree $2p$, then there is a symmetric tensor $M\in X_n^p$ that $f(x)=\langle M,x^{\otimes 2p}\rangle$. For all $u_0, v_0 \in \R^d$, when $u,v\sim \cN(0,I_d)$, we have
\begin{equation*}
    \Var_{u,v}[\langle M, (u_0+u)^{\otimes p}\otimes(v_0+v)^{\otimes p}\rangle]\geq \frac{1}{(2p)!}\Vert M\Vert_{\text{rv}}^2
\end{equation*}
\end{lemma}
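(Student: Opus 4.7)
The plan is to mirror the structure of Lemma \ref{lem:variance}, but now performing the Hermite expansion in the $2d$-dimensional Gaussian $(u,v)\sim\cN(0,I_{2d})$. Define $h(u,v) := \langle M, (u_0+u)^{\otimes p}\otimes(v_0+v)^{\otimes p}\rangle$, which is a polynomial of total degree $2p$ in the variables $(u,v)$. Expanding each tensor factor, the monomials of maximal total degree $2p$ can only come from choosing the ``random'' factor in every slot, so the top-degree part of $h$ is exactly $\langle M, u^{\otimes p}\otimes v^{\otimes p}\rangle$, and this is the unique source of monomials of the form $u^\alpha v^\beta$ with $|\alpha|=|\beta|=p$.

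Next, by Proposition \ref{pro2} applied in $2d$ variables, $\Var_{u,v}[h(u,v)]$ equals the sum of squared Hermite coefficients over all nonzero multi-indices. Dropping all but the $(|\alpha|,|\beta|)=(p,p)$ terms yields a lower bound. By Proposition \ref{pro1}, for those top-degree terms the Hermite coefficient equals $\sqrt{\alpha!\,\beta!}$ times the corresponding monomial coefficient $h^M_{\alpha,\beta}$. Since $M$ is fully symmetric in all $2p$ indices, there is a common entry $M_\gamma$ for each multi-index $\gamma$ with $|\gamma|=2p$, and a direct count of multinomial permutations gives
\[h^M_{\alpha,\beta}=\binom{p}{\alpha}\binom{p}{\beta}M_{\alpha+\beta}=\frac{(p!)^2}{\alpha!\,\beta!}\,M_{\alpha+\beta}.\]
Thus the squared top-degree Hermite coefficient is $\frac{(p!)^4}{\alpha!\,\beta!}M_{\alpha+\beta}^2$.

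The main combinatorial step, which is where the bulk of the work lies, is to rearrange the double sum over $(\alpha,\beta)$ into a single sum over $\gamma=\alpha+\beta$. Grouping by $\gamma$ and applying Vandermonde's identity
\[\sum_{\substack{\alpha+\beta=\gamma\\|\alpha|=p}}\binom{\gamma}{\alpha}=\binom{2p}{p},\]
one finds
\[\sum_{|\alpha|=|\beta|=p}\frac{(p!)^4}{\alpha!\,\beta!}M_{\alpha+\beta}^2=(p!)^2(2p)!\sum_{|\gamma|=2p}\frac{M_\gamma^2}{\gamma!}.\]

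Finally, I would note (as in the proof of Lemma \ref{lem:rv-metric}) that, for the reduced vectorized form of a fully symmetric $M$, the coefficient of the basis element $s_\gamma$ equals $M_\gamma$, so $\|M\|_{\text{rv}}^2=\sum_\gamma M_\gamma^2$. Using $\gamma!\le(2p)!$ for every $\gamma$ with $|\gamma|=2p$, the displayed sum is at least $(p!)^2\|M\|_{\text{rv}}^2\ge\frac{1}{(2p)!}\|M\|_{\text{rv}}^2$, which proves the claim. The only delicate step is the Vandermonde identity combined with the bookkeeping between $\binom{p}{\alpha}$, $\alpha!$, and $\gamma!$; the rest is a direct translation of the single-variable argument in Lemma \ref{lem:variance}.
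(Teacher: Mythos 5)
Your proof is correct, and it rests on the same underlying machinery as the paper's argument (Hermite expansion with respect to the $2d$ Gaussian variables $(u,v)$, i.e.\ Propositions \ref{pro1} and \ref{pro2}), but the execution is genuinely different. The paper treats the lemma as a black-box corollary of Lemma~\ref{lem:variance}: it views the quantity as a homogeneous degree-$2p$ polynomial over $\R^{2d}$, applies that lemma to lower bound the variance by the rv-norm of the associated symmetric tensor over $\R^{2d}$, and then relates that rv-norm back to $\Vert M\Vert_{\text{rv}}$ by a coarse per-coefficient ratio bound of $(2p)!$ (a step which, read literally, only yields a $1/((2p)!)^2$ constant; the discrepancy is harmless downstream since only the $p$-dependence of the constant matters). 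You never form the symmetrized $2d$-dimensional tensor: you keep only the top-degree Hermite coefficients, identify them exactly as $\sqrt{\alpha!\beta!}\,\binom{p}{\alpha}\binom{p}{\beta}M_{\alpha+\beta}$ using the full symmetry of $M$, and regroup the double sum by $\gamma=\alpha+\beta$ via the multi-index Vandermonde identity $\sum_{\alpha+\beta=\gamma,\,|\alpha|=p}\binom{\gamma}{\alpha}=\binom{2p}{p}$. This replaces the paper's loose norm comparison by an exact computation, and in fact gives the stronger conclusion $\Var_{u,v}\ge (p!)^2\Vert M\Vert_{\text{rv}}^2$, of which the stated $\frac{1}{(2p)!}\Vert M\Vert_{\text{rv}}^2$ is a weak consequence. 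All the individual steps check out, including the final identification $\Vert M\Vert_{\text{rv}}^2=\sum_{|\gamma|=2p}M_\gamma^2$, which holds because each basis tensor $s_\gamma$ in the paper's normalization has $0/1$ entries, and the bound $\gamma!\le(2p)!$.
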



\begin{proof} The proof is similar to Lemma~\ref{lem:variance}. We can view $\langle M, (u_0+u)^{\otimes p}\otimes(v_0+v)^{\otimes p}\rangle$ as a degree-$2p$ polynomial $g$ over $2d$ variables $(u,v)$. Therefore by Lemma~\ref{lem:variance} the variance would be at least the rv-norm of $g$. Note that every element (monomial in the expansion) in $M$ corresponds to at least one element in $g$, and the ratio of coefficient in the correspnding rv-basis is bounded by $(2p)!$, therefore $\|g\|_{rv} \ge \frac{1}{(2p)!} \|M\|_{rv}$, and the lemma follows from Lemma~\ref{lem:variance}.
\end{proof}

\subsection{Proof of Theorem \ref{thm:main-theorem-random-feature-with-perturbation}}\label{subsec:proof-random-feature-perturb}

    In this section, we give the formal proof of Theorem \ref{thm:main-theorem-random-feature-with-perturbation}. First recall the setting of Theorem \ref{thm:main-theorem-random-feature-with-perturbation}: we add a small independent Gaussian perturbation $\tilde{x}\sim \cN(0,v)^d$ on each sample $x$, and denote $\bar{x}=x+\tilde{x}$. The output of the first layer is $\{z_j\}$ where $z_j(i) = (r_i^\top \bar{x}_j)^p$. 
    
    Our goal is to prove that $\{z_j\}$'s satisfy the conditions required by Theorem~\ref{thm:main-theorem-twolayer}, in particular, the matrix $Z = [z_1^{\otimes 2}, ..., z_n^{\otimes 2}]$ has full column rank and a bound on smallest singular value. To do that, note that if we let  $\bar{X}=[\bar{x_1}^{\otimes 2p},\bar{x_2}^{\otimes 2p}\cdots \bar{x_n}^{\otimes 2p}]$ be the order-2p perturbed data matrix, and $Q$ be a matrix whose $i$-th row is equal to $r_i^{\otimes p}$, then we can write $Z = (Q\otimes Q) \bar X$. 
    
    We first show an auxiliary lemma which helps us to bound the smallest singular value of the output matrix $(Q\otimes Q)\bar X$, and then we present our proof for Lemma \ref{lem:smallest-singular-perturb}. 
    
    Generally speaking, the proof of Lemma \ref{lem:smallest-singular-perturb} consists of the lower bound of the \emph{Leave-one-out distance} by the anti-concentration property of polynomials and the use of Lemma \ref{lem:loo-distance-and-singular} to bridge the \emph{Leave-one-out distance} and the smallest singular value.
    
    \begin{lemma}\label{lem:projection}
       Let $M$ be a $k$-dimensional subspace of the symmetric subspace of $X_d^p$, and let $\mbox{Proj}_M$ be the projection into $M$. For any $x\in R^d$ with pertubation $\tilde{x}\sim \cN(0,v)^d$, $\bar{x}=x+\tilde{x}$,
       \begin{equation*}
           \Pr\left\{\Vert \mbox{Proj}_M \bar{x}^{\otimes p}\Vert_2 < \left(\frac{k}{(2p)!}\right)^{1/4}v^{\frac{p}{2}}\varepsilon\right\}< O(p)\varepsilon^{1/p}.
       \end{equation*}
    \end{lemma}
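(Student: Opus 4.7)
The plan is to apply Proposition~\ref{prop:anti-concentration} to the non-negative polynomial $f(\bar x)=\Vert\mbox{Proj}_M\bar x^{\otimes p}\Vert_2^2$, which has degree $2p$ in $\bar x$. Since $\{\Vert\mbox{Proj}_M\bar x^{\otimes p}\Vert_2<C\}$ is the event $\{f(\bar x)<C^2\}$, applying the proposition with $t=0$ (after rescaling by $\sqrt{\Var[f]}$) yields $\Pr[f(\bar x)<C^2]\le O(p)(C^2/\sqrt{\Var_{\tilde x}[f]})^{1/(2p)}$. To match the target $O(p)\varepsilon^{1/p}$ at the threshold $C=(k/(2p)!)^{1/4}v^{p/2}\varepsilon$, it therefore suffices to establish the variance lower bound $\Var_{\tilde x}[f(\bar x)]\ge k\,v^{2p}/(2p)!$.

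To bound the variance, I would fix a Frobenius-orthonormal basis $u_1,\dots,u_k\in X_d^p$ of $M$, so that $f(\bar x)=\sum_{i=1}^k\langle u_i,\bar x^{\otimes p}\rangle^2=\langle B,\bar x^{\otimes 2p}\rangle$ where $B=\mbox{Sym}(\sum_i u_i\otimes u_i)\in X_d^{2p}$ is the symmetric tensor representing $f$. Writing $\tilde x=\sqrt v\,y$ with $y\sim\cN(\mathbf 0,\mathbf I)$ and using the degree-$2p$ homogeneity of $\langle B,\cdot^{\otimes 2p}\rangle$ (via the change of variable $y_0=x/\sqrt v$) gives $\Var_{\tilde x}[f(\bar x)]=v^{2p}\,\Var_y[\langle B,(y_0+y)^{\otimes 2p}\rangle]$. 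Then I would invoke the Hermite-expansion argument inside the proof of Lemma~\ref{lem:variance}, keeping the squared top-degree Hermite coefficients rather than bounding them by $\Vert B\Vert_{rv}^2$, to obtain the sharper estimate $\Var_y[\langle B,(y_0+y)^{\otimes 2p}\rangle]\ge(2p)!\,\Vert B\Vert_F^2$. Hence the variance bound reduces to the tensor estimate $\Vert B\Vert_F^2\ge k/((2p)!)^2$.

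For the tensor estimate, expand $\Vert B\Vert_F^2=\sum_{i,j}\langle u_i\otimes u_i,\mbox{Sym}(u_j\otimes u_j)\rangle$ using that $\mbox{Sym}$ is a self-adjoint projection, and unfold $\mbox{Sym}(u_j\otimes u_j)$ as the $1/\binom{2p}{p}$-weighted average, over size-$p$ subsets $S\subseteq\{1,\dots,2p\}$, of the rearrangements $u_j(a_S)u_j(a_{S^c})$. The two diagonal splits $S=\{1,\dots,p\}$ and $S=\{p+1,\dots,2p\}$ each collapse to $\langle u_i,u_j\rangle_F^2=\delta_{ij}$ after summing over the ambient index tuple $\vec a$; summed over $i,j$ they contribute $2k/\binom{2p}{p}$ to $\Vert B\Vert_F^2$, which already dominates the required $k/((2p)!)^2$. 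Showing that the intermediate splits (with $1\le|S\cap[p]|\le p-1$) do not cancel this diagonal mass then completes the estimate.

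\textbf{Main obstacle.} The hard step is handling the intermediate-$q$ splits above: each has the form $\sum_{i,j}(\text{contraction of }u_i,u_j\text{ over }p-q\text{ indices})$, which is not individually non-negative for $i\ne j$, so one must show that the cross terms combine, using the full orthonormal structure of the family $\{u_i\}$ (rather than only pairwise Frobenius orthogonality), into an overall non-negative or at worst negligibly negative total. Equivalently, one needs to verify that symmetrizing the rank-$k$ projection $\sum_i u_i\otimes u_i$ on $X_d^p$ retains at least an $\Omega(k/((2p)!)^2)$ fraction of its Frobenius mass, a point where the symmetrization can in principle shrink the tensor substantially and the combinatorial bookkeeping must be done carefully.
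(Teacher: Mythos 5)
Your overall route is the paper's route: write $\Vert\mathrm{Proj}_M\bar x^{\otimes p}\Vert_2^2$ as a homogeneous degree-$2p$ polynomial, rescale by $\sqrt v$ so the random part is standard Gaussian, lower bound the variance, and finish with Proposition~\ref{prop:anti-concentration} at threshold $\varepsilon^2$. Your Hermite step is fine (and in fact sharper than Lemma~\ref{lem:variance} as stated): keeping the top-degree Hermite mass does give $\Var_y[\langle B,(y_0+y)^{\otimes 2p}\rangle]\ge (2p)!\,\Vert B\Vert_F^2$ for the symmetric representative $B$. The problem is that you stop exactly at the step that carries the quantitative content of the lemma: the bound $\Vert B\Vert_F^2=\Vert \mathrm{Sym}(\sum_{i=1}^k u_i\otimes u_i)\Vert_F^2\ge k/((2p)!)^2$ is asserted as the remaining task, not proved, so the proposal is not a complete proof. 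Worse, the specific plan you sketch for closing it --- expand over the $\binom{2p}{p}$ splits, keep the two diagonal splits contributing $2k/\binom{2p}{p}$, and ``show that the intermediate splits do not cancel this diagonal mass'' --- fails as stated: the intermediate-split totals can be strictly negative and can eat a constant fraction of the diagonal mass. Concretely, take $p=2$, $d=4$, $k=3$ and $u_i=\tfrac12 A_i$ with $A_1=\sigma_x\otimes\sigma_x$, $A_2=\sigma_x\otimes\sigma_z$, $A_3=\sigma_z\otimes I$ (real symmetric, orthogonal, pairwise anticommuting, so the $u_i$ are Frobenius-orthonormal). Then for the middle split one gets $\sum_{i,j}\Tr\big((u_iu_j)^2\big)=3\cdot\tfrac14+6\cdot(-\tfrac14)=-\tfrac34<0$, and indeed $\Vert\mathrm{Sym}(\sum_i u_i\otimes u_i)\Vert_F^2=\tfrac k3+\tfrac23\sum_{i,j}\Tr((u_iu_j)^2)=\tfrac12$, strictly below the diagonal contribution $2k/\binom{4}{2}=1$ and far below the unsymmetrized value $k=3$. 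So the cancellation you flagged is real, and ruling out that it destroys all but a $1/\mathrm{poly}(p)$ fraction of the mass requires an argument you have not supplied; nonnegativity split-by-split is simply false.

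For comparison, the paper's proof does not engage with this point at all: it takes the orthonormal basis $m_1,\dots,m_k$, sets $g(x)=\langle\sum_i m_i^{\otimes 2},x^{\otimes 2p}\rangle$, and applies Lemma~\ref{lem:variance} together with Lemma~\ref{lem:rv-metric} directly to the tensor $\sum_i m_i^{\otimes 2}$, using $\Vert\sum_i m_i^{\otimes 2}\Vert_2^2=k$ to conclude $\Var\ge k/(2p)!$, then invokes anti-concentration exactly as you do. Read strictly, Lemma~\ref{lem:variance} and Lemma~\ref{lem:rv-metric} apply to the \emph{symmetric} tensor representing $g$, while $\sum_i m_i^{\otimes 2}$ is only block-symmetric, so the passage from $\Vert\mathrm{Sym}(\sum_i m_i^{\otimes 2})\Vert$ back to $\sqrt k$ --- precisely your ``main obstacle'' --- is glossed over rather than resolved there. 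In that sense you have correctly located the delicate step; but your submission neither proves it nor proposes a workable way to prove it (your proposed mechanism is contradicted by the example above), so as it stands there is a genuine gap at the heart of the argument.
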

    \begin{proof}
        Let $m_1,m_2\cdots m_k\in X_d^p$ be a set of orthonormal (in $T_d^p$ as a Euclidean space) basis that spans $M$, and each $m_i$ is symmetric. Then $\Vert \mbox{Proj}_M \bar{x}^{\otimes p}\Vert_2 = \sqrt{\sum\limits_{i=1}^k \langle m_i,\bar{x}^{\otimes p}\rangle^2}$. Let $g(x)=\sum\limits_{i=1}^k \langle m_i,x^{\otimes p}\rangle^2=\langle \sum\limits_{i=1}^k m_i^{\otimes 2}, x^{\otimes 2p}\rangle$, then $g(x)$ is a homogeneous polynomial with order $2p$. For any initial value $x$, if $\bar{x} = x+\tilde{x}$, then $\frac{1}{\sqrt{v}} \bar{x} = \frac{1}{\sqrt{v}} x + \frac{1}{\sqrt{v}}\tilde{x}$ is a vector where the random part $\frac{1}{\sqrt{v}}\tilde{x}\sim \cN(0,1)^n$. Therefore by Lemma
        \ref{lem:variance}
        \begin{equation*}
            \Var_x\left[g\left(\frac{1}{\sqrt{v}} \bar{x}\right)\right]\ge \Vert \sum\limits_{i=1}^k m_i^{\otimes 2}\Vert_{\text{rv}}^2\geq \frac{1}{(2p)!} \Vert \sum\limits_{i=1}^k m_i^{\otimes 2}\Vert_{2}^2=\frac{k}{(2p)!} .
        \end{equation*}
        Hence from Proposition \ref{prop:anti-concentration} we know that, when $\hat{x}\sim \cN(0,v I)$,
        \begin{equation*}
            \Pr\left\{\Vert \mbox{Proj}_M \bar{x}^{\otimes p}\Vert_2<\left(\frac{k}{(2p)!}\right)^{1/4}v^{p/2}\varepsilon\right\}=\Pr\left\{\bigg|\sqrt{\frac{(2p)!}{k}}g(\frac{\bar{x}}{\sqrt{v}})\bigg|<\varepsilon^2\right\}\leq O(p)\varepsilon^{1/p}.
        \end{equation*}
    \end{proof}
    
  \begin{lemma}\label{lem:projection2}
       Let $M$ be a $k$-dimensional subspace of the symmetric subspace of $X_d^{2p}$, and let $\mbox{Proj}_M$ be the projection into $M$. For any $x,y\in \R^d$ with pertubation $\tilde{x},\tilde{y}\sim \cN(0,v)^d$, $\bar{x}=x+\tilde{x}$, and $\bar{y}=y+\tilde{y}$, there is
       \begin{equation*}
          \Pr\left\{\Vert \mbox{Proj}_M (\bar{x}^{\otimes p}\otimes \bar{y}^{\otimes p})\Vert_2 < \left(\frac{k}{((4p)!)^2}\right)^{1/4}v^p\varepsilon\right\}< O(p)\varepsilon^{1/{2p}}.
      \end{equation*}
    \end{lemma}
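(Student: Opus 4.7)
The plan is to parallel the proof of Lemma~\ref{lem:projection}: express the squared projection norm as a polynomial, invoke the variance lower bound from Lemma~\ref{lem:variance}, and apply Proposition~\ref{prop:anti-concentration}. The new wrinkle is that $\bar x^{\otimes p}\otimes\bar y^{\otimes p}$ involves two independent Gaussian perturbations and is not symmetric across all $2p$ factors, so I will treat the squared norm as a homogeneous polynomial of degree $4p$ in the $2d$ variables $(x,y)$.

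First, pick an orthonormal basis $\{m_1,\ldots,m_k\}\subset X_d^{2p}$ of $M$ and set
\[g(x,y)=\|\mbox{Proj}_M(x^{\otimes p}\otimes y^{\otimes p})\|_2^2=\sum_{i=1}^k\langle m_i,x^{\otimes p}\otimes y^{\otimes p}\rangle^2,\]
which is homogeneous of degree $4p$ in $(x,y)\in\R^{2d}$. By permuting tensor indices one can also write $g(x,y)=\langle T,x^{\otimes 2p}\otimes y^{\otimes 2p}\rangle$, where $T$ is obtained from $\sum_i m_i\otimes m_i$; since index permutation is an isometry and the $m_i$ are orthonormal, $\|T\|_F^2=\sum_{i,j}\langle m_i,m_j\rangle^2=k$.

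Next, reduce to standard Gaussians: with $\hat x=\tilde x/\sqrt v,\ \hat y=\tilde y/\sqrt v\sim\cN(\mathbf 0,I)$, homogeneity of degree $4p$ gives $g(\bar x,\bar y)=v^{2p}g(x/\sqrt v+\hat x,y/\sqrt v+\hat y)$, and Lemma~\ref{lem:variance} applied to $g$ as a homogeneous degree-$4p$ polynomial in $2d$ variables yields $\Var[g(\bar x,\bar y)]\ge v^{4p}\|M_g\|_{\text{rv}}^2$, where $M_g\in X_{2d}^{4p}$ is the symmetric coefficient tensor of $g$. The core technical step is to show $\|M_g\|_{\text{rv}}^2\ge k/((4p)!)^2$. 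Using the identity $\hat M_c=g^M_c\cdot\prod_l c_l!/(4p)!$ between the $s$-basis coefficients of $M_g$ and the monomial coefficients of $g$, together with the expression of those monomial coefficients in terms of the bi-symmetrization of $T$ (each monomial $\prod_l x_l^{a_l}\prod_l y_l^{b_l}$ picks up multiplicity at most $((2p)!)^2$), this reduces to the orthonormality identity $\|T\|_F^2=k$ and yields $\sqrt{\Var[g(\bar x,\bar y)]}\ge v^{2p}\sqrt k/(4p)!$.

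Finally, Proposition~\ref{prop:anti-concentration} applied to $g/\sqrt{\Var[g(\bar x,\bar y)]}$ (a degree-$4p$ polynomial with unit variance) at $t=0$ gives $\Pr[g\le\sqrt{\Var[g]}\cdot\varepsilon']\le O(p)\varepsilon'^{1/(4p)}$. Setting $c:=(k/((4p)!)^2)^{1/4}v^p$, the event of interest $\|\mbox{Proj}_M(\bar x^{\otimes p}\otimes\bar y^{\otimes p})\|_2<c\varepsilon$ is the same as $g<c^2\varepsilon^2$; the variance bound gives $c^2\varepsilon^2/\sqrt{\Var[g]}\le\varepsilon^2$, so the probability is at most $O(p)(\varepsilon^2)^{1/(4p)}=O(p)\varepsilon^{1/(2p)}$, matching the claim. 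The main obstacle will be the combinatorial step lower-bounding $\|M_g\|_{\text{rv}}^2$: the fully symmetric $M_g$ lives in a different symmetry class than the bi-symmetric $T$ (full symmetrization averages across more symmetry classes and can shrink Frobenius norm), so one must carefully trace the multinomial factors through the relations between the entries of $T$, the monomial coefficients of $g$, and the $s$-basis coefficients of $M_g$.
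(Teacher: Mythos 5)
Your proposal is correct and follows essentially the same route as the paper: the step you flag as the main obstacle (passing from the non-fully-symmetric two-block tensor $\sum_i m_i\otimes m_i$ to the symmetric coefficient tensor of the degree-$4p$ polynomial in the $2d$ variables $(x,y)$, at the cost of factorial factors) is exactly what the paper packages as Lemma~\ref{lem:variance2} together with Lemma~\ref{lem:rv-metric} and the orthonormality identity $\Vert\sum_i m_i^{\otimes 2}\Vert_2^2=k$. The remaining ingredients — rescaling to standard Gaussians, the variance bound via Lemma~\ref{lem:variance}, and Carbery--Wright at degree $4p$ applied to $\varepsilon^2$ to get $O(p)\varepsilon^{1/2p}$ — coincide with the paper's proof.
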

    \begin{proof}
        The proof here is similar to that of Lemma \ref{lem:projection}. Let $m_1,m_2\cdots m_k\in X_d^{2p}$ be a set of orthonormal (in $T_d^{2p}$ as a Euclidean space) basis that spans $M$, and each $m_i$ is symmetric. Then $\Vert \mbox{Proj}_M (\bar{x}^{\otimes p}\otimes \bar{y}^{\otimes p})\Vert_2 = \sqrt{\sum\limits_{i=1}^k \langle m_i,(\bar{x}^{\otimes p}\otimes \bar{y}^{\otimes p})\rangle^2}$. Let $g(x,y)=\sum\limits_{i=1}^k \langle m_i,(x^{\otimes p}\otimes y^{\otimes p})\rangle^2=\langle \sum\limits_{i=1}^k m_i^{\otimes 2}, (x^{\otimes p}\otimes y^{\otimes p}\otimes x^{\otimes p}\otimes y^{\otimes p})\rangle = \langle \sum\limits_{i=1}^k m_i^{(2)}, (x^{\otimes 2p}\otimes y^{\otimes 2p})\rangle$ for some tensor $m_i^{(2)}$,
        then $g(x)$ is a homogeneous polynomial with order $4p$. Notice that $\Vert m_i^{(2)}\Vert_2=\Vert m_i^{\otimes 2}\Vert_2$ by a change of coordinate. For any initial value $x$ and $y$, if $\bar{x} = x+\tilde{x}$ and $\bar{y} = y+\tilde{y}$, then $\frac{1}{\sqrt{v}} \bar{x}$ and $\frac{1}{\sqrt{v}} \bar{y}$ are vectors where the random part $\frac{1}{\sqrt{v}}\tilde{x},\frac{1}{\sqrt{v}} \tilde{y}\sim \cN(0,1)^n$. Therefore by Lemma \ref{lem:variance2},
        \begin{align*}
            \Var_{x,y}\left[g\left(\frac{1}{\sqrt{v}} \bar{x},\frac{1}{\sqrt{v}} \bar{y}\right)\right]\ge& \frac{1}{(4p)!}\Vert \sum\limits_{i=1}^k m_i^{(2)}\Vert_{\text{rv}}^2 \\
            \geq& \frac{1}{((4p)!)^2} \Vert \sum\limits_{i=1}^k m_i^{( 2)}\Vert_{2}^2\\
            =&\frac{1}{((4p)!)^2} \Vert \sum\limits_{i=1}^k m_i^{\otimes 2}\Vert_{2}^2\\
            =&\frac{k}{((4p)!)^2} .
        \end{align*}
        Hence from Proposition \ref{prop:anti-concentration} we know that, when $\hat{x}\sim \cN(0,v I)$,
        \begin{align*}
            &\Pr\left\{\Vert \mbox{Proj}_M (\bar{x}^{\otimes p}\otimes \bar{y}^{\otimes p})\Vert_2<\left(\frac{k}{((4p)!)^2}\right)^{1/4}v^{p}\varepsilon\right\}\\
            =&\Pr\left\{\bigg|\frac{(4p)!}{\sqrt{k}}g\left(\frac{\bar{x}}{\sqrt{v}},\frac{\bar{y}}{\sqrt{v}}\right)\bigg|<\varepsilon^2\right\}\leq O(p)\varepsilon^{1/{2p}}.
        \end{align*}
    \end{proof}

Then we can show Lemma \ref{lem:smallest-singular-perturb} as follows.

\lemsmallestsingluarperturb*

Actually, we show a more formal version which also states the dependency on $p$.

\begin{restatable}[Smallest singular value for $(Q\otimes Q)\bar X$ with pertubation]{lemma}{lemsingularlbwithperturb}\label{lem:singular-lb-withperturb}
     With $Q$ being the $k\times d^p$ matrix defined as $Q=[r_1^{\otimes p}, r_2^{\otimes p}\cdots r_k^{\otimes p}]^T$  $(r_i\sim \cN(\mathbf 0, \text{I}))$, with pertubed $\bar{X}=[\bar{x_1}^{\otimes 2p},\bar{x_2}^{\otimes 2p}\cdots \bar{x_n}^{\otimes 2p}]$ $(\bar{x_i}=x_i+\tilde{x_i})$, and with $Z=(Q\times Q)\bar{X}$, when $\tilde{x_i}$ is drawn from i.i.d. Gaussian Distribution $\cN(\mathbf 0, v\text{I})$, for $2\sqrt{n}\leq k\leq \frac{D^{2p}_d}{D_d^p\binom{2p}{p}}=O_p(d^p)$, with overall probability $\geq 1-O(p\delta)$,
    the smallest singular value
    \begin{equation}
        \sigma_{\min}(Z)\geq\left(\frac{[D_d^{2p}-kD_d^p\binom{2p}{p}][{k+1\choose 2}-n]}{[(4p)!]^3}\right)^{1/4}\frac{v^{p}\delta^{4p}}{n^{2p+1/2}k^{4p}}\label{here1}
    \end{equation}
\end{restatable}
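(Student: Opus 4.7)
The plan is to reduce the bound on $\sigma_{\min}(Z)$ to a leave-one-out distance estimate, then bound each leave-one-out distance by two anti-concentration arguments---one in the Gaussian perturbations $\tilde{x}_j$ and one in the random features $r_i$. The two dimension factors in the stated bound correspond to these two applications.

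First, Lemma~\ref{lem:loo-distance-and-singular} gives $\sigma_{\min}(Z)\geq l(Z)/\sqrt{n}$, so it suffices to lower-bound each $d_j=\|P_j z_j^{\otimes 2}\|_2$, where $P_j$ is the orthogonal projection inside $X_k^2$ onto the orthogonal complement of $\mathrm{span}\{z_i^{\otimes 2}:i\neq j\}$. Since this span has dimension at most $n-1$ in $X_k^2$ (whose dimension is $\binom{k+1}{2}$), the range of $P_j$ has dimension at least $\binom{k+1}{2}-n+1$. Write $L=(Q\otimes Q)|_{X_d^{2p}}\colon X_d^{2p}\to X_k^2$ so that $z_j^{\otimes 2}=L\bar{x}_j^{\otimes 2p}$; fixing $Q$ and $\{\bar{x}_i\}_{i\neq j}$, $d_j^2$ is a polynomial of degree $4p$ in $\bar{x}_j$. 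Letting $V_j\subseteq X_d^{2p}$ be the span of the right singular vectors of $P_jL$ with nonzero singular values,
\[
\|P_jL\bar{x}_j^{\otimes 2p}\|_2\;\geq\;\sigma_{\min}^{>0}(P_jL)\cdot\|\mathrm{Proj}_{V_j}\bar{x}_j^{\otimes 2p}\|_2 .
\]
I apply Lemma~\ref{lem:projection} with $p\to 2p$ and with subspace dimension $\dim V_j\geq\binom{k+1}{2}-n$ to get $\|\mathrm{Proj}_{V_j}\bar{x}_j^{\otimes 2p}\|_2\geq\bigl((\binom{k+1}{2}-n)/(4p)!\bigr)^{1/4}v^{p}\varepsilon_1$ with failure probability $O(p)\varepsilon_1^{1/(2p)}$. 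This produces the $[\binom{k+1}{2}-n]^{1/4}$ factor in the final bound.

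The crux is bounding $\sigma_{\min}^{>0}(P_jL)$ uniformly in $P_j$, using only the randomness in $Q$. My plan is to show that with probability $\geq 1-\delta$ over $Q$, there is a subspace $W\subseteq X_d^{2p}$ of dimension at least $D_d^{2p}-kD_d^p\binom{2p}{p}$ on which $L$ is well-conditioned, with smallest singular value polynomial in $\delta$, $1/k$, and $1/n$. This would be established by an $\varepsilon$-net argument: use Lemma~\ref{lem:eps-net} to cover the unit sphere of $X_d^{2p}$, then for each net element $T$ apply Proposition~\ref{prop:anti-concentration} (Carbery--Wright) to the polynomial $Q\mapsto\|LT\|_2^2$ in the $r_i$'s, whose variance is lower bounded via Lemma~\ref{lem:variance2}; Lipschitz continuity extends the bound from the net to all of $W$. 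The combinatorial factor $\binom{2p}{p}$ emerges when counting how the rows of $L$, which are symmetrizations of $r_i^{\otimes p}\otimes r_{i'}^{\otimes p}$, span inside $X_d^{2p}$: only the $\binom{2p}{p}$ ways of interleaving the two blocks of $p$ indices give distinct symmetric tensors. The assumption $k\leq D_d^{2p}/(D_d^p\binom{2p}{p})$ ensures $\dim W>0$ and allows choosing $V_j$ inside the intersection of $W$ with $L^{-1}(\mathrm{range}(P_j))$ while keeping $\dim V_j\geq\binom{k+1}{2}-n$. This gives the $[D_d^{2p}-kD_d^p\binom{2p}{p}]^{1/4}$ factor.

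Finally, setting $\varepsilon_1$ so the per-sample $\bar{x}_j$-side failure is $O(\delta/n)$, union-bounding over $j\in[n]$, and adding the $O(\delta)$ failure from the $Q$-side event, then dividing by $\sqrt{n}$ from the leave-one-out step, combines all factors into the stated bound; the $\delta^{4p}$ arises as the product of $\delta^{2p}$ contributions from each of the two anti-concentration applications, and each $(4p)!$ denominator factor is contributed by one anti-concentration step plus the $\Vert\cdot\Vert_{\mathrm{rv}}$-to-Frobenius conversion in Lemma~\ref{lem:rv-metric}. The main obstacle is the $Q$-side spectral argument: identifying the ``bad'' $kD_d^p\binom{2p}{p}$-dimensional subspace, balancing the $\varepsilon$-net covering number (exponential in $D_d^{2p}$) against the anti-concentration failure probability for a degree-$4p$ polynomial in $Q$, and propagating the resulting spectral bound through the random projector $P_j$ without incurring additional dimension losses.
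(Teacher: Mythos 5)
There is a genuine gap, and it sits exactly where you flag the ``main obstacle'': the $Q$-side of your argument cannot be carried out as proposed, and the factorization you use to isolate it is itself problematic. First, a subspace $W\subseteq X_d^{2p}$ of dimension $D_d^{2p}-kD_d^p\binom{2p}{p}$ on which $L=(Q\otimes Q)|_{X_d^{2p}}$ is well-conditioned cannot exist: $\mathrm{rank}(L)\le\binom{k+1}{2}$, which in the regime the lemma is designed for (e.g.\ $k=2\lceil\sqrt{n}\rceil$, $n\le d^p$) is far smaller than $D_d^{2p}-kD_d^p\binom{2p}{p}=\Omega_p(d^{2p})$, so the factor $[D_d^{2p}-kD_d^p\binom{2p}{p}]^{1/4}$ cannot come from conditioning of $L$ on a large subspace. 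Moreover, the $\varepsilon$-net-plus-Carbery--Wright strategy over the unit sphere of $X_d^{2p}$ cannot close: the net has size $\exp(\Omega(D_d^{2p}))$ while anti-concentration of a degree-$4p$ polynomial in the $r_i$'s gives only polynomially small failure per net point; even exploiting independence across the $k$ rows the exponent is $O(k)\ll D_d^{2p}$ when $k\ll d^p$ (the paper's Lemma~\ref{lem:leastSVQ} runs exactly this kind of net argument, but it needs $k=\Omega(pD_d^p)$, which is precisely what the smoothed-analysis lemma is meant to avoid). What the paper actually does on the $Q$-side is a row-wise leave-one-out: for each of the $\binom{k+1}{2}$ distinct rows $\mbox{Proj}_{X_d^{2p}}(r_i^{\otimes p}\otimes r_j^{\otimes p})$ it fixes $r_l$, $l\ne i,j$, notes that every other row lies in the fixed subspace $S_{(i,j)}=\mathrm{span}\{\mbox{Proj}_{X_d^{2p}}(r_l^{\otimes p}\otimes x^{\otimes p}):x\in\R^d,\ l\ne i,j\}$ of dimension at most $kD_d^p\binom{2p}{p}$, and applies the anti-concentration Lemma~\ref{lem:projection2} in the fresh Gaussian pair $(r_i,r_j)$ to the complement of $S_{(i,j)}$ inside $X_d^{2p}$; the union bound is over only $\binom{k+1}{2}$ rows, and this is where $[D_d^{2p}-kD_d^p\binom{2p}{p}]^{1/4}$ and part of the $\delta^{4p}/k^{4p}$ come from.

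Second, even granting some well-conditioning statement for $L$, your inequality $\|P_jL\bar{x}_j^{\otimes 2p}\|_2\ge\sigma_{\min}^{>0}(P_jL)\,\|\mathrm{Proj}_{V_j}\bar{x}_j^{\otimes 2p}\|_2$ pushes the difficulty into $\sigma_{\min}^{>0}(P_jL)$, which is not controlled by the conditioning of $L$ on any fixed subspace: $\ker(P_j)$ is the span of the other image columns $z_i^{\otimes 2}=L\bar{x}_i^{\otimes 2p}$, hence data-dependent and correlated with $L$, and the smallest nonzero singular value of a projection composed with a map is governed by principal angles that can be arbitrarily small (think of $L$ well-conditioned on a subspace and nearly degenerate off it, with $\mathrm{range}(P_j)$ nearly orthogonal to the image of the good subspace). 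The paper sidesteps this by doing the column-side leave-one-out \emph{upstairs}, in the domain: it takes $V(Q\otimes Q)$ to be the rowspace of $\mbox{Proj}_{X_d^{2p}}(Q\otimes Q)$, defines $V_{-i}$ as the part of $V(Q\otimes Q)$ orthogonal to the other columns $\bar{x}_l^{\otimes 2p}$ of $\bar X$ themselves (not their images), lower-bounds $\|\mbox{Proj}_{V_{-i}}\bar{x}_i^{\otimes 2p}\|_2$ by Lemma~\ref{lem:projection} at order $2p$ (this matches your $[\binom{k+1}{2}-n]^{1/4}v^p$ step, which is fine), and only then applies the single smallest-singular-value bound for $\mbox{Proj}_{X_d^{2p}}(Q\otimes Q)$ obtained from the row leave-one-out, together with $|c_{i^*}|\ge 1/\sqrt{n}$, via Lemma~\ref{lem:loo-distance-and-singular}. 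To repair your write-up you would need to replace both the image-space projection $P_j$ and the $\varepsilon$-net $Q$-side argument with this domain-space decomposition and the row-wise leave-one-out.
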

\begin{proof}

First, we show that with high probability, the projection of rows of $Q\otimes Q$ in the space of degree $2p$ symmetric polynomials (in this proof we abuse the notation $\mbox{Proj}_{X^{2p}_d}(Q\otimes Q)$ to denote the matrix with rows being the projection of rows of $Q\otimes Q$ onto the space in question) has rank $k_2 := {k+1 \choose 2}$, and moreover give a bound on $\sigma_{k_2}(\mbox{Proj}_{X^{2p}_d}(Q\otimes Q))$.

We do this by bounding the leave one out distance of the rows of $\mbox{Proj}_{X^{2p}_d}(Q\otimes Q)$, note that we only consider rows $(i,j)$ as $\mbox{Proj}_{X^{2p}_d}(r_i^{\otimes p}\otimes r_j^{\otimes p})$ where $1\le i\le j \le k$ (this is because the $(i,j)$ and $(j,i)$-th row of $\mbox{Proj}_{X^{2p}_d}(Q\otimes Q)$ are clearly equal).

The main difficulty here is that different rows of $\mbox{Proj}_{X^{2p}_d}(Q\otimes Q)$ can be correlated. We solve this problem using a technique similar to \cite{ma2016polynomial}.

For any $1\le i\le j \le k$, fix the randomness for $r_l$ where $l\ne i,j$. Consider the subspace $S_{(i,j)} := \mbox{span}\{\mbox{Proj}_{X^{2p}_d}(r_l^{\otimes p} \otimes x^{\otimes p}), x\in \R^d, l\ne i,j\}$. The dimension of this subspace is bounded by $k\cdot D^p_d \cdot {2p\choose p}$ (as there are ${2p\choose p}$ ways to place $p$ copies of $r_l$ and $p$ copies of $x$). Note that any other row of $\mbox{Proj}_{X^{2p}_d}(Q\otimes Q)$ must be in this subspace.

Now by Lemma~\ref{lem:projection2}, we know that the projection of row $(i,j)$ onto the orthogonal subspace of $S_{(i,j)}$ has norm $\left(\frac{D_{d}^{2p}-kD_d^p{2p\choose p}}{((4p)!)^2}\right)^{1/4}\varepsilon$ with probability $O(p)\epsilon^{1/2p}$. Thus by union bound on all the rows, with probability at least $1-O(p\delta)$, the leave-one-out distance is at least
\begin{equation*}
        l(\mbox{Proj}_{X^{2p}_d}(Q\otimes Q))\geq \left(\frac{D_d^{2p}-kD_d^p{2p\choose p}}{((4p)!)^2}\right)^{1/4}\left(\frac{\delta}{\binom{k+1}{2}}\right)^{2p},
    \end{equation*}
    
    and by Lemma \ref{lem:loo-distance-and-singular} the minimal absolute singular value $\sigma_{\min}(\mbox{Proj}_{X^{2p}_d}(Q\otimes Q))\geq \frac{l\left(\mbox{Proj}_{X^{2p}_d}(Q\otimes Q)\right)}{\sqrt{\binom{k+1}{2}}}$.

    
    
    Next, let $V(Q\otimes Q)$ be the rowspace of $\mbox{Proj}_{X^{2p}_d}(Q\otimes Q)$, which as we just showed has dimension ${k+1\choose 2}$. We wish to show that the projections of columns of $X$ in $V(Q\otimes Q)$ have a large leave-one-out distance, and thus $(Q\otimes Q)X$ has a large minimal singular value.
    
    Actually for each $i$, the subspace (which for simplicity will be denoted as $V_{-i}(Q\otimes Q)$) of $V(Q\otimes Q)$ orthogonal to $span\{\bar{x}_j^{\otimes 2p}|j\neq i\}$ has dimension ${k+1\choose 2}-n+1$ almost surely, and therefore by Lemma \ref{lem:projection} and union bound, with probability $1-O(p)\tau^{1/2p}n=1-O(p\delta)$, for all $i$,
    \begin{equation*}
        \Vert P_{V_{-i}(Q\otimes Q)} (x_i^{\otimes 2p})\Vert_2 = \E\left[\Vert P_{V_{-i}(Q\otimes Q)} (x_i^{\otimes 2p})\Vert_2\Big|\{\bar{x}_j|j\neq i\}\right]\geq \left(\frac{{k+1\choose 2}-n}{(4p)!}\right)^{1/4}v^{p}\tau,
    \end{equation*}
    thus with probability $1-O(p\delta)$, for any vector $c\in R^n$ with $\Vert c\Vert_2=1$, let $i^*=argmax_{i}|c_i|$, $|c_{i^*}|\geq \frac{1}{\sqrt{n}}$, and
    \begin{equation*}
    \begin{tabular}{rl}
        $\Vert(Q\otimes Q)\hat{X}c\Vert_2$ & $\geq \sigma_{\min}(\mbox{Proj}_{X^{2p}_d}Q\otimes Q)|c_{i^*}| \Vert \mbox{Proj}_{V(Q\otimes Q)} \hat{X}\frac{c}{|c_{i^*}|}\Vert_2$ \\
        & $\geq \frac{\sigma_{\min}(\mbox{Proj}_{X^{2p}_d}Q\otimes Q)}{\sqrt{n}}\Vert \mbox{Proj}_{V_{-{i^*}}(Q\otimes Q)} (x_{i^*}^{\otimes 2p})\Vert_2$ \\
        & $\geq \left(\frac{[D_d^{2p}-kD_d^p\binom{2p}{p}][{k+1\choose 2}-n]}{[(4p)!]^3}\right)^{1/4}\frac{v^{p}\delta^{4p}}{n^{2p+1/2}k^{4p}}$ \\
    \end{tabular}
    \end{equation*}
    
    And therefore we will get Lemma \ref{lem:singular-lb-withperturb}.
\end{proof}


A minor requirement of on $z_j$'s is that they all have bounded norm. This is much easier to prove:


\begin{restatable}[Norm upper bound for $Q\bar x^{\otimes p}$]{lemma}{lemnormupperboundperturb}\label{lem:norm-upperbound-perturb}
    Suppose that $||x_j||_2 \le B$ for all $j\in [n]$ and $\bar x_j = x_j + \tilde x_j$ where $\tilde x_j\sim \cN(\mathbf 0, v\text{I})$. Same as the previous notation, $Q = [r_1^{\otimes p},\dots,r_k^{\otimes p}]^T\in\R^{k\times d^{p}}$. Then with probability at least $1-\frac{\delta}{\sqrt{2\pi\ln((k+n)d\delta^{-1/2})}(k+n)d}$, for all $i\in [n]$, we have
    \[||Q\bar x_i^{\otimes p}||_2 \le \sqrt{k}\left(2(B + 2\sqrt{vd\ln ((k+n)d\delta^{-1/2})})\sqrt{d\ln ((k+n)d\delta^{-1/2})}\right)^p.\]
\end{restatable}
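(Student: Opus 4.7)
The plan is to reduce the norm bound to a uniform coordinate-wise Gaussian tail bound on the entries of the $r_j$'s and $\tilde{x}_i$'s, and then combine via Cauchy--Schwarz. The key observation is that the $j$-th coordinate of $Q\bar{x}_i^{\otimes p}$ is exactly $(r_j^\top \bar{x}_i)^p$, so
\[
\|Q\bar{x}_i^{\otimes p}\|_2^2 \;=\; \sum_{j=1}^k (r_j^\top \bar{x}_i)^{2p} \;\le\; k \cdot \max_{j\in[k]}|r_j^\top \bar{x}_i|^{2p},
\]
and it therefore suffices to uniformly bound $|r_j^\top \bar{x}_i|$ over all $j\in[k]$ and $i\in[n]$.

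First, I would apply the standard one-sided Gaussian tail bound $\Pr[|X|>t]\le \sqrt{2/\pi}\cdot t^{-1} e^{-t^2/2}$ to each of the $(k+n)d$ scalar Gaussian coordinates (the $kd$ entries of the $r_j$'s and the $nd$ entries of the $\tilde{x}_i$'s, each of variance $1$ or $v$ respectively), with threshold $t=\sqrt{2\ln((k+n)d\,\delta^{-1/2})}$. A union bound over all $(k+n)d$ coordinates then gives, with the stated failure probability, that simultaneously $|r_{j,\ell}|\le t$ and $|\tilde{x}_{i,\ell}|\le \sqrt{v}\, t$ for every $j,i,\ell$. This is the only probabilistic step; everything that follows is deterministic.

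On the event just described, I would convert the coordinate-wise bounds into $\ell_2$-norm bounds by summing over $d$ coordinates:
\[
\|r_j\|_2 \;\le\; \sqrt{2d\ln((k+n)d\,\delta^{-1/2})} \;\le\; 2\sqrt{d\ln((k+n)d\,\delta^{-1/2})},
\]
and similarly $\|\tilde{x}_i\|_2\le 2\sqrt{vd\ln((k+n)d\,\delta^{-1/2})}$. Combined with the assumption $\|x_i\|_2\le B$ and the triangle inequality, this yields
\[
\|\bar{x}_i\|_2 \;\le\; B + 2\sqrt{vd\ln((k+n)d\,\delta^{-1/2})}.
\]
Cauchy--Schwarz then gives
\[
|r_j^\top \bar{x}_i| \;\le\; \|r_j\|_2\,\|\bar{x}_i\|_2 \;\le\; 2\sqrt{d\ln((k+n)d\,\delta^{-1/2})}\,\bigl(B + 2\sqrt{vd\ln((k+n)d\,\delta^{-1/2})}\bigr).
\]

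Finally, I would raise this bound to the $p$-th power and plug into the first displayed inequality to conclude
\[
\|Q\bar{x}_i^{\otimes p}\|_2 \;\le\; \sqrt{k}\,\Bigl(2\bigl(B + 2\sqrt{vd\ln((k+n)d\,\delta^{-1/2})}\bigr)\sqrt{d\ln((k+n)d\,\delta^{-1/2})}\Bigr)^p,
\]
which is exactly the claim. There is no real obstacle here: the argument is essentially bookkeeping, and the somewhat unusual-looking failure probability in the statement comes directly from multiplying the Mills-ratio-style prefactor $\sqrt{2/\pi}/t$ by $e^{-t^2/2} = \sqrt{\delta}/((k+n)d)$ at the chosen threshold $t$, before (or rather absorbed into) the union bound over the $(k+n)d$ coordinates. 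The only minor subtlety is tracking that the slack $\sqrt{2}\le 2$ is what lets one state the bound with the clean constant $2$ rather than $\sqrt{2}$.
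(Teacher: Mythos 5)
Your overall strategy is exactly the paper's: per-coordinate Gaussian tail bounds on the entries of the $r_j$'s and $\tilde x_i$'s, a union bound over all $(k+n)d$ coordinates, and then the deterministic chain $\|\bar x_i\|_2\le B+2\sqrt{vd\ln(\cdot)}$, $|r_j^\top\bar x_i|\le\|r_j\|_2\|\bar x_i\|_2$, $\|Q\bar x_i^{\otimes p}\|_2\le\sqrt{k}\max_j|r_j^\top\bar x_i|^p$. The deterministic part of your argument is fine. The gap is in the probability accounting. With your threshold $t=\sqrt{2\ln((k+n)d\,\delta^{-1/2})}$, each coordinate fails with probability at most $\frac{\sqrt{2/\pi}}{t}e^{-t^2/2}=\frac{\sqrt{\delta}}{\sqrt{\pi\ln((k+n)d\,\delta^{-1/2})}\,(k+n)d}$, and the union bound over $(k+n)d$ coordinates then \emph{multiplies} this by $(k+n)d$, giving a total failure probability of order $\frac{\sqrt{\delta}}{\sqrt{\pi\ln((k+n)d\,\delta^{-1/2})}}$. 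That is larger than the claimed $\frac{\delta}{\sqrt{2\pi\ln((k+n)d\,\delta^{-1/2})}\,(k+n)d}$ by a factor of roughly $\sqrt{2}\,(k+n)d/\sqrt{\delta}$; your remark that the union bound is ``absorbed into'' the per-coordinate bound has it backwards, since a union bound can only inflate, never cancel, the factor $(k+n)d$.

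The fix is the paper's choice of threshold: take $t=2\sqrt{\ln((k+n)d\,\delta^{-1/2})}$, so that $e^{-t^2/2}=\delta/((k+n)d)^2$ and the Mills-ratio prefactor is $\frac{\sqrt{2/\pi}}{t}=\frac{1}{\sqrt{2\pi\ln((k+n)d\,\delta^{-1/2})}}$; after the union bound one factor of $(k+n)d$ cancels and the total failure probability is exactly the stated $\frac{\delta}{\sqrt{2\pi\ln((k+n)d\,\delta^{-1/2})}\,(k+n)d}$. With this threshold the bounds $\|r_j\|_2\le 2\sqrt{d\ln((k+n)d\,\delta^{-1/2})}$ and $\|\tilde x_i\|_2\le 2\sqrt{vd\ln((k+n)d\,\delta^{-1/2})}$ hold with the constant $2$ appearing exactly (not via the slack $\sqrt{2}\le 2$), and the rest of your argument goes through verbatim to give the stated norm bound.
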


\begin{proof}
    First we have, for a standard normal random variable $N\sim\cN(0,1)$, we have
    \[\Pr\{|N| \ge x\} \le \frac{\sqrt{2}}{\sqrt{\pi}x}e^{-\frac{x^2}{2}}.\]
    Then, apply the union bound, we have with probability at least $1-\frac{\delta}{\sqrt{2\pi\ln((k+n)d\delta^{-1/2})}(k+n)d}$, for all $l\in [k],i\in d, j\in[n], \ell\in d, \delta<1$, we have
    \[|(r_l)_i| \le 2\sqrt{\ln ((k+n)d\delta^{-1/2})}, |(\tilde x_j)_{\ell}| \le 2\sqrt{v\ln ((k+n)d\delta^{-1/2})}.\]
    Then for all $j\in [n]$, we have
    \[||\bar x||_2 \le ||x||_2 + ||\tilde x||_2 \le B + 2\sqrt{vd\ln ((k+n)d\delta^{-1/2})}.\]
    If for all $i\in [d], l\in [k], |(r_j)_i| < 2\sqrt{\ln ((k+n)d\delta^{-1/2})}$, then for any $\bar{x}$ such that 
    \[||\bar{x}|| \le B + 2\sqrt{vd\ln ((k+n)d\delta^{-1/2})}\]
    and any $l\in [k]$, we have
    \begin{align*}
        |\left((r_l)^{\otimes p}\right)^T\bar{x}^{\otimes p}|
        =& |(r_l^T\bar{x})^p|\\
        \le& (||r_l|| \cdot ||\bar{x}||)^p\\
        \le& \left(2(B + 2\sqrt{vd\ln ((k+n)d\delta^{-1/2})})\sqrt{d\ln ((k+n)d\delta^{-1/2})}\right)^p.
    \end{align*}
    Then we have
    \[||Q\bar{x}^{\otimes p}||_2 \le \sqrt{k}\left(2(B + 2\sqrt{vd\ln ((k+n)d\delta^{-1/2})})\sqrt{d\ln ((k+n)d\delta^{-1/2})}\right)^p.\]
\end{proof}

Then combined with the previous lemmas which lower bound the smallest singular value(Lemma \ref{lem:singular-lb-withperturb}) and upper bound the norm(Lemma \ref{lem:norm-upperbound-perturb}) of the outputs of the random feature layer and Theorem \ref{thm:main-theorem-twolayer}, we have the following Theorem \ref{thm:main-theorem-random-feature-with-perturbation}.

\thmrandomfeaturewithperturb*

\begin{proof}
    From the above lemmas, we know that with respective probability $1-o(1)\delta$, after the random featuring, the following happens:
    \begin{enumerate}
        \item $\sigma_{\min}((Q\otimes Q)\bar{X})\geq\left(\frac{[D_d^{2p}-kD_d^p\binom{2p}{p}][{k+1\choose 2}-n]}{[(4p)!]^3}\right)^{1/4}\frac{v^{p}\delta^{4p}}{p^{4p}n^{2p+1/2}k^{4p}}$
        \item $\Vert Q\bar{x}_j^{\otimes p}\Vert_2 \le \sqrt{k}\left(2(B + 2\sqrt{vd\ln ((k+n)d\delta^{-1/2})})\sqrt{d\ln ((k+n)d\delta^{-1/2})}\right)^p$ for all $j\in[n]$.
    \end{enumerate}
    Thereby considering the PGD algorithm on $W$, since the random featuring outputs $[(r_i^T \bar{x}_j)^p]=Q[\bar{x}_j^{\otimes p}]$ has $[(r_i^T \bar{x}_j)^{2p}]=(Q\otimes Q)\bar{X}$, from Theorem \ref{thm:main-theorem-twolayer}, given the singular value condition and norm condition above we obtain the result in the theorem.
\end{proof}

\subsection{Proof of Theorem \ref{thm:deterministic}}\label{subsec:proof-deterministic}
    In this section, we show the proof of Theorem \ref{thm:deterministic}. In the setting of Theorem \ref{thm:deterministic}, we do not add perturbation onto the samples, and the only randomness is the randomness of parameters in the random feature layer.

    Recall that $Q\in\R^{k\times d^p}$ is defined as $Q=[r_1^{\otimes p}, r_2^{\otimes p}\cdots r_k^{\otimes p}]^T$. We show that: when $r_i$ is sampled from i.i.d. Normal distribution $\cN(0,1)^d$ and $k$ is large enough, with high probability $Q$ is robustly full column rank. Let $N_{\varepsilon}$ and $N_{\sigma}$ be respectively an $\varepsilon$-net and a $\sigma$-net of $\bar{X}_d^p$ with size $Z_{\varepsilon}$ and $Z_{\sigma}$.
    
    The following lemmas(Lemma \ref{prop1_L5}, \ref{prop2_L5} and \ref{prop3_L5}) apply the standard $\varepsilon$-net argument and lead to the smallest singular value of matrix $Q$(Lemma \ref{lem:leastSVQ}). Then we will derive the smallest singular value for the matrix $(Q\otimes Q)X$(Lemma \ref{lem:singular-lb-noperturb}).
    
    Note that unlike the $Q$ matrix in the previous section, in this section the $Q$ matrix is going to have more rows than columns, so it has full column rank (restricted to the symmetry of $Q$). The $Q$ matrix in the previous section has full row rank. This is why we could not use the same approach to bound the smallest singular value for $Q$.

    \begin{lemma}\label{prop1_L5}
        For some constant $C$, with probability at least $1-Z_{\varepsilon}\left(Cp\eta^{1/p}\right)^k$, for all $c\in N_\varepsilon$, we have 
        \[\Vert Qc\Vert_2^2 \geq \frac{\eta^2}{p!}.\]
    \end{lemma}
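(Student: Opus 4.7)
The plan is to interpret each coordinate of $Qc$ as a degree-$p$ polynomial evaluated at an independent Gaussian vector and then combine the anti-concentration bound (Proposition~\ref{prop:anti-concentration}) with independence across rows and a union bound over the net. For any symmetric tensor $c \in \bar X_d^p$, define the polynomial $f_c(r) = \langle c, r^{\otimes p}\rangle$; then the $i$-th entry of $Qc$ is exactly $f_c(r_i)$ and $\Vert Qc\Vert_2^2 = \sum_{i=1}^k f_c(r_i)^2$. Since $\Vert c\Vert_2 = 1$, Lemma~\ref{lem:rv-metric} gives $\Vert c\Vert_{\text{rv}} \ge 1/\sqrt{p!}$, and Lemma~\ref{lem:variance} applied with $x_0 = 0$ and $M = c$ then yields $\Var_{r\sim \cN(0,I)}[f_c(r)] \ge 1/p!$.

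Next, I would apply Proposition~\ref{prop:anti-concentration} to the scaled polynomial $\sqrt{p!}\, f_c$, which is degree $p$ and has variance at least $1$. Taking $t = 0$ and the small parameter equal to $\eta$, this gives
\[\Pr_{r_i}\bigl[|f_c(r_i)| \le \eta/\sqrt{p!}\bigr] \le Cp\, \eta^{1/p}\]
for an absolute constant $C$. The crucial observation is that the bad event $\{\Vert Qc\Vert_2^2 < \eta^2/p!\}$ forces each nonnegative summand to be strictly less than $\eta^2/p!$, hence $|f_c(r_i)| < \eta/\sqrt{p!}$ for \emph{every} $i \in [k]$ simultaneously. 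Because $r_1,\dots,r_k$ are mutually independent, the probability of this joint event factorizes and is bounded by
\[\Pr\bigl[\Vert Qc\Vert_2^2 < \eta^2/p!\bigr] \le (Cp\,\eta^{1/p})^k.\]

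A union bound over the $Z_\varepsilon$ points $c \in N_\varepsilon$ then delivers the lemma. The proof is essentially a one-line composition of three already-established ingredients: the Hermite-expansion variance lower bound of $1/p!$ (Lemma~\ref{lem:variance}), the Carbery--Wright anti-concentration bound, and independence of the $r_i$'s. There is no significant obstacle internal to this lemma itself; the only point I would verify carefully is that the $O(p)$ constant in Proposition~\ref{prop:anti-concentration} is genuinely dimension-free so that the factor $(Cp\eta^{1/p})^k$ stated in the lemma is legitimate. The real work comes in the next lemmas, which must promote this pointwise bound on $N_\varepsilon$ to a bound on all unit symmetric $c$ via a Lipschitz/covering argument and then propagate it into a singular-value lower bound on $(Q\otimes Q)X$.
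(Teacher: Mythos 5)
Your proposal is correct and follows essentially the same route as the paper's proof: lower-bound $\Var[\langle c, r^{\otimes p}\rangle]$ via Lemma~\ref{lem:rv-metric} and Lemma~\ref{lem:variance}, apply the Carbery--Wright bound of Proposition~\ref{prop:anti-concentration} to each coordinate, use that $\Vert Qc\Vert_2^2<\eta^2/p!$ forces every $|f_c(r_i)|<\eta/\sqrt{p!}$ together with independence of the $r_i$ to get the $k$-th power, and finish with a union bound over $N_\varepsilon$. No gaps; your side remark that the $O(p)$ constant in Proposition~\ref{prop:anti-concentration} is dimension-free is exactly the right thing to check and is satisfied.
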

    \begin{proof}
        For any $c\in \bar{X}^p_d$, by Lemma \ref{lem:rv-metric}, $\Vert c\Vert_{\text{rv}}\geq \frac{1}{\sqrt{p!}}$.
        Let $f(r)=c^T r^{\otimes p}$, then $f$ is a polynomial of degree $p$ with respect to $r$, and therefore by Lemma \ref{lem:variance}, 
        \[\mathop{\text{Var}}\limits_{r\sim \cN(0,1)^d}[f(r)]\geq \Vert c\Vert_{\text{rv}}^2\geq\frac{1}{p!}.\]
        Thus by Proposition \ref{prop:anti-concentration},
        \begin{equation*}
            \Pr\limits_{r\sim \cN(0,1)^d}\left\{\big|f(r)\big|< \frac{\eta}{\sqrt{p!}}\right\}\leq O(p)\eta^{1/p}.
        \end{equation*}
        Therefore, as $\Vert Qc\Vert_2^2=\sum\limits_{i=1}^K f(r_i)^2$,
        \begin{align*}
            \Pr\limits_{r_1,r_2\cdots r_K\sim  \cN(0,1)^d}\left\{\Vert Qc\Vert_2^2<\frac{\eta^2}{p!}\right\}\leq & \Pr\limits_{r_1,r_2\cdots r_K\sim  \cN(0,1)^d}\left\{\forall r_i: |f(r_i)|<\frac{\eta}{\sqrt{p!}}\right\}\\
            \leq& \left(O(p)\eta^{1/p}\right)^k.
        \end{align*}
        Therefore for some constant $C$, for each $c\in \bar{X}^p_d$, with probability  at most $\left(Cp\eta^{1/p}\right)^k$ there is $\Vert Qc \Vert_2^2<\frac{\eta^2}{p!}$. Thus by union bound this happens for all $c\in N_\varepsilon$ with probability at most $\leq Z_\varepsilon\left(Cp\eta^{1/p}\right)^k$, and thereby the proof is completed.
    \end{proof}
    
    \begin{lemma}\label{prop2_L5}
        For $\tau>0$, with probability $1-O\left((Z_{\sigma}\left(\frac{\sqrt{k}}{\tau}\right)^{1/p}ke^{-\frac{1}{2}\left(\frac{\tau}{\sqrt{k}}\right)^{2/p}}\right)$,
        for each $c\in N_\sigma$, $\Vert Qc\Vert_2\leq \tau$.
    \end{lemma}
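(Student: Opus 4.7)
The plan is to fix a unit symmetric tensor $c \in N_\sigma$, write $\|Qc\|_2^2 = \sum_{i=1}^{k} f_c(r_i)^2$ where $f_c(r) := \langle c, r^{\otimes p}\rangle$ is a degree-$p$ polynomial in the standard Gaussian vector $r \in \mathbb{R}^d$, and reduce the whole uniform upper bound to a tail estimate on a single $f_c(r_i)$. The reduction is immediate: if $|f_c(r_i)| \le \tau/\sqrt{k}$ for all $i \in [k]$, then $\|Qc\|_2^2 \le k\cdot(\tau/\sqrt{k})^2 = \tau^2$, so
\[
\Pr\bigl[\|Qc\|_2 > \tau\bigr] \;\le\; k\cdot \Pr\bigl[|f_c(r)| > \tau/\sqrt{k}\bigr].
\]
A final union bound over the $Z_\sigma$ elements of $N_\sigma$ converts this into the stated probability, provided the single-vector tail has the claimed form.

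To control that tail uniformly in $c$, I would first show $\|f_c\|_{L^2} = O_p(1)$ in a manner independent of the ambient dimension $d$. Expanding $f_c$ in the multivariate Hermite basis as in Propositions~\ref{pro1} and~\ref{pro2} and relating Hermite coefficients $f^H_i$ to monomial coefficients $f^M_i$ via $f^H_i = \bigl(\prod_j \sqrt{i_j!}\bigr) f^M_i$, I get $\mathbb{E}[f_c(r)^2] = \sum_i (f^H_i)^2 \le p!\sum_i (f^M_i)^2 = p!\,\|c\|_{\mathrm{rv}}^2 \le p!\,\|c\|_2^2 = p!$, where the last inequality uses the elementary fact $\|s_i\|_2 \ge 1$ that underlies the proof of Lemma~\ref{lem:rv-metric}.

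Having $\|f_c\|_{L^2} \le \sqrt{p!}$, I would then invoke Gaussian polynomial concentration (hypercontractivity $\|f_c\|_{L^q} \le (q-1)^{p/2}\|f_c\|_{L^2}$ combined with Markov on $|f_c|^q$, optimized over $q$ and then refined by a density/level-set argument) to obtain a bound of the form
\[
\Pr[|f_c(r)| > s] \;\le\; C\, s^{-1/p}\, \exp\!\bigl(-\tfrac{1}{2} s^{2/p}\bigr)
\]
for $s$ beyond a $p$-dependent threshold, with constants absorbed into $C$. Substituting $s = \tau/\sqrt{k}$ produces exactly the factors $(\sqrt{k}/\tau)^{1/p}$ and $\exp(-\tfrac{1}{2}(\tau/\sqrt{k})^{2/p})$ appearing in the statement; multiplying by the $k$ and $Z_\sigma$ from the two union bounds gives the lemma.

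The hard part is pinning down the Gaussian polynomial tail with this specific $s^{-1/p}$ prefactor and the constant $1/2$ in the exponent. A plain hypercontractivity-plus-Markov calculation yields only $\exp(-c_p\, s^{2/p})$ with a $p$-dependent constant and no polynomial correction; teasing out the $s^{-1/p}$ factor is morally the same refinement as passing from $\Pr[|N|>t] \le e^{-t^2/2}$ to the sharp Gaussian tail $\Pr[|N|>t] \le \tfrac{1}{t\sqrt{2\pi}} e^{-t^2/2}$, but applied to a degree-$p$ Gaussian chaos rather than to a linear Gaussian. Once this sharpened single-vector tail is in place, the remainder of the argument is a straightforward double union bound.
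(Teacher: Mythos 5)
Your opening reduction (bounding $\Pr[\Vert Qc\Vert_2>\tau]$ by $k\Pr[|\langle c,r^{\otimes p}\rangle|>\tau/\sqrt{k}]$ and then union bounding over the $Z_\sigma$ points of $N_\sigma$) is exactly the paper's first step. The gap is in the single-vector tail estimate, which you explicitly leave unproven, and the route you sketch for it cannot be repaired as stated. First, the intermediate claim $\E[f_c(r)^2]\le p!$ is false: Proposition~\ref{pro1} relates only the \emph{top-degree} Hermite coefficients to the monomial coefficients, while a homogeneous degree-$p$ polynomial generally has large lower-order Hermite components (in particular a large mean), and these enter $\E[f_c^2]=\sum_{i\in I_n^{\le p}}(f_i^H)^2$ from Proposition~\ref{pro2}. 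Concretely, take $p=2$ and $c=I_d/\sqrt{d}$, a unit symmetric tensor: then $f_c(r)=\Vert r\Vert_2^2/\sqrt{d}$ has $\E[f_c^2]=d+2$, and $\Pr[|f_c(r)|>s]$ is of order $1$ for every $s\le\sqrt{d}$. This same example shows that no dimension-free tail of the form $\Pr[|f_c(r)|>s]\le C_p\, s^{-1/p}e^{-\frac12 s^{2/p}}$ can hold for all unit $c$ and all $s$ above a $p$-dependent threshold, so the "hard part" you defer is not merely technical -- in the generality you propose it is false, and with it the proposal does not establish the lemma.

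The paper's proof sidesteps hypercontractivity entirely: by Cauchy--Schwarz, $|c^T r^{\otimes p}|\le\Vert c\Vert_2\Vert r\Vert_2^p=\Vert r\Vert_2^p$, so the event $|c^T r^{\otimes p}|>\tau/\sqrt{k}$ forces $\Vert r\Vert_2>(\tau/\sqrt{k})^{1/p}$, and the stated prefactor $(\sqrt{k}/\tau)^{1/p}$ and exponent $e^{-\frac12(\tau/\sqrt{k})^{2/p}}$ come from the tail of $\Vert r\Vert_2$. (Even that norm tail is only valid once $(\tau/\sqrt{k})^{1/p}$ is large compared with $\sqrt{d}$, which is the regime in which the lemma is actually invoked in Lemma~\ref{lem:leastSVQ}, where $\tau=\sqrt{k(2\log(Z_\sigma k/\delta))^p}$ and $\log Z_\sigma\gtrsim D_d^p$; your counterexample shows any correct proof must use such largeness of $\tau$.) To fix your argument, replace the sharp-chaos-tail step by this elementary Cauchy--Schwarz reduction to $\Vert r\Vert_2$, or else track the $d$-dependence of $\E[f_c^2]$ and prove the tail only for $\tau$ in the regime where the lemma is applied.
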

    \begin{proof}
        For any $c\in \bar{X}^p_d$, 
        \begin{align*}
            \Pr\limits_{Q}\left\{\Vert Qc\Vert_2^2> \tau^2\right\}\leq& \Pr\limits_{r_1,r_2\cdots r_k\sim  \cN(0,1)^d}\left\{\exists i: |c^T r_i^{\otimes p}|>\frac{\tau}{\sqrt{k}}\right\}\\
            \leq& k\Pr\limits_{r\sim \cN(0,1)^d}\left\{|c^T r^{\otimes p}|>\frac{\tau}{\sqrt{k}}\right\}.
        \end{align*}
        Furthermore, 
        \begin{align*}
            \Pr\limits_{r\sim \cN(0,1)^d}\left\{|c^T r^{\otimes p}|>\frac{\tau}{\sqrt{k}}\right\} \leq& \Pr\limits_{r\sim \cN(0,1)^d}\left\{\Vert c\Vert_2 \Vert r\Vert_2^p>\frac{\tau}{\sqrt{k}}\right\}\\
            =&\Pr\limits_{r\sim \cN(0,1)^d}\left\{\Vert r\Vert_2>\left(\frac{\tau}{\sqrt{k}}\right)^{1/p}\right\}\\
            \leq& O\left(\left(\frac{\sqrt{k}}{\tau}\right)^{1/p}e^{-\frac{1}{2}\left(\frac{\tau}{\sqrt{k}}\right)^{2/p}}\right)
        \end{align*}
        
        Therefore for the $\sigma$-net $N_{\sigma}$, with a union bound we know with probability at least 
        \[1-O\left((Z_{\sigma}\left(\frac{\sqrt{k}}{\tau}\right)^{1/p}ke^{-\frac{1}{2}\left(\frac{\tau}{\sqrt{k}}\right)^{2/p}}\right),
        \]
        for all $c\in N_{\sigma}$, $\Vert Qc\Vert_2^2\leq\tau^2$.
    \end{proof}
    
    \begin{lemma}\label{prop3_L5}
        For $\sigma<1$, $\tau>0$, with probability at least $ 1-O\left(Z_{\sigma}\left(\frac{\sqrt{k}}{\tau}\right)^{1/p}ke^{-\frac{1}{2}\left(\frac{\tau}{\sqrt{k}}\right)^{2/p}}\right)$, we have
        for each $c\in \bar{X}_d^p$, $\Vert Qc\Vert_2\leq \frac{\tau}{1-\sigma}$.
    \end{lemma}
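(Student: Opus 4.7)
The plan is to apply a standard covering/fixed-point argument that upgrades the uniform bound on the finite $\sigma$-net $N_\sigma$ (provided by Lemma~\ref{prop2_L5}) to a uniform bound over the entire unit sphere $\bar{X}^p_d$. The probability of success inherits directly from Lemma~\ref{prop2_L5}, so nothing new is needed on the probabilistic side; the argument is purely deterministic conditional on the event of Lemma~\ref{prop2_L5}.

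First, I would condition on the event that $\|Qc\|_2 \le \tau$ for every $c \in N_\sigma$, which by Lemma~\ref{prop2_L5} holds with probability at least $1-O\!\left(Z_\sigma\!\left(\tfrac{\sqrt{k}}{\tau}\right)^{1/p} k\, e^{-\tfrac12(\tau/\sqrt{k})^{2/p}}\right)$. Define $M := \sup_{c \in \bar{X}^p_d} \|Qc\|_2$; this supremum is finite and attained because $\bar{X}^p_d$ is compact and $c \mapsto \|Qc\|_2$ is continuous.

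Next, for any $c \in \bar{X}^p_d$, invoke the $\sigma$-net property to choose $c_0 \in N_\sigma$ with $\|c-c_0\|_2 \le \sigma$. The difference $c - c_0$ lies in the symmetric subspace $X^p_d$ (symmetry is preserved by subtraction), so if $c \ne c_0$ I can write $c - c_0 = \sigma' \tilde{c}$ where $\sigma' = \|c-c_0\|_2 \le \sigma$ and $\tilde{c} \in \bar{X}^p_d$. Then by the triangle inequality,
\[
\|Qc\|_2 \;\le\; \|Qc_0\|_2 + \sigma'\,\|Q\tilde{c}\|_2 \;\le\; \tau + \sigma M.
\]
Taking the supremum over $c \in \bar{X}^p_d$ yields $M \le \tau + \sigma M$, and since $\sigma < 1$ this rearranges to $M \le \tau/(1-\sigma)$, which is the desired bound for every $c \in \bar{X}^p_d$.

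There is essentially no hard step here: the only subtlety is the (easy) observation that the difference of two symmetric tensors remains symmetric, so the approximation error $c - c_0$ can itself be rescaled to an element of $\bar{X}^p_d$ and re-fed into the supremum $M$, allowing the fixed-point inequality to close. The finiteness of $M$ (needed to legitimately subtract $\sigma M$ from both sides) is guaranteed by compactness of the unit sphere in the finite-dimensional space $X^p_d$.
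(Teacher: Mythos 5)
Your proof is correct. It is the same $\sigma$-net continuity argument as the paper's, conditioned on the same event from Lemma~\ref{prop2_L5}, but you close the recursion differently. The paper expands $c$ as an explicit infinite series $c=\sum_{i\ge 1}a_i c_i$ over net points with $a_i\le \sigma^{i-1}$ (built by repeatedly re-approximating the normalized remainder) and sums the geometric series $\sum_i \sigma^{i-1}\tau = \tau/(1-\sigma)$. You instead make a single approximation step $c = c_0 + \sigma'\tilde c$ and feed the rescaled remainder back into the supremum $M=\sup_{c\in\bar{X}^p_d}\Vert Qc\Vert_2$, obtaining the fixed-point inequality $M\le \tau+\sigma M$. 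The two are equivalent in substance --- your version is the ``rolled-up'' form of the paper's series --- but yours requires the a priori finiteness of $M$, which you correctly justify by compactness of the unit sphere of the finite-dimensional space $X^p_d$ and continuity of $c\mapsto\Vert Qc\Vert_2$; the paper's version avoids needing that observation at the cost of verifying convergence of the series. You also correctly handle the one point where either argument could silently fail, namely that the normalized remainder $\tilde c=(c-c_0)/\Vert c-c_0\Vert_2$ stays in the symmetric subspace, so the bound on $M$ (respectively the net property) applies to it.
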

    \begin{proof}
        We first show that give $N_\sigma$, for each $c\in\bar{X}^p_d$, we can find $c_1,c_2,c_3\cdots\in N_\sigma$ and $a_1,a_2,a_3\cdots\in\R$ such that
        \begin{equation*}
            c=\sum\limits_{i\geq 1}a_ic_i,
        \end{equation*}
        and that $a_1=1$, $0\leq a_i\leq \sigma a_{i-1}$ ($i\geq 2$). Thus $a_i\leq \sigma^{i-1}$.
        
        In fact, we can construct the sequence by induction. Let $I: \bar{X}_d^p\to N_{\sigma}$ that 
        \[I(x)=\mathop{\text{argmin}}\limits_{y\in N_{\sigma}}\Vert y-x\Vert_2.\]
        We take $c_1=I(c)$, $a_1=1$, and recursively 
        \[a_i=\bigg\Vert c-\sum\limits_{j=1}^{i-1}a_jc_j\bigg\Vert_2,\quad c_i=I\left(\frac{c-\sum\limits_{j=1}^{i-1}a_jc_j}{a_i}\right).\]
        By definition, for any $c\in \bar{X}_d^p$, $\Vert c-I(c)\Vert_2\leq\sigma$, and therefore 
        \[\bigg\Vert\frac{c-\sum_{j=1}^{i-1}a_jc_j}{a_i}-c_i\bigg\Vert_2\leq\sigma,\]
        which shows that $0\leq a_{i+1}=\Vert c-\sum\limits_{j=1}^{i}a_jc_j\Vert_2\leq\sigma a_i$, and by induction $a_i\leq \sigma^{i-1}$.
        
        We know from Lemma \ref{prop2_L5} that with probability at least $ 1-O\left(Z_{\sigma}\left(\frac{\sqrt{k}}{\tau}\right)^{1/p}ke^{-\frac{1}{2}\left(\frac{\tau}{\sqrt{k}}\right)^{2/p}}\right)$, for all $c_i\in N_\sigma$, $\Vert Qc_i\Vert_2\leq \tau$, and therefore
        \begin{equation*}
            \Vert Qc\Vert_2\leq\sum\limits_{i\geq 1}a_i\Vert Qc_i\Vert_2 \leq \sum\limits_{i\geq 1}\sigma^{i-1}\tau = \frac{\tau}{1-\sigma}.
        \end{equation*}
    \end{proof}
    
\begin{lemma}[least singular value of $Q$]\label{lem:leastSVQ}
    If $Q$ is the $k\times d^p$ matrix defined as $Q=[r_1^{\otimes p}, r_2^{\otimes p}\cdots r_k^{\otimes p}]^T$ with $r_i$ drawn i.i.d. from Gaussian Distribution $\cN(0,\text{I})$, then there exists constant $G_0>0$ that for $k=\alpha pD_d^p$ ($\alpha>1$), with probability at least $ 1-o(1)\delta$, the rows of $Q$ will span $X_d^p$, and for all $c\in\bar{X}_d^p$, 
    \begin{equation*}
    \Vert Qc\Vert_2\geq \Omega\left(\frac{\delta^{\left(\frac{1}{(\alpha-1)D_d^p}\right)}}{\left(p^p\sqrt{p!}\right)^{\frac{\alpha}{\alpha-1}}\left(k(G_0 p\ln pD_d^p)^p)\right)^{\frac{1}{2(\alpha-1)}}}\right)=\Omega_p\left(\frac{\delta^{\left(\frac{1}{(\alpha-1)D_d^p}\right)}}{k^{\frac{p+1}{2(\alpha-1)}}}\right),
    \end{equation*}
    where $\Omega_p$ is the big-$\Omega$ notation that treats $p$ as a constant.
\end{lemma}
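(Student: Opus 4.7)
The plan is to combine the three preceding lemmas via a standard $\varepsilon$-net argument on the unit symmetric-tensor sphere $\bar X_d^p$. I would take an $\varepsilon$-net $N_\varepsilon$ of $\bar X_d^p$ (of size $Z_\varepsilon \le (1+2\sqrt{p!}/\varepsilon)^{D_d^p}$ by Lemma~\ref{lem:eps-net}) and apply Lemma~\ref{prop1_L5} to get a uniform lower bound $\|Qc_0\|_2 \ge \eta/\sqrt{p!}$ on $N_\varepsilon$, and separately apply Lemma~\ref{prop3_L5} with $\sigma=1/2$ to get a uniform upper bound $\|Qc\|_2 \le 2\tau$ valid for \emph{all} $c\in\bar X_d^p$.

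Then for any $c\in \bar X_d^p$, let $c_0\in N_\varepsilon$ be the closest net point. Writing $c-c_0 = \varepsilon' c'$ with $\varepsilon'\le\varepsilon$ and $c'\in\bar X_d^p$, the triangle inequality yields
\[
\|Qc\|_2 \;\ge\; \|Qc_0\|_2 - \|Q(c-c_0)\|_2 \;\ge\; \frac{\eta}{\sqrt{p!}} - 2\varepsilon\tau,
\]
which holds uniformly in $c$ once both high-probability events hold. It remains to tune $\varepsilon,\eta,\tau$ so that (i) both failure probabilities are $O(\delta)$, (ii) the slack $2\varepsilon\tau$ is at most $\eta/(2\sqrt{p!})$, and (iii) the resulting $\eta$ matches the stated order.

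For the tuning I would first pick $\tau$ just large enough that Lemma~\ref{prop3_L5} succeeds with failure probability $\le\delta$; since $Z_{1/2}$ is bounded by $(C\sqrt{p!})^{D_d^p}$, solving the Gaussian tail gives $\tau$ on the order of $\sqrt{k}\,(D_d^p)^{p/2}$, which becomes $O_p(k^{(p+1)/2})$ after substituting $D_d^p = k/(\alpha p)$. Then I enforce (ii) by setting $\varepsilon = \eta/(4\sqrt{p!}\,\tau)$. Plugging this into the Lemma~\ref{prop1_L5} failure bound $Z_\varepsilon(Cp\eta^{1/p})^k \le \delta$ with $k=\alpha p D_d^p$ and taking $D_d^p$-th roots, the $\varepsilon$-dependence collapses into a clean inequality of the form $C_p\,\tau\,\eta^{\alpha-1} \le \delta^{1/D_d^p}$. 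Solving for $\eta$ gives $\eta = \Omega_p\!\bigl(\delta^{1/((\alpha-1)D_d^p)}/\tau^{1/(\alpha-1)}\bigr)$, and substituting the size of $\tau$ recovers exactly the claimed lower bound. The ``rows span $X_d^p$'' statement then follows immediately, since $\|Qc\|_2>0$ for every unit $c\in\bar X_d^p$ means the symmetric null space of $Q$ is trivial, so the row-space attains its maximum possible dimension $D_d^p$.

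The main obstacle is the joint choice of $\varepsilon,\eta,\tau$: shrinking $\varepsilon$ blows up the net-size factor $Z_\varepsilon$ in the lower-bound probability, while the Lipschitz slack $2\varepsilon\tau$ couples the small lower-bound parameter $\eta$ to the much larger upper bound $\tau\sim k^{(p+1)/2}$. The specific exponents $1/((\alpha-1)D_d^p)$ in $\delta$ and $(p+1)/(2(\alpha-1))$ in $k$ only emerge because $k=\alpha p D_d^p$ is proportional to $p D_d^p$, which provides just enough ``overparametrization'' to cancel the $D_d^p$ from $\log Z_\varepsilon$ against the $k$-th power in Lemma~\ref{prop1_L5}; verifying that this delicate cancellation indeed produces the stated exponents is the bulk of the work.
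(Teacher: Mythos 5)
Your proposal is correct and follows essentially the same route as the paper's proof: an $\varepsilon$-net argument on $\bar X_d^p$ combining the lower bound on net points (Lemma~\ref{prop1_L5}) with the uniform upper bound (Lemma~\ref{prop3_L5}), the triangle inequality $\Vert Qc\Vert_2\ge \eta/\sqrt{p!}-\varepsilon\cdot\sup_{c'}\Vert Qc'\Vert_2$, and the same coupling $\varepsilon\propto \eta/(\tau\sqrt{p!})$ with $\tau=O_p(\sqrt{k}\,(D_d^p)^{p/2})$, which after taking $D_d^p$-th roots of the net-union bound with $k=\alpha pD_d^p$ yields exactly the stated exponents; the concluding span argument is also the same.
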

\begin{proof}
    We show that with high probability, for all $c\in \bar{X}^p_d$, $\Vert Qc\Vert^2_2 = \sum\limits_{i=1}^k \left([r_i^{\otimes p}]^T c\right)^2$ is large. To do this we will adopt an $\varepsilon$-net argument over all possible $c$. 
    
    First, we take the parameters
    \[\sigma=\frac{1}{10},\quad \tau=\sqrt{k \left(2\log\frac{Z_\sigma k}{\delta}\right)^p},\quad\text{and}\quad \varepsilon=c_0 \frac{\delta^{\left(\frac{1}{(\alpha-1)D_d^p}\right)}}{\left(\tau p^p\sqrt{p!}\right)^{\frac{\alpha}{\alpha-1}}},\]
    for small constant $c_0$ such that $c_0
    C^pD^{\frac{1}{(\alpha-1)D}} \ll 1$, and $\eta=\frac{20}{9}\varepsilon\tau\sqrt{p!}$. From Lemma \ref{prop1_L5} and \ref{prop3_L5}, we know that with probability at least
    \begin{align*}
    &1-Z_\varepsilon\left(cp\eta^{1/p}\right)^{k}-O\left(Z_{\sigma}\left(\frac{\sqrt{k}}{\tau}\right)^{1/p}ke^{-\frac{1}{2}\left(\frac{\tau}{\sqrt{k}}\right)^{2/p}}\right)\\
    =& 1-O\left(c_0^{(\alpha-1)D_d^p}2^{D_d^p}C^kD\delta\right)-O\left(\frac{\delta}{\sqrt{2\log\frac{Z_\sigma k}{\delta}}}\right)\\
    =&1-o(1)\delta,
   \end{align*}
   the following holds true:
   \begin{enumerate}
       \item $\forall c_i\in N_\varepsilon$, $\Vert Qc_i\Vert_2\geq\frac{\eta}{\sqrt{p!}}$;
       \item $\forall c\in\bar{X}_d^p$, $\Vert Qc\Vert_2\leq \frac{\tau}{1-\sigma}=\frac{\eta}{2\varepsilon\sqrt{p!}}$.
   \end{enumerate}
    Therefore for any $c\in \bar{x}_d^p$, let $i^*=\mathop{\text{argmin}}\limits_{i:c_i\in N_{\varepsilon}}\Vert c-c_i\Vert_2$, we know
    \begin{align*}
        \Vert Qc\Vert_2\geq& \Vert Qc_i\Vert_2 - \Vert Q(c-c_i)\Vert_2\\
        \geq& \frac{\eta}{\sqrt{p!}}- \Vert c-c_i\Vert_2\bigg\Vert Q\frac{c-c_i}{\Vert c-c_i\Vert_2}\bigg\Vert_2\\
        \geq& \frac{\eta}{\sqrt{p!}}- \varepsilon\frac{\eta}{2\varepsilon\sqrt{p!}}\\
        =&\frac{\eta}{2\sqrt{p!}},
    \end{align*}
    and by definition we know that $\lambda_{\min}(Q)\geq \frac{\eta}{2\sqrt{p!}}$. By lemma \ref{lem:eps-net}, with $\log Z_\sigma=O(p\ln pD_d^p)\leq G_0 p\ln pD_d^p$ for some constant $G_0$, this gives us the lemma.
\end{proof}

\begin{restatable}[Smallest singular value for $(Q\otimes Q)X$ without pertubation]{lemma}{lemsingularlbnoperturb}\label{lem:singular-lb-noperturb}
      With $Q$ being the $k\times d^p$ matrix defined as $Q=[r_1^{\otimes p}, r_2^{\otimes p}\cdots r_k^{\otimes p}]^T$, $X$ being the $d^{2p}\times n$ matrix defined as $X = [x_1^{\otimes 2p},\dots,x_n^{\otimes 2p}]\in \R^{d^{2p} \times n}$, and $Z=(Q\otimes Q)X$, for $k=\alpha pD_d^p$ ($\alpha>1$), when $r_i$ are randomly drawn from i.i.d. Guassian distribution $\cN(\mathbf 0, \text{I})$, there exists constant $G_0>0$ such that with probability $\geq 1-o(1)\delta$, the smallest singular value of $Z$ satisfies \begin{equation}\sigma_{\min}(Z)\geq \Omega\left(\frac{\delta^{\left(\frac{2}{(\alpha-1)D_d^p}\right)}\sigma_{\min}(X)}{\left(p^p\sqrt{p!}\right)^{\frac{2\alpha}{\alpha-1}}\left[k(G_0p\ln pD_d^p)^p)\right]^{\frac{1}{(\alpha-1)}}}\right)=\Omega_p\left(\frac{\delta^{\left(\frac{2}{(\alpha-1)D_d^p}\right)}}{k^{\frac{p+1}{(\alpha-1)}}}\right)\sigma_{\min}(X)
    \end{equation}
    (where $\Omega_p$ is the big-$\Omega$ notation that treats $p$ as a constant). Furthermore, for $k=\Omega(p^2 D^p_d)$, with high probability $1-\delta$, $\sigma_{\min}(Z)\geq \Omega(\frac{\sigma_{\min}(X)}{k})$ (if $\delta$ is not exponentially small).
\end{restatable}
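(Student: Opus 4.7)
The plan is to prove $\sigma_{\min}(Z) \ge \sigma_{\min}(Q\mid_{X_d^p})^2 \cdot \sigma_{\min}(X)$ and then substitute the bound from Lemma~\ref{lem:leastSVQ}. The key structural observation is that for any $c\in\R^n$, the vector $Xc=\sum_j c_j x_j^{\otimes 2p}$ lies in the fully symmetric subspace $X_d^{2p}$. When I ``matricize'' $Xc$ into a $d^p\times d^p$ matrix $\tilde M_c$ by splitting the $2p$ indices into two halves, I get $\tilde M_c=\sum_j c_j\, x_j^{\otimes p}(x_j^{\otimes p})^T$, which is symmetric and whose columns (and rows) all lie in $X_d^p$. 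By the Kronecker--vec identity $(A\otimes B)\mathrm{vec}(M)=\mathrm{vec}(BMA^T)$, this gives $\|Zc\|_2=\|Q\tilde M_c Q^T\|_F$.

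The main technical step is then to show that for any symmetric $\tilde M\in\R^{d^p\times d^p}$ whose columns lie in $X_d^p$,
\[
\|Q\tilde M Q^T\|_F \;\ge\; \mu^2\,\|\tilde M\|_F,\qquad \mu:=\sigma_{\min}(Q\mid_{X_d^p}).
\]
To prove this, I pick an orthonormal basis $V\in\R^{d^p\times D_d^p}$ of $X_d^p$. Since both the rows and columns of $\tilde M$ lie in $X_d^p$, I can factor $\tilde M=VTV^T$ with the symmetric $T=V^T\tilde M V\in\R^{D_d^p\times D_d^p}$ satisfying $\|T\|_F=\|\tilde M\|_F$. Setting $\tilde Q:=QV\in\R^{k\times D_d^p}$, we have $Q\tilde M Q^T=\tilde Q T\tilde Q^T$ and $\sigma_{\min}(\tilde Q)=\mu$ by the definition of $\mu$. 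Taking an SVD $\tilde Q=U\Sigma W^T$ with diagonal $\Sigma_{ii}\ge\mu$, and letting $T':=W^T T W$ (so $\|T'\|_F=\|T\|_F$), the orthonormality of $U$'s columns yields $\|UMU^T\|_F=\|M\|_F$ for any $M$, giving $\|\tilde Q T\tilde Q^T\|_F=\|\Sigma T'\Sigma\|_F$. The entrywise bound $(\Sigma T'\Sigma)_{ij}^2=\Sigma_{ii}^2\Sigma_{jj}^2(T'_{ij})^2\ge\mu^4(T'_{ij})^2$ then closes the claim.

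Combining these two steps, for any unit $c\in\R^n$,
\[
\|Zc\|_2 \;=\; \|Q\tilde M_c Q^T\|_F \;\ge\; \mu^2\|\tilde M_c\|_F \;=\; \mu^2\|Xc\|_2 \;\ge\; \mu^2\,\sigma_{\min}(X),
\]
so $\sigma_{\min}(Z)\ge\mu^2\sigma_{\min}(X)$. Substituting the lower bound on $\mu$ from Lemma~\ref{lem:leastSVQ} and squaring gives the first displayed inequality in the lemma. For the ``furthermore'' clause, I would take $\alpha=p+2$ so that $k=\alpha p D_d^p=\Theta(p^2 D_d^p)$ and $(p+1)/(\alpha-1)=1$; the $k$-dependent factor in the squared bound then becomes exactly $1/k$ up to $p$-dependent constants, and the $\delta$-factor $\delta^{2/((\alpha-1)D_d^p)}$ is $\Omega(1)$ whenever $\delta$ is not exponentially small, yielding $\mu^2\sigma_{\min}(X)\ge\Omega(\sigma_{\min}(X)/k)$.

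I expect the main obstacle to be the matricization-plus-SVD argument in the second paragraph: $Q$ is rectangular with $k>D_d^p$, so it has a nontrivial null space outside $X_d^p$, and a naive bound $\|Q\tilde M Q^T\|_F\ge\sigma_{\min}(Q)^2\|\tilde M\|_F$ would be vacuous (as $\sigma_{\min}(Q)=0$ when $d^p>k$ is reversed). What makes the argument work is the structural compatibility between the two-sided symmetric tensor space $X_d^{2p}$ and the one-sided space $X_d^p$: both rows \emph{and} columns of $\tilde M_c$ happen to live in $X_d^p$, which lets me restrict $Q$ to the subspace where Lemma~\ref{lem:leastSVQ} gives a nontrivial lower bound, and then apply the clean rectangular-SVD estimate on the restricted map.
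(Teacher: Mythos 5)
Your proof is correct and takes essentially the same route as the paper: both reduce $\sigma_{\min}((Q\otimes Q)X)$ to $\Delta^2\sigma_{\min}(X)$, where $\Delta$ is the smallest singular value of $Q$ restricted to the symmetric subspace $X_d^p$ supplied by Lemma~\ref{lem:leastSVQ}. The only difference is that your matricization-plus-SVD computation explicitly proves the restricted Kronecker-product bound that the paper asserts with the phrase ``from linear algebra,'' so it is a more detailed rendering of the same argument rather than a different approach.
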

\begin{proof}
    From Lemma \ref{lem:leastSVQ}, with probability $\geq 1-o(1)\delta$, for all $c\in\bar{X}_d^p$, 
    \[\Vert Qc\Vert_2\geq\Delta= \Omega\left(\frac{\delta^{\left(\frac{1}{(\alpha-1)D_d^p}\right)}}{\left(p^p\sqrt{p!}\right)^{\frac{\alpha}{\alpha-1}}\left(k(G_0 p\ln pD_d^p)^p)\right)^{\frac{1}{2(\alpha-1)}}}\right).\]
    Then, from linear algebra, we know for all $s\in\bar{X}_d^p\otimes \bar{X}_d^p$, 
    $\Vert(Q\otimes Q)s\Vert_2\geq \Delta^2$. As $\bar{X}_d^{2p}\subset\bar{X}_d^p\otimes \bar{X}_d^p$,
    \begin{equation*}
    \begin{matrix}
        \sigma_{\min}{(Q\otimes Q)X}=\inf\limits_{u\in R^n, \Vert u\Vert_2=1}\Vert (Q\otimes Q)Xu\Vert_2\\
        =\inf\limits_{u\in R^n, \Vert u\Vert_2=1}\Vert (Q\otimes Q)\frac{Xu}{\Vert Xu\Vert_2}\Vert_2\Vert Xu\Vert_2\geq \Delta^2\inf\limits_{u\in R^n, \Vert u\Vert_2=1}\Vert Xu\Vert_2=\Delta^2 \sigma_{\min}(X),
        \end{matrix}
    \end{equation*}
    which gives us this lemma \ref{lem:singular-lb-noperturb}.
\end{proof}

Besides the lower bound for the smallest singular value, we also need the following lemma to show that with high probability, the norm is upper bounded.

\begin{restatable}[Norm upper bound for $Qx^{\otimes p}$]{lemma}{lemnormupperbound}\label{lem:norm-upperbound}
    Suppose that $||x_i||_2 \le B$ for all $i\in [n]$, and $Q = [r_1^{\otimes p},\dots,r_k^{\otimes p}]^T\in\R^{k\times d^{p}}$. Then with probability at least $1-\frac{\delta}{\sqrt{2\pi\ln(kd\delta^{-1/2})}kd}$, for all $i\in [n]$, we have
    \[||Qx_i^{\otimes p}||_2 \le \sqrt{k}\left(2B\sqrt{d\ln (kd\delta^{-1/2})}\right)^p.\]
\end{restatable}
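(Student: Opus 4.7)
The plan is to reduce the spectral-type bound on $\|Qx_i^{\otimes p}\|_2$ to a simple tail bound on the Euclidean norms of the random vectors $r_1,\ldots,r_k$, then combine it with the given bound $\|x_i\|_2 \le B$ through the tensor identity $(r_j^\top x_i)^p = (r_j^{\otimes p})^\top x_i^{\otimes p}$. This is essentially the same argument as in Lemma~\ref{lem:norm-upperbound-perturb}, but we only have to control the Gaussian vectors $r_j$ (since here the inputs $x_i$ are deterministic and already norm-bounded).

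The first step is coordinate-wise Gaussian tail control: for $N\sim \cN(0,1)$ we use the standard bound $\Pr[|N|\ge t]\le \sqrt{2/\pi}\,t^{-1}e^{-t^2/2}$. I apply this to each of the $kd$ independent coordinates $(r_j)_\ell$, $j\in [k]$, $\ell\in [d]$, with threshold $t = 2\sqrt{\ln(kd\delta^{-1/2})}$. A union bound then shows that with probability at least $1-\tfrac{\delta}{\sqrt{2\pi\ln(kd\delta^{-1/2})}\,kd}$, every coordinate satisfies $|(r_j)_\ell|\le 2\sqrt{\ln(kd\delta^{-1/2})}$, which yields $\|r_j\|_2 \le 2\sqrt{d\ln(kd\delta^{-1/2})}$ for every $j\in[k]$ simultaneously.

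The second step is the elementary tensor identity: for any vector $v$,
\[
\bigl|(r_j^{\otimes p})^\top v^{\otimes p}\bigr| \;=\; |(r_j^\top v)^p|\;\le\; (\|r_j\|_2\|v\|_2)^p.
\]
Applying this with $v = x_i$ and using $\|x_i\|_2\le B$ together with the high-probability bound on $\|r_j\|_2$ gives $|(r_j^{\otimes p})^\top x_i^{\otimes p}| \le (2B\sqrt{d\ln(kd\delta^{-1/2})})^p$ for every $i\in[n]$ and every $j\in[k]$. Stacking these $k$ entries into the vector $Q x_i^{\otimes p}$ and bounding its $\ell_2$ norm by $\sqrt{k}$ times the maximum entry yields the claimed bound.

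There is no real obstacle here: the proof is a short union-bound-plus-Cauchy-Schwarz computation. The only mildly delicate point is making the polylogarithmic factor in the failure probability match the statement exactly; this is handled by choosing the threshold in the tail bound to be $2\sqrt{\ln(kd\delta^{-1/2})}$ and carefully tracking constants in the Mills-type inequality $\Pr[|N|\ge t]\le \sqrt{2/\pi}\,t^{-1}e^{-t^2/2}$, which contributes the $\sqrt{2\pi\ln(kd\delta^{-1/2})}$ factor in the denominator of the failure probability. Notably, no anti-concentration or covering-net machinery (Lemma~\ref{lem:rv-metric}, Lemma~\ref{lem:eps-net}, Proposition~\ref{prop:anti-concentration}) is needed, since we are proving an upper bound rather than a lower bound on a singular value.
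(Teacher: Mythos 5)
Your proposal is correct and follows essentially the same route as the paper's own proof: a coordinate-wise Gaussian tail bound with threshold $2\sqrt{\ln(kd\delta^{-1/2})}$, a union bound over the $kd$ coordinates producing exactly the stated failure probability, then $|(r_j^{\otimes p})^\top x_i^{\otimes p}| = |(r_j^\top x_i)^p| \le (\|r_j\|_2\|x_i\|_2)^p$ and the factor $\sqrt{k}$ from stacking the $k$ entries. No gaps; nothing further is needed.
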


\begin{proof}
    First we have, for a standard normal random variable $N\sim\cN(0,1)$, we have
    \[\Pr\{|N| \ge x\} \le \frac{\sqrt{2}}{\sqrt{\pi}x}e^{-\frac{x^2}{2}}.\]
    Then, apply the union bound, we have
    \begin{align*}
        \Pr\left\{\exists i\in [d], j\in [k], |(r_j)_i| \ge 2\sqrt{\ln (kd\delta^{-1/2})}\right\} \le& kd\frac{\sqrt{2}}{\sqrt{\pi}2\sqrt{\ln (kd\delta^{-1/2})}}\exp{(-2\ln(kd\delta^{-1/2}))}\\ 
        =& \frac{\delta}{\sqrt{2\pi\ln(kd\delta^{-1/2})}kd}.
    \end{align*}
    If for all $i\in [d], j\in [k], |(r_j)_i| < 2\sqrt{\ln (kd)}$, then for any $x$ such that $||x|| \le B$ and any $k_0\in [k]$, we have
    \begin{align*}
        |\left((r_{k_0})^{\otimes p}\right)^Tx^{\otimes p}| =&|(r_{k_0}^Tx)^p|\\
        \le& (||r_{k_0}|| \cdot ||x||)^p\\
        \le& (2B\sqrt{d\ln (kd\delta^{-1/2})})^p.
    \end{align*}
    Then we have
    \[||Qx^{\otimes p}||_2 \le \sqrt{k}\left(2B\sqrt{d\ln (kd\delta^{-1/2})}\right)^p.\]
\end{proof}

Then, combining the previous lemmas and Theorem \ref{thm:main-theorem-twolayer}, we have the following Theorem \ref{thm:deterministic}.

\thmdeterministic*

\begin{proof}
    From the above lemmas, we know that with respective probability $1-o(1)\delta$, after the random featuring, the following happens:
    \begin{enumerate}
        \item There exists constant $G_0$ that $\sigma_{\min}((Q\otimes Q)X)\geq\frac{\delta^{\left(\frac{2}{(\alpha-1)D_d^p}\right)}\sigma_{\min}(X)}{\left(p^p\sqrt{p!}\right)^{\frac{2\alpha}{\alpha-1}}\left[k(G_0p\ln pD_d^p)^p)\right]^{\frac{1}{(\alpha-1)}}}$
        \item $\Vert Qx_j^{\otimes p}\Vert_2 \le \sqrt{k}(2B\sqrt{d\ln (kd\delta^{-1/2})})^p$ for all $j\in[n]$.
    \end{enumerate}
    Thereby considering the PGD algorithm on $W$, since the random featuring outputs $[(r_i^T x_j)^p]=Q[x_j^{\otimes p}]$ has $[(r_i^T x_j)^2p]=(Q\otimes Q)X$, from Theorem \ref{thm:main-theorem-twolayer}, given the singular value condition and norm condition above we obtain the result in the theorem.
\end{proof}

\end{document}